\documentclass{article}
\usepackage[numbers]{natbib}  
\newif\ifarxiv
\arxivtrue 

\ifarxiv 
    \usepackage[preprint]{neurips_2026} 
\else 
    \usepackage{neurips_2026} 
\fi

\usepackage[utf8]{inputenc} 
\usepackage[T1]{fontenc}    
\usepackage{hyperref}       
\usepackage{url}            
\usepackage{booktabs}       
\usepackage{amsfonts}       
\usepackage{nicefrac}       
\usepackage{microtype}      
\usepackage{xcolor}         

\usepackage{graphicx} 

\usepackage{algorithm}
\usepackage{algorithm}
\usepackage{algpseudocode}

\usepackage{amsmath}
\usepackage{multirow}     
\usepackage{colortbl}     

\usepackage[inline]{enumitem}

\usepackage{graphicx}
\usepackage{arydshln}
\usepackage{multirow}
\usepackage{subcaption}
\usepackage{booktabs} 
\usepackage{amsmath}
\usepackage{amssymb}
\usepackage{mathtools}
\usepackage{amsthm}

\usepackage{stmaryrd}
\usepackage{wrapfig}
\usepackage{bbold}
\usepackage{comment}
\usepackage{tabularx}
\usepackage{mathabx}

\usepackage{minitoc}

\definecolor{GNN}{HTML}{b85450}
\definecolor{SNN}{HTML}{82b366}
\definecolor{HNN}{HTML}{d6b656}
\definecolor{CWN}{HTML}{6c8ebf}
\definecolor{ours}{HTML}{FF77FF}

\theoremstyle{plain}
\newtheorem{thm}{Theorem}

\newtheorem{prop}{Proposition}
\newtheorem{cor}{Corollary}

\newtheorem{defi}{Definition}
\newtheorem{rmq}{Remark}

\newcommand{\ie}{\textit{i}.\textit{e}., }
\newcommand{\eg}{\textit{e}.\textit{g}., }

\title{Scaling Higher-Order Graph Learning with Maximal Clique Complexes}

\author{%
  Antoine~Vialle \\
  LTCI, Télécom Paris\\
  Institut Polytechnique de Paris\\ 91120 Palaiseau, France \\
  \texttt{antoine.vialle@telecom-paris.fr} \\
  \And Aref~Einizade \\
  SAMOVAR, Télécom SudParis \\
   Institut Polytechnique de Paris\\ 91120 Palaiseau, France \\
  \texttt{aref.einizade@telecom-sudparis.eu} \\
    \And Fragkiskos~D.~Malliaros \\  CentraleSupélec, Inria \\
Université Paris-Saclay \\ 91190 Gif-sur-Yvette, France \\ \texttt{fragkiskos.malliaros@centralesupelec.fr} \\
  \And Jhony~H.~Giraldo \\LTCI, Télécom Paris \\ Institut Polytechnique de Paris \\ 91120 Palaiseau, France \\ \texttt{jhony.giraldo@telecom-paris.fr}
}

\begin{document}

\maketitle

\begin{abstract}


Graph neural networks (GNNs) are limited to modeling pairwise interactions, while higher-order models based on cell complexes achieve greater expressivity but often suffer from poor scalability. 
We introduce simplified and factored cellular Weisfeiler–Leman tests (sCWL and fCWL), which preserve the expressivity of the CWL test while improving computational efficiency.
We further introduce the maximal clique complex, enabling scalable CWNs with reduced time and memory complexity while retaining strong empirical performance.
To avoid explicit clique enumeration, we propose CliqueWalk, a biased random walk that samples maximal cliques and scales linearly with graph size. 
These contributions yield a scalable topological learning framework for higher-order graph representation.
\end{abstract}

\section{Introduction}
Graphs provide a natural way to represent interactions between entities, and graph neural networks (GNNs) have become the standard approach for learning on such data~\citep{kipf2017semisupervisedclassificationgraphconvolutional, defferrard2017convolutionalneuralnetworksgraphs}. 
GNNs have achieved strong performance in diverse domains, including social network analysis~\citep{fan2019graphneuralnetworkssocial}, molecular property prediction~\citep{duvenaud2015convolutionalnetworksgraphslearning}, and computer vision~\citep{Krzywda_2022}. 
However, conventional GNNs are limited to modeling pairwise interactions between nodes, which constrains their ability to capture complex multi-way relationships \citep{battiloro2024generalized}.
To address this limitation, recent work explores higher-order structures such as simplicial complexes~\citep{ebli2020simplicialneuralnetworks, bodnar2021weisfeilertopo,einizade2025continuous}, cell complexes~\citep{hajij2021cellcomplexneuralnetworks, bodnar2021weisfeilercell}, and hypergraphs~\citep{feng2019hypergraphneuralnetworks}.

Hypergraphs generalize graphs by allowing edges, called hyperedges, to connect more than two nodes \citep{feng2019hypergraphneuralnetworks}. 
A hyperedge thus represents a group interaction, for example, a set of coauthors of the same paper in a co-authorship network~\citep{9782536}.
Beyond hypergraphs, cell complexes provide a general combinatorial framework that organizes higher-order structures~\citep{MR1867354}.
A cell complex contains cells of different dimensions: nodes (0-cells), edges (1-cells), triangles (2-cells), and so on \citep{bodnar2021weisfeilercell}.
Simplicial complexes are a special case of cell complexes in which all subsets of a cell are also included, ensuring closure under subset operations \citep{einizade2025continuous}.
In this setting, entities interact whenever they differ by the addition or deletion of a single node.

Several approaches lift graphs into higher-order structures, allowing the use of simplicial and cell complexes for learning tasks~\citep{bodnar2021weisfeilertopo, papillon2024architecturestopologicaldeeplearning, papamarkou2024position}.
One of these strategies is the clique lifting, where simplicial or cell complexes are built by including all cliques of the graph up to a fixed size (\eg edges or triangles) \citep{bodnar2021weisfeilercell}.
While effective for capturing higher-order information, these methods are often computationally expensive and require significant memory resources.
Furthermore, the clique problem is well-known to require algorithms with exponential runtime in the worst case \citep{cormen01introduction}.




In this paper, we address two central questions:
($i$) \textit{how to simplify cellular Weisfeiler networks (CWNs) without sacrificing expressivity}, and
($ii$) \textit{how to exploit clique-based higher-order structure in a scalable manner}.

\begin{wrapfigure}{r}{0.45\textwidth}
    \includegraphics[width=0.45\textwidth]{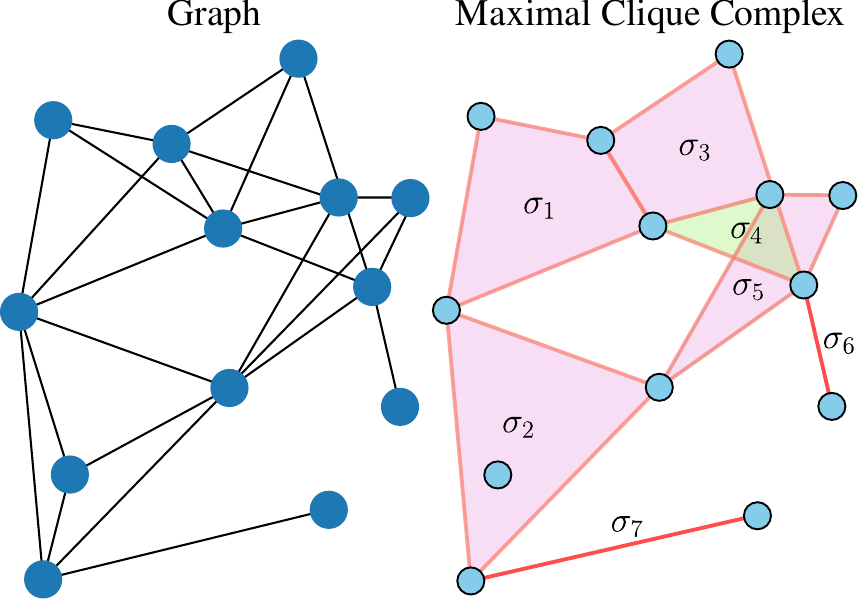}
    \caption{Maximal clique complex.}
    \vspace{-12pt}
\label{fig:maximum_clique_complex}
\end{wrapfigure}

To address the first question, we introduce the simplified and factored cellular Weisfeiler–Leman tests (sCWL and fCWL), together with their corresponding neural architectures (sCWNs and fCWNs). 
We show that these variants preserve the full expressive power of the original CWL test of \cite{bodnar2021weisfeilercell}, while exhibiting improved scalability properties.
These simplified architectures enable the use of more efficient higher-order constructions in specific graph regimes.
In particular, in graphs with few large, weakly overlapping cliques, clique-based aggregation is more scalable than classical message passing, reducing quadratic intra-clique message passing to linear complexity via a node–clique–node scheme.


Motivated by this observation and to answer the second question, we propose the \textit{maximal clique complex}, a simplified cell complex that encodes only the maximal cliques of the graph (Figure~\ref{fig:maximum_clique_complex}). 
Since enumerating all maximal cliques is computationally expensive and becomes infeasible for large graphs, we further introduce CliqueWalk, a biased random-walk procedure that efficiently samples maximal cliques and scales linearly with the number of edges.
The sampled cliques define the higher-order cells used in our architectures, enabling scalable models with competitive performance on node and graph classification benchmarks.





The main contributions of this paper are:
\begin{itemize}[leftmargin=0.5cm]
    \itemsep0em
    \item We introduce the sCWL and fCWL tests and prove that they are as expressive as the regular CWL test, while offering better scaling properties.
    \item We present the maximal clique complex, a simplified higher-order structure encoding maximal cliques, and show that the resulting sCWN and fCWN are more memory- and computationally efficient than standard CWNs, while preserving expressivity.
    \item Since enumerating maximal cliques can take exponential time, we propose CliqueWalk, a biased random walk that efficiently samples maximal cliques and scales linearly with the number of edges, enabling clique-based methods on large graphs.
    \item We show competitive performance on graph and node classification benchmarks, matching existing GNNs while achieving gains in scalability and efficiency {compared to expressive baselines.}
\end{itemize}

\section{Related Work}


The expressive power of GNNs has been extensively studied, particularly through their ability to distinguish non-isomorphic graphs \citep{xu2018how, morris2021weisfeilerlemanneuralhigherorder}. It is well established that message-passing GNNs with injective aggregation are as powerful as the 1-Weisfeiler–Lehman (1-WL) test \citep{xu2018how}, and architectures such as the graph isomorphism network (GIN) are explicitly designed to match this expressivity.


However, these models remain limited in capturing higher-order interactions, as they rely on local aggregation over pairwise edges \citep{Bouritsas_2023, Feng2022HowPA}. To address this, several approaches have been proposed. Some extend the Weisfeiler–Lehman hierarchy to higher orders, yielding k-WL-based GNNs with increased expressive power but much higher computational complexity \citep{morris2021weisfeilerlemanneuralhigherorder, Maron2019}. Others enrich the graph structure by modeling higher-order relations directly. For instance, simplicial neural networks \citep{bodnar2021weisfeilertopo} operate on simplicial complexes, enabling message passing over higher-dimensional simplices and expressivity beyond 1-WL. Similarly, CWN \citep{bodnar2021weisfeilercell} generalizes this idea to arbitrary cell complexes, with message passing through boundary, co-boundary, and adjacency relations. These methods are formalized by the Cell-WL (CWL) test, which is strictly more expressive than 1-WL in specific settings and has shown strong empirical performance, particularly in molecular graph learning \citep{bodnar2021weisfeilercell, giusti2023cinenhancingtopologicalmessage}.


Despite these theoretical advances, simplicial and cell complex models suffer from limited scalability. Constructing higher-order complexes often requires enumerating large numbers of cliques, leading to prohibitive memory and time costs and preventing their efficient application to large-scale graphs.
In contrast, we introduce the maximal clique complex as a simplified higher-order structure that preserves CWL-level expressivity while enabling efficient clique-based neural architectures. 
Combined with our CliqueWalk sampling strategy, this yields a scalable approach to higher-order graph learning with strong theoretical guarantees and competitive performance.

\section{Preliminaries}

\textbf{Notation.}
Calligraphic letters denote sets, with $\lvert \mathcal{X} \rvert$ their cardinality. Lowercase boldface letters (e.g., $\mathbf{x}$) denote vectors.
$\bigoplus$ denotes a mapping from sets of vectors to a vector, \eg aggregation functions.


\begin{defi}[{Regular cell complex~\citep{Hansen_2019, bodnar2021weisfeilercell}}]
A regular cell complex is a topological space $X$ that can be divided into a collection of subspaces $\{X_\sigma\}_{\sigma \in \mathcal P_X}$, called \textbf{cells}, where $\mathcal P_X$ is the set of cells induced by the topological space $X$.  
These cells satisfy the following properties:
\begin{itemize}[leftmargin=0.5cm]
    \itemsep0em
    \item Every $x \in {X}$ has an open neighborhood that intersects only a finite number of cells.
    \item For any two cells $ X_\sigma$ and $ X_\tau$, $X_\tau \cap \overline{X_\sigma}\ne \emptyset$, if and only if $X_\tau$ is contained in $\overline{X_\sigma}$.
    \item Each cell is topologically equivalent (homeomorphic) to $\mathbb{R}^n$ for some dimension $n$.
    \item For each $\sigma \in \mathcal P_X$, there exists a homeomorphism $\varphi$ from a closed ball in $\mathbb{R}^{n_\sigma}$ onto $\overline{X_\sigma}$, where the restriction of $\varphi$ to the interior of the ball gives a homeomorphism onto the interior of $X_\sigma$.
\end{itemize}
\end{defi}
A graph $G=(\mathcal{V},\mathcal{E})$ can be interpreted as a special case of a cell complex, where vertices $\mathcal{V}$ and edges $\mathcal{E}$ correspond to $0$-cells and $1$-cells, respectively.

\begin{defi}[Cell complex adjacencies \citep{bodnar2021weisfeilercell}]
\label{def:adjacencies}
Let ${X}$ be a cell complex and $\sigma \in \mathcal{P}_X$ a cell.  
We define the following adjacency relations:
\begin{itemize}[leftmargin=0.5cm]
    \itemsep0em
    \item {Boundary cells $\mathcal{B}(\sigma)$:} cells that make up the boundary of $\sigma$ (e.g., the vertices of an edge).  
    \item Co-boundary cells $\mathcal{C}(\sigma)$: cells having $\sigma$ in their boundary (e.g., edges incident to a vertex).
    \item {Lower adjacent cells $\mathcal{N}_\downarrow(\sigma)$:} cells that share a boundary (e.g., edges meeting at a common node).
    \item Upper adjacent cells $\mathcal{N}_\uparrow(\sigma)$: cells that share a co-boundary (e.g., two nodes linked by an edge).
\end{itemize}
\end{defi}

\textbf{WL test.} 
A key challenge in graph theory is the graph isomorphism problem, which concerns deciding whether two graphs have the same structure.
Finding exact solutions is often computationally demanding, so faster approximate techniques, such as graph hashing, are commonly employed.
A classical and widely used technique for graph isomorphism test is the WL test \citep{weisfeiler1968reduction}.
The WL test provides an efficient heuristic for the graph isomorphism problem.
The formal definition of the WL test is provided in Appendix \ref{app:WL_Test}.
Beyond graphs, it can be naturally extended to regular cell complexes, capturing richer combinatorial structures.

\begin{defi}[CWL test \citep{bodnar2021weisfeilercell}]
\label{defi:cwl}
Let $X$ be a regular cell complex.  
The CWL test is defined as:
\begin{itemize}[leftmargin=0.5cm]
    \itemsep0em
    \item {Initialization:} All cells $\sigma \in \mathcal P_X$ are assigned the same initial color.
    \item {Color refinement:} At iteration $t+1$, the color of each cell $\sigma$ is updated according to $c_\sigma^{t+1} = \text{HASH}(c_\sigma^t,\, c_{\mathcal{B}(\sigma)}^t,\, c_{\mathcal{C}(\sigma)}^t,\, c_{\mathcal{N}_\downarrow(\sigma)}^t,\, c_{\mathcal{N}_\uparrow(\sigma)}^t)$, where $\text{HASH}$ is an injective function combining the color of $\sigma$ with those of its boundary, co-boundary, and adjacent cells.
    \item {Termination:} The process is repeated until the coloring stabilizes.  
    Two cell complexes are considered non-isomorphic if their color histograms differ.
\end{itemize}
\end{defi}


The CWL test is invariant under cell-complex isomorphisms. Given a graph-to-cell-complex map that preserves isomorphisms, CWL can be used to test graph isomorphism, see Figure~\ref{fig:CWL_ex} for an example. This corresponds to the cellular lifting map of \cite{bodnar2021weisfeilercell} (their Definition~8). Similarly, CWL relates to the WL test under skeleton-preserving lifting maps:
\begin{defi}[Skeleton preserving lifting \citep{bodnar2021weisfeilercell}]
    A lifting map $f(\cdot)$ is skeleton-preserving if for any graph $G = (\mathcal V, \mathcal E)$: ($i$) $f(G)$ contains $\mathcal V$ and $\mathcal E$ as cells, ($ii$) the cell complex $f(G)$ restricted to node and edge sets is isomorphic to $G$, \ie the incidence matrices of $G$ and $f(G)$ are equal up to permutation.
    
    
\end{defi}
The CWL scheme is more expressive than the standard WL test for skeleton-preserving lifting maps~\citep{bodnar2021weisfeilercell}. 
We next introduce a new cell test that also achieves better expressivity than the WL test without requiring such lifting maps, along with the structures studied in this work.
All proofs are provided in Appendix~\ref{app:proofs}.



\section{{Scaling Cell Complex Models and Maximal Cliques}}


\subsection{Cell Complex Expressivity Theory}


\citet{bodnar2021weisfeilercell} show we can simplify the CWL test while retaining the same expressivity.
\begin{thm}[\citet{bodnar2021weisfeilercell}] 
\label{thm:expressivity_cwl}
The CWL update rule restricted to {boundary} and {upper adjacency} messages is equivalent in expressive power to the full CWL update rule.
\end{thm}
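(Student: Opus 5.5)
To prove Theorem~\ref{thm:expressivity_cwl}, I would show that the restricted test and the full CWL test of Definition~\ref{defi:cwl}, both started from the same uniform initial coloring, induce colorings that refine each other at every stage, possibly after a constant re-indexing of iterations. Write $a^t$ for the coloring produced by the restricted rule $a^{t+1}_\sigma=\text{HASH}(a^t_\sigma, a^t_{\mathcal B(\sigma)}, a^t_{\mathcal N_\uparrow(\sigma)})$, and $c^t$ for the full CWL coloring.

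The first inclusion is immediate: $c^t$ refines $a^t$ for every $t$. I would prove this by induction on $t$. Since HASH is injective and the full rule receives strictly more multisets as input, $c^{t+1}_\sigma=c^{t+1}_\tau$ forces equality of the boundary and upper-adjacency color multisets at time $t$. By induction these determine the corresponding $a^t$ multisets, and hence $a^{t+1}_\sigma=a^{t+1}_\tau$.

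The substantive direction is to show that $a$ refines $c$, or more precisely that $a^{2t}$ refines $c^t$. This requires showing that the co-boundary and lower-adjacency multisets can be recovered from information the restricted test has. The key observations, for a regular cell complex, are as follows.
\begin{itemize}
\item \emph{Lower adjacency from boundary.} Two cells $\sigma,\tau$ are lower adjacent via a common boundary cell $\delta$. The multiset $c_{\mathcal N_\downarrow(\sigma)}$, taken with the shared cell as CWN does, consists of pairs $(c_\tau, c_\delta)$.
\item \emph{Co-boundary via a grading.} I would first establish that after one restricted iteration, cell colors encode dimension. This can be shown by induction on dimension: 0-cells have empty boundary, and higher cells have boundaries whose colors encode lower dimensions.
\item \emph{Upper adjacency encodes the co-boundary.} Each upper-adjacent pair $(\tau,\delta)$ with $\delta\in\mathcal C(\sigma)$ carries the co-boundary cell's color. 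Since co-boundary cells of $\sigma$ all have $\sigma$ in their boundary, the multiset over $\mathcal N_\uparrow(\sigma)$ of pairs (neighbor, shared coface) determines the multiset of colors of $\mathcal C(\sigma)$. Concretely, each $\delta\in\mathcal C(\sigma)$ appears with multiplicity $|\mathcal B(\delta)|-1$, and $|\mathcal B(\delta)|$ is read off $\delta$'s refined color, so the multiplicities can be divided out. One degenerate case needs separate handling: a cofaces $\delta$ with $|\mathcal B(\delta)|=1$ contributes no upper adjacency.
\item \emph{Lower adjacency encoded by two steps.} The lower-adjacency information of $\sigma$ is recovered from the boundary colors of $\sigma$ at the previous step. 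Each $\delta\in\mathcal B(\sigma)$ has a restricted color that (via its own upper adjacencies) encodes the multiset of its cofaces $\tau$ with colors, which includes $\sigma$ itself. Removing one copy of $\sigma$'s color yields exactly the lower neighbors through $\delta$.
\end{itemize}

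The induction hypothesis is then that $a^{2t}$ refines $c^t$. One restricted step propagates information through boundaries and upper adjacencies, and a second step is needed to pull the coface information back down. This gives $a^{2t+2}_\sigma=a^{2t+2}_\tau \Rightarrow c^{t+1}_\sigma=c^{t+1}_\tau$. At the stable coloring, the two partitions then coincide, so the color histograms distinguish the same pairs of complexes.

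The main obstacle will be the co-boundary recovery, especially the degenerate cases of cofaces with a single boundary cell and multiplicity bookkeeping. If the multiplicity argument becomes awkward, I would instead appeal to the fact that CWN messages in $\mathcal N_\uparrow$ are indexed by the shared coface and argue directly on that indexed multiset.
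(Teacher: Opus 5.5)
Your overall plan is the standard one: the trivial inclusion, then an interleaved induction. In that induction $c_{\mathcal C}(\sigma)$ is read off the coface coordinates of the upper-adjacency multiset after dividing out the multiplicity $|\mathcal B(\delta)|-1$, and $c_{\mathcal N_\downarrow}(\sigma)$ is read off the faces' own upper adjacencies after deleting one copy of $\sigma$'s colour. The paper does not reprove this result; it imports it from Bodnar et al., and its proof of Theorem~\ref{thm:expressivity_scwl} runs the same kind of $2t$-interleaving in the other direction.

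\textbf{The gap.} The induction hypothesis you fix, that $a^{2t}$ refines $c^t$, is false at $t=1$, and the step that fails is your lower-adjacency recovery. In the step $t\to t+1$, a face $\gamma\in\mathcal B(\sigma)$ is seen only through $a^{2t+1}_\gamma$, which encodes $\gamma$'s upper-adjacency multiset in the colours $a^{2t}$. Turning that into the multiset of cofaces of $\gamma$ requires $a^{2t}_\nu$ to determine $|\mathcal B(\nu)|$. This fails at $t=0$ because $a^0$ is constant. Unlike the simplicial case, $|\mathcal B(\nu)|$ is not fixed by dimension.

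\textbf{A counterexample.} Since $a^0$ is constant, $a^1_\rho$ records exactly $(|\mathcal B(\rho)|,U(\rho))$ with $U(\rho)=\sum_{\delta\in\mathcal C(\rho)}(|\mathcal B(\delta)|-1)$.
\begin{itemize}
\item Let $X$ be a $6$-ring and a $7$-ring glued along one edge, and $X'$ a $6$-ring, a $3$-ring and a $5$-ring all glued along one common edge. Let $\sigma,\sigma'$ be the two $6$-rings.
\item Both have $a^1=(6,0)$ and no cofaces.
\item The shared edge has $a^1=(2,5+6)=(2,5+2+4)=(2,11)$ in both complexes, and the other five edges have $a^1=(2,5)$. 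Hence $a^2_\sigma=a^2_{\sigma'}$.
\item Yet $\sigma$ has one lower-adjacent pair and $\sigma'$ has two, so $c^1_\sigma\neq c^1_{\sigma'}$.
\end{itemize}

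\textbf{The repair.} Shift the index and prove that $a^{2t+1}$ refines $c^t$. The base case is trivial because $c^0$ is constant. In the step, start from $a^{2t+3}_\sigma=a^{2t+3}_\tau$.
\begin{itemize}
\item You divide the coface coordinates of $\sigma$'s upper-adjacency multiset in colours $a^{2t+2}$.
\item You divide the faces' coface multisets in colours $a^{2t+1}$.
\item Both sets of colours determine $|\mathcal B(\cdot)|$, so both divisions are legitimate.
\item Removing one copy of $a^{2t+1}_\sigma=a^{2t+1}_\tau$ and applying the hypothesis to the pairs $(a^{2t+1}_\nu,a^{2t+2}_\gamma)$ then yields $c^t_{\mathcal N_\downarrow}$ exactly as you planned.
\end{itemize}
The offset is harmless for comparing histograms.

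\textbf{Smaller points.}
\begin{itemize}
\item The fact you actually need, and which holds, is that $a^s_\delta$ determines $|\mathcal B(\delta)|$ for every $s\ge 1$. Your claim that one iteration encodes \emph{dimension} is false: a square $2$-cell and a tetrahedral $3$-cell, neither having cofaces, both get $a^1=(4,0)$. Your induction on dimension would also cost one iteration per dimension. The grading is never used, so drop it.
\item The degenerate case $|\mathcal B(\delta)|=1$ is vacuous. In a regular complex the boundary of an $n$-cell with $n\ge 1$ is a regular CW structure on $S^{n-1}$, which has at least two $(n-1)$-cells.
\item Your fallback of arguing on the coface-indexed multiset would not avoid the division. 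The test only sees a multiset of colour pairs, so multiplicities must be inferred from colours.
\end{itemize}
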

We also demonstrate that a different simplified version retains the same expressivity.
\begin{thm} \label{thm:expressivity_scwl}
The CWL update rule restricted to boundary and co-boundary messages, called the simplified CWL (sCWL) test, is equivalent in expressive power to the full CWL update rule.
\end{thm}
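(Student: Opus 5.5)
We prove the two directions separately, comparing colorings iteration by iteration. One direction is immediate: the full CWL update contains the boundary and co-boundary messages among its arguments. Since HASH is injective, any two cells (in the same or in different complexes) with equal full-CWL colors at time $t$ also have equal sCWL colors at time $t$. A straightforward induction on $t$ establishes this, so full CWL refines sCWL and is at least as expressive.

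For the converse we do not compare the two tests step for step. Instead we show that sCWL run for $2t$ iterations refines full CWL run for $t$ iterations. This suffices because both tests are run to stabilization and we only compare the distinguishing power of their stable colorings. The key observation is that the missing neighborhoods can be reconstructed from two boundary/co-boundary hops:
\[
\mathcal{N}_\downarrow(\sigma)=\{\tau\neq\sigma:\ \exists \delta\in\mathcal{B}(\sigma)\cap\mathcal{B}(\tau)\}, \qquad
\mathcal{N}_\uparrow(\sigma)=\{\tau\neq\sigma:\ \exists \delta\in\mathcal{C}(\sigma)\cap\mathcal{C}(\tau)\}.
\]
We then prove by induction on $t$ that $c^{2t}_\sigma$ (sCWL) $= c^{2t}_{\sigma'}$ implies $\tilde c^{\,t}_\sigma = \tilde c^{\,t}_{\sigma'}$ (full CWL). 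In the inductive step, equality of sCWL colors at $2t+2$ gives a bijection between boundaries of $\sigma$ and $\sigma'$ that preserves $2t{+}1$ colors. Each of those colors in turn encodes the multiset of co-boundary colors at time $2t$. This yields the multiset, over $\delta\in\mathcal{B}(\sigma)$, of multisets of $2t$-colors of cells $\tau\in\mathcal{C}(\delta)$, and by the induction hypothesis the corresponding full-CWL colors at time $t$. The co-boundary case for $\mathcal{N}_\uparrow$ is symmetric.

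The main obstacle is converting this ``multiset over $\delta$ of multisets over $\tau$'' into the plain multiset $\{\!\{\tilde c^{\,t}_\tau : \tau\in\mathcal{N}_\downarrow(\sigma)\}\!\}$ required by CWL. The difficulty is that $\sigma$ itself appears in every $\mathcal{C}(\delta)$, and a single $\tau$ may share several boundary cells with $\sigma$. We would handle this in one of two ways. The first is to use that CWL in \citet{bodnar2021weisfeilercell} is usually defined with messages indexed by the shared cell, i.e.\ pairs $(c_\tau, c_{\delta})$. The two-hop information gives exactly this multiset of pairs, after removing one copy of $\sigma$'s own color per $\delta$, which is possible because $\sigma$'s color is known. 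The second is to note that in regular cell complexes arising from graph liftings two distinct cells share at most one boundary cell of a given dimension. If the counting issue persisted, we would fall back on the standard argument used for Theorem~\ref{thm:expressivity_cwl} in \citet{bodnar2021weisfeilercell}, which shows that lower-adjacency messages are recoverable from boundary and upper information, and mirror it with co-boundaries in place of upper adjacencies.

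Finally, the stable coloring of sCWL refines every finite-time full-CWL coloring, hence the stable full-CWL coloring. Equal sCWL histograms therefore imply equal CWL histograms, and the two tests have equal expressive power.
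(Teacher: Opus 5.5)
Your argument is correct once you commit to the first of your options for the counting issue, and its core matches the paper's. The easy direction comes from the CWL update containing the sCWL arguments, and the hard direction is an induction showing that $2t$ rounds of sCWL refine $t$ rounds of CWL, with the missing adjacency rebuilt from two incidence hops.

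The difference is what you compare against. The paper imports Theorem~\ref{thm:expressivity_cwl} from \citet{bodnar2021weisfeilercell}, so it only has to match the boundary-plus-upper variant and only reconstructs the upper message (co-boundary, then boundary). You match the full update and reconstruct both $\mathcal{N}_\downarrow$ and $\mathcal{N}_\uparrow$ symmetrically. This makes your proof self-contained, at the cost of a second, mirror-image reconstruction.

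You are also more careful at the one delicate step. The paper identifies $\{\!\{(b^{2t}_\gamma,b^{2t}_\tau):\tau\in\mathcal{C}(\sigma),\gamma\in\mathcal{B}(\tau)\}\!\}$ directly with the upper message, although this multiset contains $\gamma=\sigma$ once for every $\tau$. You subtract one copy of $\sigma$'s own pair per shared cell, which is legitimate because $c^{2t}_\sigma$ and the incident multiset already agree. That subtraction is what makes the identification exact.

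Drop your second option, because it is false precisely where the paper applies the theorem. In $K_5$ minus an edge, the two maximal cliques share three nodes, and any two of those nodes lie in both cliques.

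The shared-cell indexing is also not a mere convenience. On the maximal clique complex $\mathcal{N}_\uparrow(v)=\mathcal{N}(v)$, so with plain neighbour multisets CWL would refine 1-WL by a one-line induction. Read that way, the theorem would prove the conjecture that the paper leaves open in Appendix~\ref{app:conj}.

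The statement must therefore be read with adjacency messages counted once per shared cell and carrying that cell's colour. That is how you read it, and it is what the indexing over $\delta\in\mathcal{C}(\sigma)\cap\mathcal{C}(\tau)$ in Definition~\ref{def:CWN} indicates. With that fixed, your third fallback of mirroring Bodnar et al.'s argument is unnecessary.
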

This restricted scheme is useful in practice, as it leads to more computationally efficient models.

We also introduce a new test on cell complexes that, while keeping the node structure, enables at least the same expressivity as the CWL, sCWL, and WL tests.
\begin{defi}[Factored CWL (fCWL) test]
    Let $(\mathcal G, \mathcal X)$ be a graph and a cell complex constructed from a cellular lifting map that preserves the node set. The fCWL scheme is defined as follows:
\begin{itemize}[leftmargin=0.5cm]
    \itemsep0em
    \item {Initialization:} All cells are assigned the same initial color.
    \item {Color refinement:} At iteration $t+1$, the color of each non-node cell $\sigma$ is updated according to $c_\sigma^{t+1} = \text{HASH}(c_\sigma^t,\, c_{\mathcal{B}(\sigma)}^t, c_{\mathcal{C}(\sigma)}^t)$.
    The color of each node $i$ is updated according to $c_i^{t+1} = \text{HASH}(c_i^t, c_{\mathcal{C}(i)}^t, \,c_{\mathcal N(i)}^t)$.
    \item {Termination:} The process is repeated until the coloring stabilizes.  
    Two cell complexes are considered non-isomorphic if their color histograms differ.
\end{itemize}
\end{defi}

\begin{thm} \label{thm:expressivity_fcwl}
    fCWL is at least as expressive as WL and CWL.
\end{thm}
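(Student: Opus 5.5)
The plan is to show that the fCWL coloring refines both the WL coloring of $\mathcal G$ and the sCWL coloring of $\mathcal X$. Expressivity transfers through refinement: if two inputs receive equal fCWL histograms, they must also receive equal WL and sCWL histograms. Theorem~\ref{thm:expressivity_scwl} then gives CWL, since sCWL is as expressive as CWL.

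For all colorings we use the same index $t$. Write $c^t$ for the fCWL colors, $w^t$ for the WL colors on the nodes of $\mathcal G$, and $s^t$ for the sCWL colors on the cells of $\mathcal X$. The claim to establish is: for every $t$ and all cells $\sigma,\tau$ (possibly lying in two different pairs $(\mathcal G,\mathcal X)$ and $(\mathcal G',\mathcal X')$), $c^t_\sigma = c^t_\tau$ implies $s^t_\sigma = s^t_\tau$. For nodes, it also implies $w^t_\sigma = w^t_\tau$.

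The argument is induction on $t$. At $t=0$ all colors are equal, so the claim holds. For the inductive step, take the node case first. If $c^{t+1}_i=c^{t+1}_j$, injectivity of HASH gives three things:
\begin{itemize}
\item $c^t_i=c^t_j$;
\item equality of the multisets $\{\!\{c^t_k : k\in\mathcal N(i)\}\!\}$ and $\{\!\{c^t_k : k\in\mathcal N(j)\}\!\}$;
\item equality of the multisets over the co-boundaries $\mathcal C(i)$ and $\mathcal C(j)$.
\end{itemize}
By the induction hypothesis applied elementwise, these multiset equalities pass to $w^t$ and $s^t$. Equality of $w^t$-neighbor multisets, together with $w^t_i=w^t_j$, gives $w^{t+1}_i=w^{t+1}_j$.

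For sCWL at a node, I need boundary and co-boundary messages. Nodes are $0$-cells, so $\mathcal B(i)=\emptyset$, and the co-boundary multiset is matched directly, which yields $s^{t+1}_i=s^{t+1}_j$. For a non-node cell, the fCWL update coincides exactly with the sCWL update, namely self, boundary and co-boundary. The same elementwise induction therefore gives $s^{t+1}_\sigma=s^{t+1}_\tau$.

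Two details need care. First, a coarser hash applied to refined inputs stays consistent: equal multisets of $c^t$ map to equal multisets of $s^t$ because the map $c^t\mapsto s^t$ is well defined by the induction hypothesis. Second, the node sets of $\mathcal G$ and $\mathcal X$ are identified, which is exactly the assumption that the lifting preserves the node set. Once this is in place, the stable fCWL histograms determine the stable WL and sCWL histograms, and hence the CWL ones by Theorem~\ref{thm:expressivity_scwl}.

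The main obstacle I expect is the WL part when $\mathcal X$ is not skeleton-preserving. For example, the maximal clique complex may contain no edges as $1$-cells. In that case sCWL/CWL alone might not recover graph adjacency, and this is precisely why fCWL adds the explicit $\mathcal N(i)$ term from $\mathcal G$. I will check that the induction never needs edges of $\mathcal X$ to simulate WL: the WL step uses only the $\mathcal N(i)$ message, so the argument is independent of the lifting's structure.

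A second subtlety is termination and comparison across two inputs. I will handle it by running all tests jointly on the disjoint union, or equivalently with shared hash functions, so that color equalities across the two inputs are meaningful. I will also note that because fCWL refines the other two colorings at every $t$, its stable partition is at least as fine as theirs.
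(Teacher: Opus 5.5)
Your proposal is correct and is essentially the paper's proof: an induction at a common index $t$ showing that the fCWL coloring refines the sCWL coloring on cells and the WL coloring on nodes (the WL step uses only the explicit $\mathcal N(i)$ term, so no skeleton-preserving assumption is needed), followed by Theorem~\ref{thm:expressivity_scwl} to pass from sCWL to CWL. The differences are minor: you merge the paper's two separate inductions into one and treat the node case of the sCWL refinement explicitly (empty boundary, matched co-boundary), which the paper leaves implicit; like the paper, you also tacitly need fCWL to keep node and non-node colors disjoint (e.g., by tagging $\text{HASH}$ with the cell type), because your induction never handles mixed node/cell pairs and the WL comparison restricts the fCWL histogram to the nodes.
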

{We use the ideas from sCWL and fCWL tests to introduce cellular neural networks with the same guarantees and better scaling properties than CWN \citep{bodnar2021weisfeilercell}.}


\subsection{Neural Network Models}


We describe neural network architectures based on the CWL framework that perform message passing along the cell hierarchy via boundary, co-boundary, and adjacency relations.

\begin{defi}[CWNs]
\label{def:CWN}
    Following \citep[Section 4]{bodnar2021weisfeilercell}, CWNs aggregate messages along both upper adjacency and boundary relations (Theorem \ref{thm:expressivity_cwl}).
    For a cell $\sigma$, the updates are defined as:
    \begin{equation}
        \resizebox{\hsize}{!}{$\displaystyle
        \mathbf{m}_\uparrow(\sigma) = \bigoplus_{\substack{\tau \in \mathcal N_\uparrow(\sigma)\\ \delta \in \mathcal C(\sigma) \bigcap \mathcal C(\tau)}} 
        M_\uparrow\bigl( \mathbf{x}_\sigma, \mathbf{x}_\tau, \mathbf{x}_\delta \bigr),
        \;\; \mathbf{m}_{\mathcal B}(\sigma) = \bigoplus_{\tau \in \mathcal B(\sigma)} M_{\mathcal B}\bigl( \mathbf{x}_\sigma, \mathbf{x}_\tau \bigr),
        \;\; \mathbf{x}_\sigma = \bigoplus \left(\mathbf{x}_\sigma,\mathbf{m}_{\mathcal B}(\sigma), \mathbf{m}_\uparrow(\sigma) \right),
        $}
    \end{equation}  
    where  $\mathbf{x}_\sigma$ is the feature vector of cell $\sigma$. We write $\mathbf{m}_\uparrow(\sigma)$ for the aggregated message to cell $\sigma$ from all tuples formed by $\sigma$, one of its upper neighbors, and a parent they share. Similarly, $\mathbf{m}_{\mathcal{B}}(\sigma)$ denotes the aggregated message to cell $\sigma$ from all of its children.

\end{defi}




We now introduce a model that scales more efficiently.


\begin{defi}[Simplified CWNs (sCWN)]
\label{def:sCWN}
    Based on the restricted CWL update using only boundary and co-boundary messages (Theorem~\ref{thm:expressivity_scwl}), we define a simplified message-passing scheme:
    \begin{equation}
        \mathbf m_\mathcal{C}(\sigma) =  \bigoplus_{\tau \in \mathcal C(\sigma)} M_\mathcal{C}(\mathbf x_\tau), 
        \quad \mathbf m_\mathcal{B}(\sigma) = \bigoplus_{\tau \in \mathcal B(\sigma)} M_\mathcal{B}(\mathbf x_\tau),
        \quad \mathbf x_\sigma= \bigoplus (\mathbf x_\sigma, \mathbf m_\mathcal{C}(\sigma), \mathbf m_\mathcal{B}(\sigma)).
    \end{equation}
\end{defi}
Figure \ref{fig:aggregation_function} shows an example of the aggregation functions in Definition \ref{def:sCWN}.
This simplified variant reduces computational and memory requirements while retaining the expressive power of the full CWL update.
Messages are propagated only along boundary and co-boundary relations, making sCWN efficient for large complexes (see Proposition~\ref {prop:model_complexity}).




We also introduce a cell model that has a complexity between sCWN and CWN, but has better expressivity guarantees (Theorem~\ref{thm:expressivity_fcwl}, Proposition~\ref{prop:model_complexity}).
We use both the clique structure and the neighborhood structure from the graph.
\begin{defi}[Factored CWNs (fCWN)]
    \label{def:fCWN}
    fCWNs aggregate messages using the cell complex and graph structures:
    \begin{equation}
    \begin{gathered}
    \mathbf{m}_{\mathcal C}(\sigma) = \bigoplus_{\tau \in \mathcal C(\sigma)} M_{\mathcal C}\!\left(\mathbf{x}_\sigma, \mathbf{x}_\tau\right),
    \quad \mathbf{m}_{\mathcal B}(\sigma) = \bigoplus_{\tau \in \mathcal B(\sigma)} M_{\mathcal B}\!\left(\mathbf{x}_\sigma, \mathbf{x}_\tau\right), 
    \quad \mathbf{m}_{\mathcal N}(i) = \bigoplus_{j \in \mathcal N(i)} M_{\mathcal N}\!\left(\mathbf{x}_i, \mathbf{x}_j\right),\\
    \mathbf{x}_i = \bigoplus \left(\mathbf{x}_i, \mathbf{m}_{\mathcal C}(i), \mathbf{m}_{\mathcal N}(i)\right), \quad \mathbf{x}_\sigma = \bigoplus \left(\mathbf{x}_\sigma, \mathbf{m}_{\mathcal B}(\sigma), \mathbf{m}_{\mathcal C}(\sigma)\right).
    \end{gathered}
    \end{equation}
\end{defi}

\begin{wrapfigure}{r}{0.49\textwidth}
    \vspace{-10pt}
    \centering
    \includegraphics[width=0.42\textwidth]{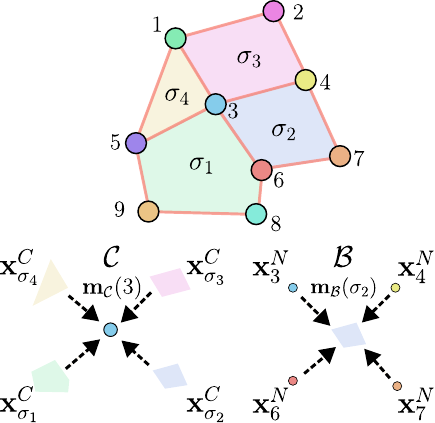}

\caption{
Simplified cellular message passing in sCWN as described in Definition \ref{def:sCWN}.
}
    \vspace{-20pt}
\label{fig:aggregation_function}
\end{wrapfigure}

This model has better memory and time complexity than CWN in practical cases and also better expressivity guarantees (Theorem~\ref{thm:expressivity_fcwl}).

Under certain constraints, we can provide expressivity guarantees for these models.
\begin{prop} \label{prop:CWN_expressive_CWL}
sCWN and CWN are at most as expressive as CWL.  
If they use injective aggregation, they are equally expressive as CWL. fCWN with injective aggregation is at least as expressive as CWL and WL.
\end{prop}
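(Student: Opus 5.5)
The plan is the standard GIN-style argument of \citet{xu2018how}, transported to cell complexes as in \citet{bodnar2021weisfeilercell}, and applied to each architecture in turn.

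\textbf{Upper bound (sCWN and CWN are at most as expressive as CWL).} By induction on the layer index $t$ we show that the feature $\mathbf{x}_\sigma^t$ produced by a CWN (respectively an sCWN) is a function of the CWL color $c_\sigma^t$ (respectively the sCWL color). The base case holds because all cells start with the same color, and, in the isomorphism setting, with the same features.

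For the inductive step, assume two cells satisfy $c_\sigma^{t+1}=c_{\sigma'}^{t+1}$. Injectivity of HASH then gives equal previous colors and equal multisets of boundary, co-boundary, lower- and upper-adjacent colors. By the induction hypothesis, the multisets of features entering $M_{\mathcal B}$, $M_{\mathcal C}$ and $M_\uparrow$ coincide.

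The upper-adjacency message of CWN needs one extra check: it also uses the shared parent $\delta$. The multiset of pairs $(\tau,\delta)$ must be determined by the CWL colors. I will handle this as \citet{bodnar2021weisfeilercell} do in their proof of Theorem~\ref{thm:expressivity_cwl}: refine the CWL coloring by one extra step, or equivalently consider the CWL variant whose upper messages include the shared co-boundary color, which has the same expressivity. For sCWN no such issue arises.

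Since the multisets agree, the aggregated messages agree, and so do the updated features. The final readout is a function of the color histogram, so CWN and sCWN cannot separate complexes that CWL (or sCWL) fails to separate. Theorem~\ref{thm:expressivity_scwl} then lets us replace sCWL by CWL.

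\textbf{Lower bound with injective aggregation.} Here we show the converse: the features refine the colors. We assume the $M$'s and $\bigoplus$ are injective on multisets of the countable feature space; such maps exist by the multiset-function lemma of \citet{xu2018how}. By induction, $\mathbf{x}_\sigma^t=\mathbf{x}_{\sigma'}^t$ implies $c_\sigma^t=c_{\sigma'}^t$ for the relevant restricted test, so the network simulates that test. For CWN this restricted test is the boundary-plus-upper test of Theorem~\ref{thm:expressivity_cwl}; for sCWN it is sCWL, Theorem~\ref{thm:expressivity_scwl}. Both are equivalent to full CWL. An injective readout over the multiset of final features then yields equality of expressivity with CWL.

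\textbf{fCWN.} The identical simulation argument, applied to the fCWN update of Definition~\ref{def:fCWN}, shows that fCWN with injective aggregation is at least as expressive as fCWL. Theorem~\ref{thm:expressivity_fcwl} then gives that it is at least as expressive as both WL and CWL.

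The main obstacle I expect is the triple $(\sigma,\tau,\delta)$ in the CWN upper message. Matching it exactly to a CWL color update requires care, since a plain upper-adjacency multiset loses the pairing with the shared parent. I would try first to show that one additional CWL iteration, whose boundary and co-boundary information already includes the $\delta$'s colors, recovers the pairing. If that fails, I will cite the corresponding argument of \citet{bodnar2021weisfeilercell}.
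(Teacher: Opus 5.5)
Your proposal is correct and takes essentially the paper's route. The paper proves the test-level results (Theorems~\ref{thm:expressivity_scwl} and \ref{thm:expressivity_fcwl}) and defers the network-to-test correspondence to the GIN-style simulation argument of \citet{bodnar2021weisfeilercell}, which is exactly what you write out for CWN, sCWN and fCWN. Your concern about the shared parent $\delta$ is resolved as you anticipate: Bodnar et al.'s CWL upper-adjacency multiset already consists of pairs $(c_\tau, c_\delta)$, and alternatively the extra refinement step you propose recovers these pairs, just as in the paper's proof of Theorem~\ref{thm:expressivity_scwl}.
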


We can further scale the sCWN and fCWN models to large graphs while still exploiting higher-order topology. 
We propose to use clique-based cell complexes, where maximal cliques serve as higher-dimensional cells that compactly summarize multiple nodes and edges.


\subsection{CliqueWalk}

\begin{defi}[Maximal clique complex]
    \label{def:max_clique_complex}
    Given a graph $G = (\mathcal{V}, \mathcal{E})$, the \textbf{maximal clique complex} is defined as a cell complex where the $0$-cells correspond to the vertices $\mathcal{V}$ of $G$, and the higher-dimensional cells correspond to the maximal cliques of $G$. The set of non-$0$-cells (i.e., maximal cliques) is denoted by $\mathcal{X}$.
\end{defi}
An example of a maximal clique complex constructed from a graph is shown in Figure~\ref{fig:maximum_clique_complex}. 
If we impose closure under subset operations, the maximal clique complex becomes the \textit{clique complex}~\citep{kahle2009topology}, that is, the simplicial complex induced by including all subsets of each clique.

\begin{prop} \label{prop:model_complexity}
The time and memory complexities of the different CWN variants on {maximal clique complexes} are:
\begin{itemize}[leftmargin=0.5cm]
    \itemsep0em
    \item CWN has time and memory complexity $\mathcal{O}(n + \sum_{\sigma \in \mathcal{X}} |\sigma|^2)$.
    \item fCWN has time complexity $\mathcal{O}(|\mathcal E| + \sum_{\sigma \in \mathcal{X}} |\sigma| )$ and memory complexity $\mathcal{O}(n + \sum_{\sigma \in \mathcal{X}} |\sigma|)$.    
    \item sCWN has time complexity $\mathcal{O}(\sum_{\sigma \in \mathcal{X}} |\sigma|)$ and memory complexity $\mathcal{O}(n + |\mathcal{X}|)$.
\end{itemize}
Here, $n$ is the number of nodes, $\mathcal X$ is the set of maximal cliques, and $\mathcal E$ is  the set of edges. A table of all complexities can be found in the Appendix \ref{tab:mem_time_comp}.
\end{prop}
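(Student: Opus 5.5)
The plan is to fix the embedding dimension $d$ as a constant and count, for each model, the number of cells that carry a feature vector (memory) and the number of message computations per layer (time). The maximal clique complex has the $n$ nodes as $0$-cells and the cliques $\sigma\in\mathcal X$ as higher cells. Its incidence structure is bipartite: $\mathcal B(\sigma)$ is the vertex set of $\sigma$, so $|\mathcal B(\sigma)|=|\sigma|$, and $\mathcal C(i)$ is the set of maximal cliques containing $i$. The first identity we need is the incidence count
\[
\sum_{i\in\mathcal V}|\mathcal C(i)|=\sum_{\sigma\in\mathcal X}|\mathcal B(\sigma)|=\sum_{\sigma\in\mathcal X}|\sigma|.
\]
For simplicity we treat the maximal clique complex as a two-level complex with no cells of intermediate dimension. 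Every other bound will be reduced to this identity or to $|\mathcal E|$.

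\textbf{sCWN.} Following Definition~\ref{def:sCWN}, one layer computes $M_{\mathcal C}(\mathbf x_\tau)$ for every incidence pair (node, clique) and $M_{\mathcal B}(\mathbf x_\tau)$ for every pair in the reverse direction. The aggregations are sums over these pairs, so the time is $\mathcal O(\sum_\sigma|\sigma|)$. Since every node lies in at least one maximal clique, this sum is at least $n$, which is why no separate $n$ term appears. For memory, I will argue that the messages $M_{\mathcal C},M_{\mathcal B}$ depend only on the sender, so they can be computed once per cell and scattered, giving $n+|\mathcal X|$ stored vectors. The incidence index itself can be treated as the input representation and need not be materialized per feature dimension.

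\textbf{fCWN.} This model adds the neighbourhood messages $\mathbf m_{\mathcal N}(i)$, which cost $\mathcal O(|\mathcal E|)$, on top of the boundary and co-boundary messages, which cost $\mathcal O(\sum_\sigma|\sigma|)$. Here the messages $M(\mathbf x_\sigma,\mathbf x_\tau)$ depend on both endpoints, so they must be materialized per incidence pair. The memory is therefore $\mathcal O(n+\sum_\sigma|\sigma|)$. I would also note that the incidence pairs dominate $|\mathcal X|$, and that the edge messages can be computed and reduced in streaming fashion, so they do not add to memory beyond the stored graph. This last point is the delicate one; if it is not accepted, an $|\mathcal E|$ term would have to be added, or one would observe that $|\mathcal E|\le\sum_\sigma|\sigma|^2$.

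\textbf{CWN.} Following Definition~\ref{def:CWN}, the upper-adjacency message for a node $i$ ranges over triples $(i,\tau,\delta)$ with $\delta$ a clique containing both $i$ and $\tau$. Each clique $\delta$ therefore contributes $|\delta|(|\delta|-1)$ ordered triples, for a total of $\mathcal O(\sum_\sigma|\sigma|^2)$. Higher cells have no upper neighbours, since nothing lies above them. The boundary messages add $\mathcal O(\sum_\sigma|\sigma|)$, which is dominated. Because the messages depend on all three arguments, each triple is materialized, so memory equals time, namely $\mathcal O(n+\sum_\sigma|\sigma|^2)$.

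The main obstacle is justifying the memory claims consistently across models: why sCWN avoids the $\sum|\sigma|$ incidence cost while fCWN does not. The argument hinges on the sender-only form of the messages in sCWN versus the pairwise form in fCWN and CWN. I would state this convention explicitly at the start of the proof.
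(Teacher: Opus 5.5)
Your counting is the same as the paper's proof. Incidence pairs give $\sum_\sigma|\sigma|$ boundary and co-boundary messages, intra-clique pairs give $\Theta(\sum_\sigma|\sigma|^2)$ upper-adjacency messages for CWN, and edges give $|\mathcal E|$ messages for fCWN. Memory is charged only for messages materialized per pair. Your sender-only versus pairwise convention is an explicit version of the paper's distinction between messages that ``go through an MLP'' and sCWN, where ``MLPs are only applied to node or clique data.'' The sCWN and CWN parts go through as you wrote them.

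The step that does not hold together is the fCWN memory bound. In Definition~\ref{def:fCWN} the neighbour message $M_{\mathcal N}(\mathbf x_i,\mathbf x_j)$ is pairwise. By the rule you state for fCWN, that pairwise messages must be materialized, it therefore adds $|\mathcal E|$ to memory. That term is not $\mathcal O(n+\sum_\sigma|\sigma|)$: for a single clique $K_n$ you have $\sum_\sigma|\sigma|=n$ but $|\mathcal E|=\binom{n}{2}$.

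Your streaming exception is applied selectively and contradicts that rule, since the same trick would remove the incidence cost as well. It also does not cover training memory without recomputation, because backpropagating through a nonlinear per-edge $M_{\mathcal N}$ requires storing the per-edge activations. Your fallback $|\mathcal E|\le\sum_\sigma|\sigma|^2$ only recovers the CWN bound, not the stated one.

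The paper closes this by noting that in fCWN ``only the first messages go through an MLP.'' The neighbour term is the linear aggregation $\mathbf W\frac{1}{|\mathcal N_i|}\sum_{j\in\mathcal N_i}\mathbf x_j$ of Appendix~\ref{app:models}, which is a sender-only message. Your own sCWN argument then applies: compute per node and aggregate by a sparse product, giving $\mathcal O(|\mathcal E|)$ time and $\mathcal O(n)$ feature memory, in both the forward and backward passes. Together with the materialized node--clique messages this gives $\mathcal O(n+\sum_\sigma|\sigma|)$.

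The proposition only asserts upper bounds. So the ``main obstacle'' you single out, explaining why fCWN cannot avoid the incidence cost while sCWN can, does not need to be resolved.
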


\begin{figure*}[t]
  \centering
  \includegraphics[width=\textwidth]{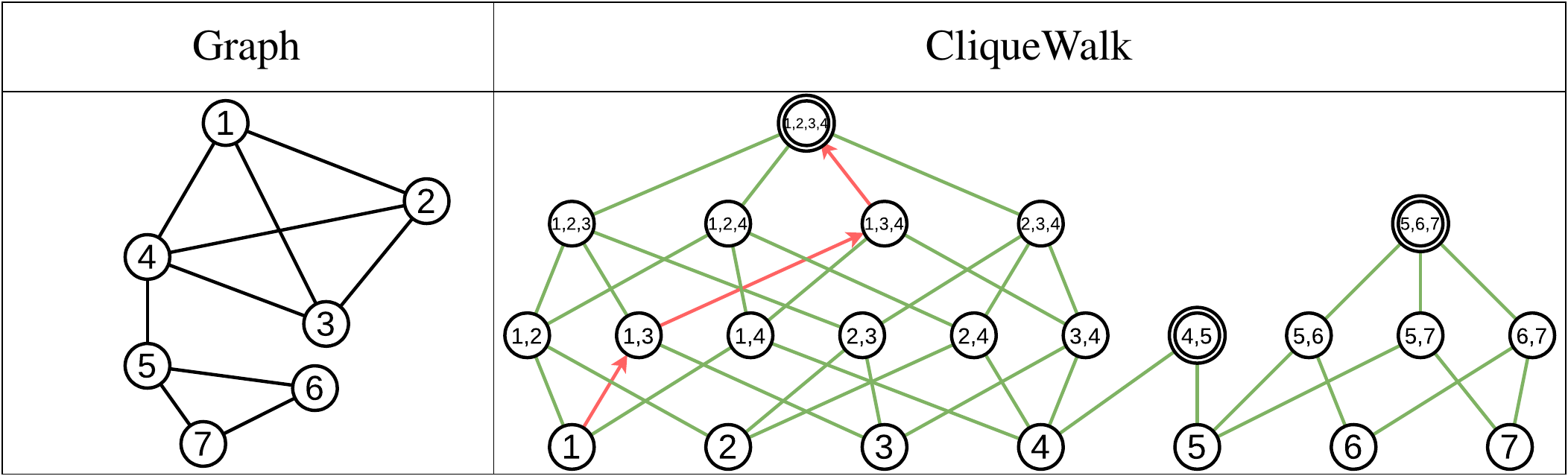}
  \caption{Illustration of a CliqueWalk starting at node~$1$. The walk grows a clique by repeatedly sampling nodes adjacent to all current clique members, successively adding nodes $3$, $4$, and $2$ to reach the maximal clique $\{1,2,3,4\}$.}
  \label{fig:cliquewalk}
\end{figure*}

\begin{rmq}
{
    These models can be simplified to reduce time and memory, for example, by using only incoming information during aggregation. These simplified versions keep the same theoretical expressivity but may capture less complex interactions between cells.}
\end{rmq}
\begin{rmq}
    We conjecture that CWL on maximal cliques is more expressive than WL. A discussion on this conjecture is present in Appendix \ref{app:conj}.
\end{rmq}



Identifying all maximal cliques in a graph is computationally infeasible, as the clique enumeration problem might have exponential runtime.


\begin{prop}[\citet{Moon1965}] \label{prop:maxclique_complex}
A graph with $n$ nodes can contain up to $3^{n/3}$ maximal cliques.
\end{prop}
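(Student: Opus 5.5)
The claim has two parts, and I will prove them separately: a construction showing that $3^{n/3}$ maximal cliques can occur, and a matching upper bound.

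\textbf{Lower bound.} When $3 \mid n$, I will use the Moon--Moser graph, the complete $(n/3)$-partite graph $K_{3,3,\dots,3}$, which is the complement of $n/3$ disjoint triangles. In this graph a set of vertices is a clique exactly when it contains at most one vertex from each part. A clique is maximal exactly when it contains exactly one vertex from each part, since any part it misses supplies a vertex adjacent to everything already chosen. Choosing one vertex per part independently gives $3^{n/3}$ maximal cliques. If $n$ is not divisible by $3$, the same construction with parts of size $2$ or $4$ gives the known bounds $4\cdot 3^{(n-4)/3}$ and $2\cdot 3^{(n-2)/3}$. Both are at most $3^{n/3}$, which is consistent with reading ``up to'' as an upper bound attained when $3\mid n$.

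\textbf{Upper bound.} Let $f(n)$ be the maximum number of maximal cliques in an $n$-vertex graph. I will prove $f(n)\le 3^{n/3}$ by strong induction on $n$. The recursion is driven by a vertex $v$ of minimum degree $d$ in $G$. Every maximal clique of $G$ contains either $v$ or some non-neighbour $u$ of $v$; otherwise $v$ could be added to it. Using this, the plan is:
\begin{itemize}
\item Maximal cliques containing a vertex $w$ correspond injectively to maximal cliques of the induced subgraph $G[N(w)]$.
\item Processing $v$ and then each vertex of $V\setminus N[v]$ in turn, deleting each after it is processed, charges every maximal clique to some vertex $w$ of the non-neighbourhood together with $v$.
\item Each such $w$ has degree at least $d$, so it contributes at most $f(n-1-d)$ cliques, and the charges give a bound of the form $f(n)\le (n-d)\, f(n-d-1)$.
\end{itemize}
This alone is not sharp enough, so I will refine it, with the cleaner route described next.

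\textbf{Main obstacle.} The hard step is getting a recursion tight enough to yield exactly $3^{n/3}$. The route I would try first is the Bron--Kerbosch/Tomita pivot analysis (Tomita--Tanaka--Takahashi), which gives the recurrence
\[
T(n)\le \max_{k}\, k\,T(n-k).
\]
This holds because the branching is over the $k$ vertices not adjacent to a pivot of maximum degree, and each branch loses at least $k$ vertices. The recurrence is solved by noting that $k^{1/k}$ is maximized over the integers at $k=3$, which gives $T(n)\le 3^{n/3}$. Since every maximal clique appears as a leaf of this recursion tree, the number of maximal cliques is at most the number of leaves, which is at most $3^{n/3}$. Combined with the construction above, this proves the proposition.
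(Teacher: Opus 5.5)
\paragraph{Comparison with the paper.} The paper gives no proof of this proposition. It cites Moon and Moser and uses the result only to motivate that exhaustive maximal-clique enumeration is exponential in the worst case. For that purpose your lower-bound construction alone suffices.

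Your final argument is correct, and it is self-contained where the paper is not.
\begin{itemize}
\item The $K_{3,\dots,3}$ construction is exactly the Moon--Moser extremal graph.
\item The pivot recurrence closes as you say. Every maximal clique is a leaf, and each of the $k$ branches has a candidate set of size at most $n-k$. Then $k\,3^{(n-k)/3}\le 3^{n/3}$ because $k\le 3^{k/3}$ for every integer $k\ge 1$.
\end{itemize}
Two points should be stated precisely.
\begin{itemize}
\item The pivot must maximise degree inside the current candidate subgraph $G[P]$. What you need is $|P\cap N(q)|\le |P\cap N(u)|=|P|-k$ for each branch vertex $q$.
\item A node with $k=0$ can occur when the pivot is taken from the excluded set. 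It is a leaf that reports no clique, which is harmless for the count.
\end{itemize}
What the citation buys over your argument is the exact maximum for $n\not\equiv 0 \pmod 3$ and the characterisation of the extremal graphs. What yours buys is a direct link to the Tomita et al.\ worst-case enumeration analysis, which the paper also cites.

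\paragraph{The minimum-degree route is wrong as written.} For a non-neighbour $w$ of a minimum-degree vertex, the bound $\deg(w)\ge d$ bounds $|N(w)|$ from below. So it cannot give the claimed contribution of at most $f(n-1-d)$.

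The fix is to reverse the choice of vertex. Take $u$ of \emph{maximum} degree $\Delta$ and put $k=n-\Delta$.
\begin{itemize}
\item Every maximal clique meets the $k$ vertices of $V\setminus N(u)$, a set that includes $u$ itself.
\item Each such $q$ satisfies $|N(q)|\le\Delta=n-k$.
\item Your injection into maximal cliques of $G[N(q)]$ then gives $f(n)\le k\,f(n-k)$ directly.
\end{itemize}
This needs no algorithm and no deletion bookkeeping, since overcounting is harmless for an upper bound. It is the complement of the standard minimum-degree argument for maximal independent sets. There, each of the $d+1$ vertices of a closed neighbourhood leaves at most $n-d-1$ vertices, giving $(d+1)f(n-d-1)$. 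So your first route needed a corrected vertex choice, not a refinement: with the right vertex it already is the sharp recurrence.
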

To circumvent this challenge, we propose a biased random walk method for efficient clique sampling, which we refer to as CliqueWalk.
Our approach is inspired by maximal-clique enumeration algorithms \citep{Bron_1973, TOMITA200628, CAZALS2008564}.
The key idea is to grow cliques incrementally while maintaining an efficient lookup of candidate nodes that can extend the current clique, continuing until no further extension is possible.
The method is summarized in Algorithm~\ref{alg:cliquewalk} in Appendix \ref{sec:clique_sampling}, and is illustrated in Figure \ref{fig:cliquewalk}.
CliqueWalk enables us to sample a representative subset of cliques without exhaustively enumerating all of them.
A discussion comparing CliqueWalk with other clique sampling schemes can be found in Appendix~\ref{sec:clique_sampling}.




\begin{prop} \label{prop:maximal_clique}
If $\omega_{\text{max}}>\omega(G)$, each random walk generated by CliqueWalk produces a maximal clique of the graph, where $\omega(G)$ is the maximum clique size and $\omega_{\text{max}}$ is the maximum walk length.
\end{prop}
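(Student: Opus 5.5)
The plan is to treat CliqueWalk as an incremental clique-growing process and to prove two invariants about a single walk, as described in Algorithm~\ref{alg:cliquewalk}. The walk maintains a current set $\mathcal K_k$ after $k$ steps and a candidate set $\mathcal P_k = \bigcap_{v \in \mathcal K_k} \mathcal N(v)\setminus \mathcal K_k$. It starts from $\mathcal K_1=\{v_1\}$ and $\mathcal P_1=\mathcal N(v_1)$. At each step it samples $v_{k+1}\in\mathcal P_k$ and sets $\mathcal K_{k+1}=\mathcal K_k\cup\{v_{k+1}\}$ and $\mathcal P_{k+1}=(\mathcal P_k\cap \mathcal N(v_{k+1}))\setminus\{v_{k+1}\}$. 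It stops when $\mathcal P_k=\emptyset$ or when the length cap $\omega_{\max}$ is reached.

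\textbf{Invariant (i): $\mathcal K_k$ is always a clique.} I will prove this by induction on $k$. The base case holds because a single node is a clique. For the inductive step, $v_{k+1}\in\mathcal P_k$ means $v_{k+1}$ is adjacent to every vertex of $\mathcal K_k$ and is not already in it. Hence $\mathcal K_{k+1}$ is a clique with exactly $k+1$ distinct vertices.

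\textbf{Invariant (ii): $\mathcal P_k$ is exactly the set of vertices that can extend $\mathcal K_k$ to a larger clique.} Again by induction, the update rule gives
\[\mathcal P_{k+1}=\Bigl(\bigcap_{v\in\mathcal K_{k+1}}\mathcal N(v)\Bigr)\setminus\mathcal K_{k+1}.\]
This uses the fact that $v_{k+1}\notin\mathcal N(v_{k+1})$, since there are no self-loops, and that $\mathcal P_k$ already excludes $\mathcal K_k$. As a consequence, $\mathcal K_k$ is a maximal clique if and only if $\mathcal P_k=\emptyset$.

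\textbf{Termination.} By (i), after $k$ steps $\mathcal K_k$ is a clique of size $k$, so $k\le\omega(G)$ throughout. The walk therefore cannot keep extending beyond $\omega(G)$ vertices, and $\mathcal P_k$ must become empty at some step $k^\ast\le\omega(G)$. Since $\omega_{\max}>\omega(G)\ge k^\ast$, the length cap is never the reason the walk stops; it stops only because $\mathcal P_{k^\ast}=\emptyset$. By (ii), the output $\mathcal K_{k^\ast}$ is then a maximal clique.

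\textbf{Main obstacle.} The delicate point is matching the exact semantics of $\omega_{\max}$ in Algorithm~\ref{alg:cliquewalk}: whether it counts nodes or steps, and whether the check is performed before or after sampling. I would check this against the pseudocode first. The strict inequality $\omega_{\max}>\omega(G)$ should absorb an off-by-one under either convention. If the algorithm can truncate at exactly $\omega_{\max}$ nodes, the argument still goes through, because a clique of size $\omega_{\max}>\omega(G)$ is impossible. So truncation can only happen after $\mathcal P$ has already emptied.

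I also need the sampling rule to never pick a vertex outside $\mathcal P_k$. Any bias in the random walk only changes which maximal clique is reached, not whether the result is maximal.
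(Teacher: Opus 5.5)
Your proof is correct and takes essentially the same route as the paper. Both argue by induction that the candidate set equals the common neighbourhood of the current walk, so the walk is always a clique, and then note that $|\text{Walk}| \le \omega(G) < \omega_{\max}$, so the length cap never fires and the walk stops only when no vertex can extend it, which gives maximality. Your explicit treatment of walk vertices being excluded (via the absence of self-loops) and of the off-by-one semantics of $\omega_{\max}$ are small refinements that the paper leaves implicit.
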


We denote our random walk method as $\text{CliqueWalk}(n_{\text{walk}}, \omega_{\text{max}})$, where $n_{\text{walk}}$ specifies the number of walks sampled per node and $\omega_{\text{max}}$ corresponds to maximum size of walks.

\begin{prop} \label{prop:complexity_cliquewalk}
The time complexity of $\text{CliqueWalk}(n_{\text{walk}}, \omega_{\text{max}})$ on a graph $G$ is $\mathcal{O} ( n \cdot n_{\text{walk}} \cdot d_{\text{max}}(G) \cdot \max(\omega(G), \omega_{\text{max}}) )$,
where $n$ is the number of nodes, $d_{\text{max}}(G)$ is the maximum node degree, and $\omega(G)$ is the size of the largest clique in $G$. (\ie linear in the number of edges at fixed $\omega_{max}$ and $n_{walk}$)
\end{prop}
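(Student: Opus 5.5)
We bound the cost of a single walk, then multiply by the number of walks, $n \cdot n_{\text{walk}}$ (the procedure launches $n_{\text{walk}}$ walks from each of the $n$ nodes). We follow the description of CliqueWalk (Algorithm~\ref{alg:cliquewalk}). A walk starts at a node $v$. It keeps the current clique $K$ together with a candidate set $P$ of nodes adjacent to every member of $K$. Initially $K=\{v\}$ and $P=\mathcal N(v)$. At each step it samples $u\in P$ (possibly with a bias), sets $K\leftarrow K\cup\{u\}$ and $P\leftarrow P\cap \mathcal N(u)$. It stops when $P=\emptyset$ or when $|K|=\omega_{\text{max}}$.

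\textbf{Cost per step.} Computing $\mathcal N(v)$ costs $\mathcal O(\deg v)\le \mathcal O(d_{\text{max}}(G))$. Since $P\subseteq \mathcal N(v)$, we always have $|P|\le d_{\text{max}}(G)$. We store $P$ in a hash set, or as a boolean marker array over $\mathcal N(v)$ (reset lazily). Then the intersection $P\cap\mathcal N(u)$ costs $\mathcal O(\min(|P|,\deg u))\le \mathcal O(d_{\text{max}}(G))$: we iterate over $P$ and test adjacency to $u$ using a hashed adjacency structure. Sampling $u$ from $P$, including any bias weight computed from $P$, also costs $\mathcal O(|P|)\le\mathcal O(d_{\text{max}}(G))$. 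Hence each step costs $\mathcal O(d_{\text{max}}(G))$.

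\textbf{Number of steps.} Every step adds one node to $K$, and $K$ remains a clique by construction. So a walk takes at most $\omega(G)$ steps; it is also capped at $\omega_{\text{max}}$. The number of steps is therefore at most $\min(\omega(G),\omega_{\text{max}})\le\max(\omega(G),\omega_{\text{max}})$. We use the looser $\max$ to match the statement, which also covers an implementation that checks maximality at the end. Combining the two bounds, one walk costs $\mathcal O(d_{\text{max}}(G)\max(\omega(G),\omega_{\text{max}}))$, and the total is $\mathcal O(n\, n_{\text{walk}}\, d_{\text{max}}(G)\max(\omega(G),\omega_{\text{max}}))$.

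\textbf{Main obstacle.} The delicate step is the per-step intersection bound. It requires constant-time adjacency queries, for example hashed neighbor sets built once in $\mathcal O(|\mathcal E|)$ preprocessing time, which is dominated by the total cost. It also requires that the bias computation is at most linear in $|P|$; we must check both of these against the actual algorithm. For the parenthetical claim, we read ``linear in the number of edges'' as follows: with $\omega_{\text{max}}$ and $n_{\text{walk}}$ fixed, and $\omega(G)$ bounded, the cost is $\mathcal O(n\,d_{\text{max}}(G))$. This matches $\mathcal O(|\mathcal E|)$ on bounded-degree-ratio graphs. A sharper version follows by summing the per-walk cost $\mathcal O(\deg v\cdot \omega_{\text{max}})$ over start nodes $v$, because $|P|\le\deg v$ for a walk started at $v$. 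This gives $\mathcal O(n_{\text{walk}}\,\omega_{\text{max}}\sum_v \deg v)=\mathcal O(n_{\text{walk}}\,\omega_{\text{max}}|\mathcal E|)$, which is linear in $|\mathcal E|$ without any degree-regularity assumption.
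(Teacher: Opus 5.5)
Your proof follows the paper's argument: $\mathcal{O}(d_{\max}(G))$ per step for sampling and neighborhood intersection, at most $\max(\omega(G),\omega_{\max})$ steps per walk, times $n\cdot n_{\text{walk}}$ walks.

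Your closing refinement adds something the paper lacks. The stated bound does not by itself yield the parenthetical claim of linearity in $|\mathcal{E}|$: for a star, $n\,d_{\max}(G)\approx n^2$ while $|\mathcal{E}|=n-1$, and fixing $\omega_{\max}$ does not control $\max(\omega(G),\omega_{\max})$. Charging each step $\mathcal{O}(|P|)\le\mathcal{O}(\deg v)$ over at most $\omega_{\max}$ steps instead gives $\mathcal{O}(n\,n_{\text{walk}}+n_{\text{walk}}\,\omega_{\max}|\mathcal{E}|)$. This works only with your hashed $\mathcal{O}(1)$ adjacency tests, not with the paper's $\mathcal{O}(|A|+|B|)$ merge. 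Under the merge, the $\deg u$ term already makes each walk from a leaf of a star cost $\Theta(n)$.
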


 CliqueWalk is motivated by the prohibitive cost of enumerating all cliques in large graphs.
By sampling a sufficiently large number of cliques, we can approximate local clique structure, enabling efficient higher-order learning with performance comparable to, or better than, full enumeration.




\section{Experiments and Results}
 
In this section, we describe the datasets and experimental setups used for graph and node classification tasks.
We compare sCWN and fCWN with: GCN \citep{kipf2017semisupervisedclassificationgraphconvolutional}, GIN \citep{xu2018how}, SAGEConv \citep{hamilton2018inductiverepresentationlearninglarge}, SGC \citep{wu2019simplifying}, HGNN \citep{feng2019hypergraphneuralnetworks}, SCCN \citep{yang22a}, CIN, CWN \citep{bodnar2021weisfeilercell}, PPGN \citep{Maron2019} and G2N2 \citep{Piquenot2024}.


\subsection{Datasets}

\textbf{Graph classification datasets.} 
We perform experiments on two social network datasets (IMDB-BINARY, IMDB-MULTI)~\citep{Yanardag_2015} and four molecular or biological datasets (MUTAG, PROTEINS, NCI1, NCI109) from the TUDataset~\citep{morris2020tudatasetcollectionbenchmarkdatasets}.

\textbf{Node classification datasets.} 
We evaluate our models on two topological datasets ({contact-primary-school} and {contact-high-school})~\citep{chodrow2021hypergraph, Mastrandrea-2015-contact}, three citation networks (Citeseer, Cora, and PubMed)~\citep{Sen_Namata_2008, Namata2012QuerydrivenAS}, and purchase networks like Amazon Photo network~\citep{mcauley2015amazon, shchur2019pitfallsgraphneuralnetwork} and {OGBN-Products (OGBN-P)~\citep{Bhatia16}}.
In addition, we propose a new synthetic dataset, the \textit{stochastic clique model} (SCM), derived from the stochastic block model (SBM). SCM is a special case of the SBM~\citep{HOLLAND1983109} with inward probability set to $1$.
Graphs are generated by assembling cliques, each assigned a label inherited by its nodes.
Node features are sampled from a Gaussian distribution with label-dependent means and fixed diagonal variance.
Topological noise is introduced by randomly connecting nodes across cliques, so the task can be viewed as label denoising.

\textbf{Synthetic cliques.} 
To compare the inference time and memory footprint of clique-based methods, we also construct a synthetic dataset of isolated cliques. 
This dataset allows us to systematically evaluate the computational scaling of CWN models with respect to clique size.

\subsection{Experimental Setup}

\textbf{Experiments.}
For graph classification, we follow the protocol of \cite{xu2018how}, using $10$-fold cross-validation, reporting the best mean accuracy across folds at each epoch and selecting the epoch with the highest mean accuracy for final evaluation. 
For PPGN \cite{Maron2019}, G2N2 \cite{Piquenot2024}, and CIN \cite{bodnar2021weisfeilercell}, we report the results directly from the original papers as they follow the same protocol.
For node classification, we hold out 20\% of nodes as a final test set.
The remaining nodes are split into 60\%/20\%/20\% for training, validation, and internal testing during hyperparameter tuning. 
We select the checkpoint with the highest validation accuracy and report its performance on the final test set.
For OGBN-Products, we use the public splits.


\textbf{Implementation details.}
{In all experiments, we use a shared architecture and vary only the convolutional module corresponding to the method under evaluation. We run SCCN on the standard lifted simplicial complex up to triangles  \cite{yang22a}. For our decomposition-based methods, we evaluate CWN, sCWNc (\textit{i.e.}, sCWN applied to cliques), fCWN, and HGNN on complexes obtained via the CliqueWalk sampling procedure with $8$ walks per node and initialize clique features using clique size.  We choose $8$ walks as it provides a good trade-off between accuracy and computational cost across datasets. The sampled cliques are generated once and kept fixed throughout training (\textit{i.e.}, no resampling). To ensure a fair comparison with CIN \cite{bodnar2021weisfeilercell}, we follow its original setup: we evaluate sCWNr (sCWN on rings) using the same topological lifting (node–edge–cycle) and readout scheme described in \cite{bodnar2021weisfeilercell}. 
For graph classification, all models use five layers (including the input convolution) and a hidden dimension of $64$, while grid search is limited to dropout $\{0,0.5\}$, batch size $\{32,128\}$, and {with or without BatchNorm}. 
To ensure a fair comparison of memory usage across models, all reported peak memory values are measured under a unified setting with the same architecture and hyperparameters (three layers, hidden dimension $64$, and batch size $32$). This emphasizes that the observed differences stem from the underlying model design rather than tuning choices.}

For node classification, except OGBN-Products, we perform a grid search over learning rate $\{10^{-2},10^{-3}\}$, number of layers $\{2,4\}$, hidden dimension $\{32,64\}$, dropout $\{0,0.2,0.5\}$ and {with or without BatchNorm for all models}. 
For \textit{contact-school} datasets, we also include GraphNorm\footnote{The estimation of the statistics with BatchNorm on small datasets degrades model performance.} \citep{Cai2020GraphNormAP}.
Models are trained for $200$ epochs on standard datasets and $500$ epochs on topological ones\footnote{GNNs converge more slowly on topological datasets, hence the larger number of epochs.}, with each grid search repeated five times using different random seeds.
Final evaluation is based on $20$ independent runs with new seeds. 
For OGBN-Products, we tune hyperparameters for our clique models, use typical hyperparameters for GNN baselines, and train all models for $1000$ epochs and average over $5$ runs (see Appendix~\ref{app:datasets} for further details).

\subsection{Results and Discussion}

\begin{table*}[t]
\centering
\definecolor{dense}{HTML}{9b59b6}
\caption{Graph classification accuracy (\%) and peak GPU memory (MB).
  Best accuracy in \textbf{bold}, second best \underline{underlined}.
  {\color{GNN}$\blacklozenge$}~GNNs,
  {\color{HNN}$\spadesuit$}~hypergraph NNs,
  {\color{CWN}$\maltese$}~CW-networks, {\color{dense}$\clubsuit$}~dense higher-order models, {\color{ours}$\bigstar$} fCWN, sCWNc, sCWNr (ours).
  OOM = out of memory.}
\label{tab:graph_classification}
\resizebox{\textwidth}{!}{%
\begin{tabular}{lcccccccccccc}
\toprule
\multirow{2}{0em}{\textbf{Model}} & \multicolumn{2}{c}{\textbf{IMDB-B}} & \multicolumn{2}{c}{\textbf{IMDB-M}} & \multicolumn{2}{c}{\textbf{MUTAG}} & \multicolumn{2}{c}{\textbf{NCI1}} & \multicolumn{2}{c}{\textbf{NCI109}} & \multicolumn{2}{c}{\textbf{PROTEINS}} \\
\cmidrule(lr){2-3} \cmidrule(lr){4-5} \cmidrule(lr){6-7} \cmidrule(lr){8-9} \cmidrule(lr){10-11} \cmidrule(lr){12-13}
 & Acc. & Mem. & Acc. & Mem. & Acc. & Mem. & Acc. & Mem. & Acc. & Mem. & Acc. & Mem. \\
\midrule
{\color{GNN}$\blacklozenge$} GCN & $74.3_{\pm 4.6}$ & $19$ & $52.4_{\pm 4.1}$ & $18$ & $84.1_{\pm 8.8}$ & $18$ & $80.4_{\pm 1.8}$ & $19$ & $76.9_{\pm 1.7}$ & $19$ & $77.0_{\pm 5.1}$ & $20$ \\
{\color{GNN}$\blacklozenge$} GAT & $74.8_{\pm 3.0}$ & $26$ & $51.6_{\pm 3.7}$ & $25$ & $84.6_{\pm 8.6}$ & $19$ & $79.6_{\pm 3.1}$ & $21$ & $73.8_{\pm 1.3}$ & $21$ & $76.5_{\pm 3.2}$ & $27$ \\
{\color{GNN}$\blacklozenge$} GIN & $72.1_{\pm 3.8}$ & $19$ & $49.7_{\pm 3.4}$ & $18$ & $89.4_{\pm 7.8}$ & $18$ & $80.8_{\pm 2.1}$ & $20$ & $74.8_{\pm 2.4}$ & $19$ & $75.8_{\pm 3.4}$ & $21$ \\
{\color{GNN}$\blacklozenge$} SAGEConv & $74.3_{\pm 4.1}$ & $19$ & $\underline{52.9_{\pm 4.0}}$ & $18$ & $84.6_{\pm 9.5}$ & $18$ & $81.5_{\pm 1.8}$ & $19$ & $78.0_{\pm 1.5}$ & $19$ & $76.3_{\pm 4.5}$ & $20$ \\
\midrule
\midrule
{\color{HNN}$\spadesuit$} HGNN & $75.5_{\pm 4.3}$ & $19$ & $52.3_{\pm 4.8}$ & $18$ & $86.2_{\pm 8.2}$ & $18$ & $79.2_{\pm 3.1}$ & $20$ & $76.2_{\pm 1.9}$ & $20$ & $76.5_{\pm 3.9}$ & $21$ \\
{\color{CWN}$\maltese$} CWN & $66.0_{\pm 7.8}$ & $40$ & $50.5_{\pm 3.4}$ & $35$ & $85.1_{\pm 7.3}$ & $22$ & $63.7_{\pm 1.9}$ & $26$ & $63.1_{\pm 2.0}$ & $26$ & $77.0_{\pm 3.4}$ & $40$ \\
{\color{CWN}$\maltese$} CIN & $\underline{75.6_{\pm 3.7}}$ & $454$ & $52.7_{\pm 3.1}$ & $517$ & $\mathbf{92.7_{\pm 6.1}}$ & $31$ & $\underline{83.6_{\pm 1.4}}$ & $42$ & $\underline{84.0_{\pm 1.6}}$ & $41$ & $77.0_{\pm 4.3}$ & $137$ \\
{\color{dense}$\clubsuit$} PPGN & $73.0_{\pm 5.8}$ & $2193$ & $50.5_{\pm 3.6}$ & $950$ & $90.6_{\pm 8.7}$ & $110$ & $83.2_{\pm 1.1}$ & $1467$ & $82.2_{\pm 1.4}$ & $1467$ & $77.2_{\pm 4.7}$ & $45277$ \\
{\color{dense}$\clubsuit$} G2N2 & $\mathbf{76.8_{\pm 2.8}}$ & $846$ & $\mathbf{54.0_{\pm 2.9}}$ & $400$ & $\underline{92.5_{\pm 5.5}}$ & $189$ & $82.8_{\pm 0.9}$ & $858$ & \text{--} & $829$ & $\mathbf{80.1_{\pm 3.7}}$ & $16188$ \\
\midrule
{\color{ours}$\bigstar$} fCWN & $71.9_{\pm 4.1}$ & $19$ & $52.8_{\pm 2.6}$ & $19$ & $85.1_{\pm 8.1}$ & $20$ & $79.2_{\pm 2.4}$ & $22$ & $62.3_{\pm 4.5}$ & $22$ & $75.9_{\pm 3.3}$ & $24$ \\
{\color{ours}$\bigstar$} sCWNc & $75.0_{\pm 4.5}$ & $19$ & $52.3_{\pm 4.2}$ & $19$ & $85.7_{\pm 8.2}$ & $20$ & $66.3_{\pm 8.9}$ & $22$ & $64.1_{\pm 2.8}$ & $22$ & $\underline{77.5_{\pm 3.5}}$ & $25$ \\
{\color{ours}$\bigstar$} sCWNr & $74.6_{\pm 3.5}$ & $139$ & $51.3_{\pm 3.6}$ & $178$ & $91.7_{\pm 6.2}$ & $22$ & $\mathbf{83.8_{\pm 1.6}}$ & $26$ & $\mathbf{84.1_{\pm 1.2}}$ & $26$ & $75.9_{\pm 4.9}$ & $38$ \\
\bottomrule
\end{tabular}
}%
\end{table*}

\begin{table*}[t]
\centering
\caption{Node classification accuracy (\%) with standard deviation. 
Best results are in \textbf{bold}, second best are \underline{underlined}.
HighSchool = contact-high-school, PrimarySchool = contact-primary-school. {\color{GNN}$\blacklozenge$} GNNs, {\color{SNN}$\clubsuit$} simplicial neural networks, {\color{HNN}$\spadesuit$} hypergraph neural networks, {\color{CWN}$\maltese$} CWN, and {\color{ours}$\bigstar$} fCWN, sCWNc (ours).} 
\label{tab:node_classification}
\resizebox{\textwidth}{!}{
\begin{tabular}{lcccccccc} 
\toprule
\textbf{Model} & \textbf{Citeseer} & \textbf{Cora} & \textbf{Photo} & \textbf{PubMed} & \textbf{HighSchool} & \textbf{PrimarySchool} & \textbf{SCM} & {\textbf{OGBN-P}}\\
\midrule
{\color{GNN}$\blacklozenge$} GCN        & $\mathbf{73.7_{\pm 0.76}}$ & $\mathbf{88.7_{\pm 0.61}}$ & $93.9_{\pm 0.27}$ & $88.3_{\pm 0.33}$ & ${\underline{98.2_{\pm 2.6}}}$ & ${88.9_{\pm 3.1}}$ & OOM & $76.3_{\pm 0.4}$ \\
{\color{GNN}$\blacklozenge$} GAT        & $72.2_{\pm 1.3}$ & $87.5_{\pm 1.2}$ & $93.7_{\pm 0.26}$ & $87.2_{\pm 0.33}$ & ${19.1_{\pm 7.3}}$ & ${13.9_{\pm 7.8}}$ & OOM & OOM\\
{\color{GNN}$\blacklozenge$} GIN        & $69.3_{\pm 1.1}$ & $86.2_{\pm 0.62}$ & $88.0_{\pm 2.2}$ & $86.7_{\pm 0.42}$ & ${94.5_{\pm 3.5}}$ & ${85.9_{\pm 4.6}}$ & OOM & $76.4_{\pm0.4}$ \\
{\color{GNN}$\blacklozenge$} SAGEConv  & $72.4_{\pm 1.2}$ & $\mathbf{88.7_{\pm 0.99}}$ & $95.0_{\pm 0.29}$ & $\underline{89.5_{\pm 0.6}}$ & ${14.6_{\pm 4.2}}$ & ${6.53_{\pm 4.5}}$ & OOM & $\underline{78.5_{\pm 0.1}}$\\
{\color{GNN}$\blacklozenge$} SGC  & $\mathbf{73.7_{\pm 0.74}}$ & $88.4_{\pm0.86}$ &  $89.8_{\pm0.39}$ &  $89.2_{\pm0.21}$& $6.3_{\pm 4.1}$ & $3.57_{\pm 3.0}$ & $65.6_{\pm0.01}$ & $76.1_{\pm 0.07}$\\
\midrule
\midrule
{\color{SNN}$\clubsuit$} SCCN       & $46.4_{\pm 1.4}$ & $64.4_{\pm 1.9}$ & $64.8_{\pm 2.6}$ & $73.4_{\pm 0.7}$ & $93.0_{\pm 2.5}$ & ${74.1_{\pm 3.7}}$ & OOM & OOM\\[0.2em]
{\color{HNN}$\spadesuit$} HGNN       & $\underline{72.9_{\pm 1.1}}$ & $\underline{88.5_{\pm 0.9}}$ & $94.2_{\pm 0.5}$ & $88.5_{\pm 0.39}$ & ${95.4_{\pm 3.8}}$ & ${80.4_{\pm 5.3}}$ & $\underline{68.1_{\pm 0.3}}$ & $78.3_{\pm 0.4}$\\[0.2em]
{\color{CWN}$\maltese$} CWN        & $72.0_{\pm1.6}$ & $81.1_{\pm 1.0}$ & $94.7_{\pm0.37}$ & $89.3_{\pm 0.35}$ & $94.6_{\pm 2.2}$ & ${\mathbf{90.7_{\pm1.9}}}$ & OOM & OOM
\\
\midrule
{\color{ours}$\bigstar$} fCWN      &  $72.5_{\pm 1.4}$ & $88.1_{ \pm 0.79}$ & $\underline{95.1_{ \pm 0.35}}$ & $89.4_{ \pm 0.31}$ & ${\mathbf{99.5_{ \pm 0.9}}}$ & ${\underline{89.5_{ \pm 2.3}}}$ & OOM & $\mathbf{80.1_{\pm 0.2}}$\\

{\color{ours}$\bigstar$} sCWNc       & $\underline{72.9_{\pm 1.3}}$ & $87.3_{\pm 0.87}$ & $\mathbf{95.3_{\pm 0.39}}$ & $\mathbf{89.7_{\pm 0.35}}$ & $96.0_{\pm 2.4}$ & $86.4_{\pm 4.4}$ & $\mathbf{77.7_{\pm0.05}}$ & $76.6_{\pm 0.2}$\\
\bottomrule
\end{tabular}
}
\end{table*}


\textbf{Graph classification.} {Table~\ref{tab:graph_classification} summarizes the results of the graph classification task. On social network datasets such as IMDB-B and IMDB-M, topological models achieve strong performance, consistent with prior work~\citep{bodnar2021weisfeilercell}. In particular, higher-order methods such as CIN and G2N2 often yield the best accuracies. However, this improved performance comes at a substantial computational cost: these models exhibit significantly higher peak memory usage, even on relatively small datasets.
In contrast, our methods (sCWN and fCWN) achieve competitive performance while maintaining a memory footprint comparable to standard GNNs. This highlights a key trade-off: while dense higher-order models can perform better, their poor scalability limits their applicability to large-scale graphs.
On molecular datasets, clique-based methods tend to underperform. This can be explained by the structural properties of these graphs: the average maximal clique size is close to 2 (see Table \ref{tab:dataset_statistics}), making clique sampling effectively similar to edge sampling and limiting the benefit of higher-order clique-based features.}

\begin{wrapfigure}{r}{0.48\textwidth}
    \includegraphics[width=\linewidth]{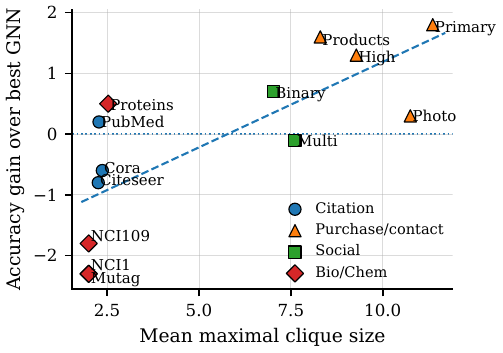}
    \caption{Accuracy gain over the best GNN baseline as a function of mean clique size. Larger cliques tend to favor clique-based models.}
    \vspace{-10pt}
    \label{fig:acc_clique_cor}
\end{wrapfigure}

\textbf{Node classification.}
Table~\ref{tab:node_classification} reports node classification results. The SCM dataset, with approximately $6$M nodes and $276$M edges, is substantially larger than standard benchmarks; for this dataset, we use only one CliqueWalk random walk. Additional dataset statistics are provided in Table~\ref{tab:dataset_statistics} (Appendix~\ref{app:datasets}). On topological datasets such as contact-high-school and contact-primary-school, topological models are competitive, while classical GNNs with GraphNorm perform similarly, with fCWN slightly better. On citation benchmarks (Citeseer, Cora), differences are small, indicating no clear advantage for topological methods. 
On OGBN-Products, fCWN and HGNN with CliqueWalk are competitive with GNNs that only use the edges.
In contrast to other methods that run out of memory (OOM), sCWN and fCWN scale efficiently.

\begin{figure*}
    \centering
    \begin{subfigure}[b]{0.48\textwidth}
        \centering
        \includegraphics[width=\textwidth]{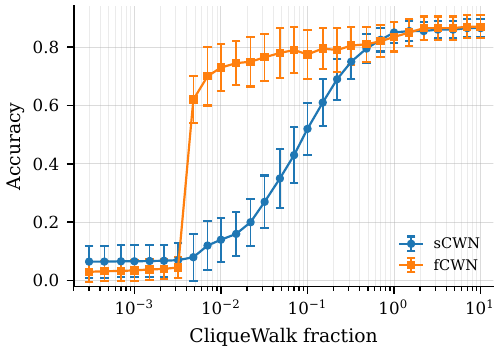}
        \caption{Accuracy of sCWN and fCWN on contact-primary-school vs. CliqueWalk sampling rates. }
        \label{fig:walk_effect}
    \end{subfigure}
    \hspace{0.02\textwidth}
    \begin{subfigure}[b]{0.48\textwidth}
        \centering
        \includegraphics[width=\textwidth]{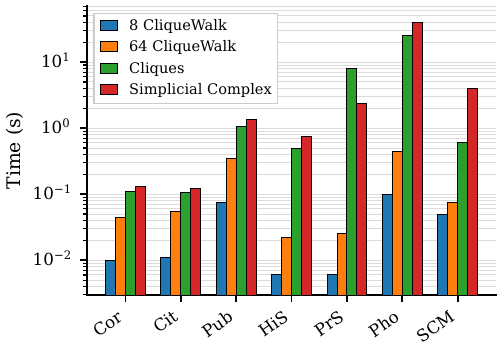}
        \caption{Computation time of different lifting strategies measured on an NVIDIA RTX~3090 GPU.}
        \label{fig:lifting_time}
    \end{subfigure}
    \caption{Sensitivity analysis of CliqueWalk. (a) Accuracy versus sampled walks. (b) Runtime comparison between CliqueWalk and exact lifting methods. Cor = Cora, Cit = Citeseer, Pub = PubMed, HiS = contact-high-school, PrS = contact-primary-school, Pho = Photo.}
    \label{fig:sensitivity_analysis}
\end{figure*}




\begin{rmq}
Across graph and node classification, topological models perform better on datasets with larger cliques. Table~\ref{tab:dataset_statistics} in Appendix~\ref{app:datasets} reports average clique sizes, and Figure~\ref{fig:acc_clique_cor} reveals a clear correlation between larger cliques and improved topological performance. 
\end{rmq}




\subsection{Sensitivity Analysis and Ablation Study}
\label{subsec:ablation}


\textbf{Sampling effect for CliqueWalk.} 
We evaluate the impact of clique density on performance by scaling the sampling rate from nearly zero to 10 CliqueWalks per node (Figure \ref{fig:walk_effect}) in contact-primary-school. On this topological dataset, we observe that predictive accuracy scales directly with the number of sampled cliques; higher sampling rates capture richer structural information, leading to better predictions. We observe a different effect on classical datasets like Photo, as shown in Appendix~\ref{app:ablations}.

\textbf{CliqueWalk compute time.}
We compare CliqueWalk with $8$ and $64$ walks to exact clique enumeration and triangle-based simplicial lifting (Figure~\ref{fig:lifting_time}).
Across all datasets, CliqueWalk achieves substantially lower runtimes; even with 64 walks per node, it is close to an order of magnitude faster while maintaining competitive accuracy.
This demonstrates that CliqueWalk is an efficient and scalable alternative to exact methods.

\begin{wraptable}{r}{0.45\textwidth}
\centering
\vspace{-10pt}
\caption{Ablation sampling CliqueWalk.}
\label{tab:ablation_sampling}
\resizebox{\linewidth}{!}{
\begin{tabular}{lcc}
\toprule
\textbf{Strategy} & \textbf{PrimarySchool} & \textbf{Photo} \\
\midrule
Re-sampling & $87.1_{\pm3.8}$ & $95.4_{\pm0.44}$ \\
No re-sampling & $86.4_{\pm 4.4}$ & $95.3_{\pm 0.39}$ \\
\bottomrule
\end{tabular}
}
\vspace{-10pt}
\end{wraptable}




\textbf{Ablation study, resampling in CliqueWalk.}
Table \ref{tab:ablation_sampling} compares the performance when using $8$-walk CliqueWalk with or without re-sampling at each training epoch on the \textit{contact-primary-school} and \textit{Photo} datasets. 
We observe that the results are slightly better across both datasets when re-sampling, while it introduces a slight increase in runtime (see PrS and Pho in Figure \ref{fig:lifting_time}).
This suggests that using resampling can be a more practical way to trade better generalization against computational cost.


\subsection{Limitations}

While our work establishes a scalable framework for clique-based higher-order learning, it has some limitations.
First, we restrict our evaluation to node and graph classification tasks; extending the approach to other settings, such as hyperedge prediction, link prediction, or generative modeling, remains an open direction.
Second, our method does not explicitly expand the receptive field of nodes, and thus may not fully capture long-range dependencies compared to approaches that incorporate multi-hop information.
Finally, we focus on clique-based sampling strategies, whereas exploring alternative lifting procedures or hybrid strategies could further improve performance and generalization. 
Addressing these limitations offers promising avenues for future research.








\section{Conclusion}
We introduced the maximal clique complex as a simplified higher-order structure that connects clique-based representations to the CWL test, and showed that a sCWN operating on this complex achieves CWL-level expressivity while remaining computationally efficient.
To address scalability, we proposed CliqueWalk, a biased random walk algorithm that samples cliques efficiently and scales quasi-linearly with the number of nodes.
Together, these contributions enable the design of clique-based neural architectures that are both expressive and scalable.
Extensive experiments on node and graph classification benchmarks demonstrate that our models achieve competitive performance compared to GNNs and other higher-order approaches, while maintaining substantially lower memory and runtime requirements.
This work establishes random walk clique-based lifting as a practical path toward scalable higher-order graph learning. It opens the door for future research on efficient sampling strategies and domain-specific applications.

\ifarxiv 
    \section*{Acknowledgments}
    The authors acknowledge the French National Research Agency (ANR) for its financial support of the JCJC project DeSNAP ANR-24-CE23-1895-01. This work is also supported by Hi! PARIS and ANR/France 2030 program (Grants ANR-23-IACL-0005 and ANR-23-CMAS-0033).
\else
\fi

\bibliographystyle{plainnat}
\bibliography{references}

\newpage
\appendix

\doparttoc 
\faketableofcontents 

\addcontentsline{toc}{section}{Appendix} 
\part{{\Large Appendix}} 
\parttoc 
\newpage

\section{Weisfeiler-Leman Graph Isomorphism Test}
\label{app:WL_Test}

\begin{defi}
Let $A(\cdot)$ and $B(\cdot)$ be graph hashing functions.
We say that $A$ is more expressive than $B$ if, for any pair of graphs $G$ and $G'$, if the following condition holds:
\begin{equation}
    B(G) \neq B(G') \;\; \implies \;\; A(G)  \neq A(G').
\end{equation}
\end{defi}
Intuitively, a more expressive hashing can distinguish a wider range of non-isomorphic graphs.

A classical and widely used technique for graph isomorphism testing is the \emph{Weisfeiler–Leman (WL) test} \citep{weisfeiler1968reduction}, which is based on iterative color refinement:
\begin{defi}
The WL test constructs, in an iterative manner, a mapping $c$ from the nodes of a graph to a finite set of colors as follows:
\begin{itemize}
    \setlength\itemsep{0pt}
    \item {Initialization:} All nodes are assigned the same initial color.
    \item {Color refinement:} At iteration $t+1$, the color of each node $i$ is updated according to $c_i^{t+1} = \text{HASH}\!\left(c_i^t, \{\!\{c_j^t : j \sim i\}\!\}\right)$, where $j \sim i$ denotes that node $j$ is adjacent to node $i$, and $\text{HASH}$ is an injective function.
    \item {Termination:} The process continues until the coloring no longer changes.  
    Two graphs are considered non-isomorphic if their color histograms differ; otherwise, the test does not provide a conclusive answer.
\end{itemize}
\end{defi}
The WL test provides an efficient heuristic for the graph isomorphism problem~\citep{Huang_2021}.

\section{Proofs}
\label{app:proofs}


\subsection{Proof of Theorem \ref{thm:expressivity_scwl}}

First, we introduce the same notations, definitions, and propositions as in \citep{bodnar2021weisfeilercell} to manipulate cellular coloring.

\begin{defi}
    A \textbf{cellular coloring} is a function c that maps a cell complex $X$ and one of its cells $\sigma$ to a finite set (color set). We denote this color as $c_\sigma^X$.
\end{defi}

\begin{defi}
    Let $X,Y$ be two cell complexes and $c$ a coloring. We say that $X$ and $Y$ are $c$-similar, denote as $c^X = c^Y$ if $\{\!\{c_\sigma^X, \quad \sigma \in X \}\!\} = \{\!\{c_\tau^Y, \quad \tau \in Y\}\!\}$. Otherwise, we have $c^X \neq c^Y$.
\end{defi}

\begin{defi}
    A coloring $c$ is said to \textbf{refine} another coloring $d$, denoted $c \subseteq d$, if for all cell complexes $X, Y$ and all $\sigma \in X, \tau \in Y$, we have:
\[
c_\sigma^X = c_\tau^Y \;\;\implies\;\; d_\sigma^X = d_\tau^Y.
\]
If both $c \subseteq d$ and $d \subseteq c$, then the two colorings are said to be \textbf{equivalent}, denoted $c \equiv d$.
\end{defi}

\begin{prop} \label{prop:include_col_hash}
    Let $X, Y$ be cell complexes with $A \subseteq X$ and $B \subseteq Y$. Consider two colorings $c, d$ such that $c \subseteq d$.
\[
\{\!\{c_\sigma^X, \quad \sigma \in A \}\!\} \;=\; \{\!\{ c_\tau^Y , \quad \tau \in B \}\!\}
 \implies
\{\!\{d_\sigma^X , \quad \sigma \in A \}\!\} \; =\; \{\!\{ d_\tau^Y , \quad \tau \in B \}\!\}.
\]
\end{prop}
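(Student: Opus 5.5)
The statement concerns multisets indexed by arbitrary subsets $A \subseteq X$ and $B \subseteq Y$. The natural approach is to push the equality of $c$-multisets through the map that the refinement relation $c \subseteq d$ provides. The key observation is that refinement yields a well-defined function $\phi$ from $c$-colors to $d$-colors. For any color value $k$ attained by $c$ on some cell of some complex, set $\phi(k) := d_\sigma^X$ for any $(X,\sigma)$ with $c_\sigma^X = k$.

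The only thing to verify is that $\phi$ does not depend on the representative. If $c_\sigma^X = c_\tau^Y = k$, the definition of refinement gives $d_\sigma^X = d_\tau^Y$ directly. Since the definition quantifies over all pairs of complexes and all cells in them, this holds both across $X$ and $Y$ and within a single complex. Consequently $d_\sigma^X = \phi(c_\sigma^X)$ for every $\sigma \in A$, and $d_\tau^Y = \phi(c_\tau^Y)$ for every $\tau \in B$.

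Next, assume $\{\!\{c_\sigma^X : \sigma \in A\}\!\} = \{\!\{c_\tau^Y : \tau \in B\}\!\}$. Equality of finite multisets means there is a bijection $\pi : A \to B$ with $c_\sigma^X = c_{\pi(\sigma)}^Y$ for all $\sigma \in A$. One obtains $\pi$ by matching cells of equal color, which is possible because each color appears with the same multiplicity on both sides. Applying refinement to each matched pair gives $d_\sigma^X = d_{\pi(\sigma)}^Y$. The same bijection $\pi$ therefore witnesses $\{\!\{d_\sigma^X : \sigma \in A\}\!\} = \{\!\{d_\tau^Y : \tau \in B\}\!\}$. Equivalently, the $d$-multiset is the image of the $c$-multiset under $\phi$, and images of equal multisets under the same function are equal.

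There is no real obstacle here. The only point needing care is the reduction from multiset equality to a color-preserving bijection, which requires $A$ and $B$ to be finite. This holds because the cell complexes under consideration have finitely many cells, as lifts of finite graphs. The case $A = B = \emptyset$ is trivial.
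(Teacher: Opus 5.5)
Your proof is correct and follows the paper's argument: equality of the $c$-multisets gives a color-preserving bijection $A \to B$, and applying $c \subseteq d$ to each matched pair gives equality of the $d$-multisets. The auxiliary map $\phi$ from $c$-colors to $d$-colors is only a reformulation of that same step, not a different route.
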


\begin{proof}
    Suppose that $\{\!\{c_\sigma^X ,\quad \sigma \in A \}\!\} \;=\; \{\!\{ c_\tau^Y ,\quad \tau \in B \}\!\}$. It means that there exists a bijection $f : A \to B$ such that for all $\sigma \in A$, $c_\sigma^X = c_{f(\sigma)}^Y$.
    \\ As $c \subseteq d$, 
    $d_\sigma^X = d_{f(\sigma)}^Y$ \ie $\{\!\{d_\sigma^X ,\quad \sigma \in A \}\!\} \; =\; \{\!\{ d_\tau^Y ,\quad \tau \in B \}\!\}$.
\end{proof}

\begin{cor}
    If $c \subseteq d$, then for all cell complexes $X, Y$,
\[
c^X = c^Y \;\;\implies\;\; d^X = d^Y.
\]

All non-distinguished cell complexes by $c$ are not distinguished by $d$. In other words, $c$ is a more powerful isomorphic test than $d$.
\end{cor}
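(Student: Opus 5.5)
The corollary is the special case of Proposition~\ref{prop:include_col_hash} in which the subsets are the whole complexes, so the plan is to instantiate that proposition and check the hypotheses. Fix two cell complexes $X$ and $Y$ and a pair of colorings with $c \subseteq d$. Assume $c^X = c^Y$. By the definition of $c$-similarity, this means
\[
\{\!\{c_\sigma^X : \sigma \in X\}\!\} = \{\!\{c_\tau^Y : \tau \in Y\}\!\},
\]
where $\sigma$ and $\tau$ range over all cells of $X$ and $Y$.

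Next I apply Proposition~\ref{prop:include_col_hash} with $A = X$ and $B = Y$, both of which are trivially subsets of themselves. Its hypothesis is exactly the multiset equality above. Its conclusion is
\[
\{\!\{d_\sigma^X : \sigma \in X\}\!\} = \{\!\{d_\tau^Y : \tau \in Y\}\!\},
\]
which by definition is $d^X = d^Y$. The only point to check is that the multiset in the definition of $c$-similarity ranges over the same index set as $A$ in the proposition, namely all cells. This holds by construction.

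For the interpretive sentence, I take the contrapositive: $d^X \neq d^Y$ implies $c^X \neq c^Y$. So every pair distinguished by $d$ is also distinguished by $c$, which is exactly the paper's definition of $c$ being at least as expressive as $d$. There is no real obstacle. The only subtlety, handled inside Proposition~\ref{prop:include_col_hash}, is that refinement compares colors across different complexes. This is what lets the bijection witnessing $c$-multiset equality transport $d$-colors from $X$ to $Y$.
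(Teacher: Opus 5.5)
Your proposal is correct and matches the paper's argument. The paper states this corollary directly after Proposition~\ref{prop:include_col_hash} without a separate proof. The intended derivation is exactly your instantiation $A = X$, $B = Y$, and the interpretive sentence is the contrapositive, read against the expressivity definition in Appendix~\ref{app:WL_Test}.
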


\textbf{Proof of Theorem \ref{thm:expressivity_scwl}.} We show that CWL with coloring HASH$(c_\sigma^t, c_\mathcal B^t, c_\mathcal C^t)$ is as powerful as HASH$(c_\sigma^t, c_\mathcal B^t, c_\mathcal \uparrow^t)$ .
$a^t$ denotes   the colouring at step $t$ using CWL with HASH$(c_\sigma^t, c_\mathcal B^t, c_\mathcal \uparrow^t)$ and $b^t$ the one using HASH$(c_\sigma^t, c_\mathcal B^t, c_\mathcal C^t)$.
We know that the coloring $a^t$ is as powerful as the original CWL (Theorem 7, in \cite{bodnar2021weisfeilercell}). Since $b^t$ uses a subset of the CWL coloring relationships, it can be shown by induction that it is less powerful than the original CWL. Therefore, we have $a \subseteq b$.

We show that $b \subseteq a$.

We show by induction that $b^{2t} \subseteq a^t$ for all $t \in \mathbb N$.

\textit{Base case.} $b^0 \subseteq a^0$ as they follow the same color initialization scheme.

\textit{Inductive step.}  Assume $b^{2t} \subseteq a^t$. We prove that $b^{2t+2} \subseteq a^{t+1}$.

let $(\sigma_1,\sigma_2) \in X \times Y$ such that $b_{\sigma_1}^{2t+2} = b_{\sigma_2}^{2t+2}$. By construction, $$b_{\sigma_1}^{2t+1} = b_{\sigma_2}^{2t+1}, \quad b_{\mathcal B}^{2t+1}(\sigma_1) = b_{\mathcal B}^{2t+1}(\sigma_2), \quad b_{\mathcal C}^{2t+1}(\sigma_1) = b_{\mathcal C}^{2t+1}(\sigma_2),$$
as $b_{\mathcal C}^{2t+1}(\sigma_1) = b_{\mathcal C}^{2t+1}(\sigma_2)$, there exist a bijective map $f : \mathcal C(\sigma_1) \to \mathcal C(\sigma_2)$ that preserve the $b^{2t+1}$ coloring ie $b_{\tau}^{2t+1} = b_{f(\tau)}^{2t+1}$ for $\tau \in \mathcal C(\sigma_1)$.

As $b_{\tau}^{2t+1} = b_{f(\tau)}^{2t+1}$, we have  $b_{\mathcal B}^{2t}(\tau) = b_{\mathcal B}^{2t}(f(\tau))$, \textit{i.e.},
$$\{\!\{b_\gamma^{2t}, \quad \gamma \in \mathcal B(\tau) \}\!\} = \{\!\{b_\gamma^{2t}, \quad \gamma \in \mathcal B(f(\tau)) \}\!\}.$$
We can add the color of $\tau$ on both sides, the multisets would still stay equal:
 $$\{\!\{(b_\gamma^{2t}, b_\tau^{2t}), \quad \gamma \in \mathcal B(\tau) \}\!\} = \{\!\{(b_\gamma^{2t}, b_\tau^{2t}), \quad \gamma \in \mathcal B(f(\tau)) \}\!\}.$$

As this is true for all $\tau$ in $\mathcal C(\sigma_1)$, we can take the union: 
$$\underset{\tau \in \mathcal C(\sigma_1)}{\bigcup} \{\!\{(b_\gamma^{2t}, b_\tau^{2t}), \quad \gamma \in \mathcal B(\tau) \}\!\}  = \underset{\tau \in \mathcal C(\sigma_1)}{\bigcup} \{\!\{(b_\gamma^{2t}, b_\tau^{2t}), \quad \gamma \in \mathcal B(f(\tau)) \}\!\},$$
\textit{i.e.},
$$ \{\!\{(b_\gamma^{2t}, b_\tau^{2t}), \quad  \tau \in \mathcal C(\sigma_1), \gamma \in \mathcal B(\tau)\}\!\}  = \{\!\{(b_\gamma^{2t}, b_\tau^{2t}), \quad \tau \in \mathcal C(\sigma_1),  \gamma \in \mathcal B(f(\tau)) \}\!\} ,$$
as $b_\tau^{2t} = b_{f(\tau)}^{2t}$ and $f$ is bijective, the right term can be simplified:
\begin{align*}
    \{\!\{(b_\gamma^{2t}, b_\tau^{2t}), \quad \tau \in \mathcal C(\sigma_1),  \gamma \in \mathcal B(f(\tau)) \}\!\} &= \{\!\{(b_\gamma^{2t}, b_{f(\tau)}^{2t}), \quad \tau \in \mathcal C(\sigma_1),  \gamma \in \mathcal B(f(\tau)) \}\!\} \\ &= \{\!\{(b_\gamma^{2t}, b_{\delta}^{2t}), \quad \delta \in \mathcal C(\sigma_2),  \gamma \in \mathcal B(\delta) \}\!\},
\end{align*}
\textit{i.e.},
$$ \{\!\{(b_\gamma^{2t}, b_\tau^{2t}), \quad  \tau \in \mathcal C(\sigma_1), \gamma \in \mathcal B(\tau)\}\!\}  = \{\!\{(b_\gamma^{2t}, b_{\delta}^{2t}), \quad \delta \in \mathcal C(\sigma_2),  \gamma \in \mathcal B(\delta) \}\!\}.$$

Thus $b_\uparrow^{2t}(\sigma_1) = b_\uparrow^{2t}(\sigma_2)$. Using  the induction hypothesis $b^{2t} \subseteq a^t$ with proposition \ref{prop:include_col_hash},  we have $$a_{\sigma_1}^t = a_{\sigma_2}^t \quad   a_\uparrow^{t}(\sigma_1) = a_\uparrow^{t}(\sigma_2) \quad a_\mathcal B^{t}(\sigma_1) = a_\mathcal B^{t}(\sigma_2) \quad a_\mathcal C^{t}(\sigma_1) = a_\mathcal C^{t}(\sigma_2),$$ 
\textit{i.e.},
$$a_{\sigma_1}^{t+1} = a_{\sigma_2}^{t+1}.$$

From our induction $b^{2t} \subseteq a^t$ for all $t \in \mathbb{N}$, hence $b \subseteq a$.\qed

\subsection{Proof of Theorem \ref{thm:expressivity_fcwl} and Proposition \ref{prop:CWN_expressive_CWL}}

In this section, we prove that fCWL is at least as expressive as CWL and $1$-WL on cell complexes that kept node set.

Once this is established, the remaining correspondences between models with injective aggregation and their associated tests follow identically from the proof of equivalence between CWL and CWN in \citep{bodnar2021weisfeilercell}.

\begin{prop}
    fCWL is at least as expressive as sCWL.
\end{prop}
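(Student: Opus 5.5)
We show that the fCWL coloring $f^t$ refines the sCWL coloring $s^t$ at every step, that is, $f^t \subseteq s^t$ for all $t \in \mathbb N$, by induction on $t$. Comparing the two update rules shows that fCWL reproduces every message sCWL uses. For a non-node cell $\sigma$, both tests use exactly the same inputs $(c_\sigma^t, c_{\mathcal B(\sigma)}^t, c_{\mathcal C(\sigma)}^t)$. For a node $i$, the boundary $\mathcal B(i)$ is empty, since nodes are $0$-cells. So the sCWL node update is $\text{HASH}(c_i^t, \emptyset, c_{\mathcal C(i)}^t)$, while fCWL uses $\text{HASH}(c_i^t, c_{\mathcal C(i)}^t, c_{\mathcal N(i)}^t)$, which contains strictly more information. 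No doubling of time steps is needed here, unlike in the proof of Theorem~\ref{thm:expressivity_scwl}.

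\textit{Base case.} $f^0 \subseteq s^0$ holds trivially, since all cells receive the same initial color in both tests.

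\textit{Inductive step.} Assume $f^t \subseteq s^t$, and take $\sigma_1 \in X$, $\sigma_2 \in Y$ with $f^{t+1}_{\sigma_1} = f^{t+1}_{\sigma_2}$.
\begin{itemize}
\item First we check that $\sigma_1$ and $\sigma_2$ are of the same kind, both nodes or both non-nodes. The cleanest way is to let HASH carry a tag for which rule produced the color, since fCWL uses a different update for nodes than for other cells. Alternatively, one can fold the dimension into the initial color; this does not weaken the statement, because sCWL colors distinguish $0$-cells from other cells after one step via the empty boundary.
\item By injectivity of HASH, we get $f^t_{\sigma_1} = f^t_{\sigma_2}$ and equality of the multisets $f^t_{\mathcal C(\sigma_1)} = f^t_{\mathcal C(\sigma_2)}$. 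For non-node cells we also get $f^t_{\mathcal B(\sigma_1)} = f^t_{\mathcal B(\sigma_2)}$; for nodes, both boundaries are empty.
\item Proposition~\ref{prop:include_col_hash} together with the induction hypothesis transfers these equalities to the coloring $s^t$. We simply drop the extra $\mathcal N$ multiset.
\item Applying the sCWL update then gives $s^{t+1}_{\sigma_1} = s^{t+1}_{\sigma_2}$.
\end{itemize}

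It follows that $f \subseteq s$ at the stable colorings. The corollary after Proposition~\ref{prop:include_col_hash} then yields that any two complexes distinguished by sCWL are also distinguished by fCWL. Combined with Theorem~\ref{thm:expressivity_scwl}, this gives that fCWL is at least as expressive as CWL. For WL, a separate induction shows that the node colors of fCWL refine the WL colors, because the $\mathcal N(i)$ multiset is exactly the WL update. This is where the node-set-preserving hypothesis enters.

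The main obstacle is the bookkeeping in the first point of the inductive step: ensuring that colors coming from the two different fCWL update rules cannot collide. Otherwise a node and a non-node cell could share an fCWL color but not an sCWL color. I would handle this explicitly with the tagged-HASH convention, arguing that it is harmless since dimension is an isomorphism invariant.
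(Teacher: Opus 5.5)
Your proof is correct and follows the same route as the paper: an induction on $t$ showing that the fCWL coloring refines the sCWL coloring. You unpack $f^{t+1}$ by injectivity of HASH and transfer the equalities to $s^t$ via Proposition~\ref{prop:include_col_hash}, with no step-doubling needed. It is more complete than the paper's argument, which only treats pairs $(\sigma_1,\sigma_2)\in\mathcal X_1\times\mathcal X_2$ of non-node cells and skips both the node case and the node/non-node color collisions that your tagged-HASH convention rules out. Keep the tag rather than your alternative of folding dimension into the initial color: that alternative changes fCWL itself, and your justification for it only concerns sCWL.
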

\begin{proof}
    $(\mathcal V_1, \mathcal X_1)$ and $(\mathcal V_2, \mathcal X_2)$ correspond to two cell complexes that keep node sets.

    Let $a^t$ denote the coloring at step $t$ using sCWL, and $b^t$ the coloring at step $t$ using fCWL.

    We prove by induction that $b^t \subseteq a^t$.

    \textit{Base case.} $b^0 \subseteq a^0$ since both follow the same initialization scheme.

    \textit{Induction step.} Assume $b^t \subseteq a^t$. We show that $b^{t+1} \subseteq a^{t+1}$.

    Let $(\sigma_1, \sigma_2) \in \mathcal X_1 \times \mathcal X_2$ such that $b^{t+1}_{\sigma_1} = b^{t+1}_{\sigma_2}$.  
    By construction, we have:
    $$
        b^{t}_{\sigma_1} = b^{t}_{\sigma_2}, 
        \quad 
        b^{t}_{\mathcal B}(\sigma_1) = b^{t}_{\mathcal B}(\sigma_2), \quad b^t_\mathcal C(\sigma_1) = b^t_\mathcal C(\sigma_2).
    $$
    Using Proposition~\ref{prop:include_col_hash} with the induction hypothesis, it follows that:
    $$
        a^{t}_{\sigma_1} = a^{t}_{\sigma_2},
        \quad 
        a^{t}_{\mathcal B}(\sigma_1) = a^{t}_{\mathcal B}(\sigma_2), \quad a^t_\mathcal C(\sigma_1) = a^t_\mathcal C(\sigma_2)
    $$
    \textit{i.e.}, $a^{t+1}_{\sigma_1} = a^{t+1}_{\sigma_2}$.

    By induction, $b^t \subseteq a^t$ for all $t \in \mathbb N$, hence $b \subseteq a$.
\end{proof}

Since sCWL is as expressive as CWL (Theorem~\ref{thm:expressivity_scwl}), it follows as a corollary that fCWL is at least as expressive as CWL.

\begin{prop}
    fCWL is at least as expressive as WL
\end{prop}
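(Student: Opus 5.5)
We argue by induction, in the same way as the preceding proposition. Let $(\mathcal G_1,\mathcal X_1)$ and $(\mathcal G_2,\mathcal X_2)$ be two graphs with their lifted complexes, and suppose the lifting preserves the node set, so that $\mathcal V_1\subseteq \mathcal X_1$ and $\mathcal V_2\subseteq \mathcal X_2$ as the $0$-cells. Let $w^t$ denote the WL coloring of nodes at step $t$ and $b^t$ the fCWL coloring of all cells. The claim we prove by induction on $t$ is
\[
b^t_{i} = b^t_{j} \;\implies\; w^t_{i} = w^t_{j}
\qquad\text{for all } i\in\mathcal V_1,\ j\in\mathcal V_2 .
\]
This is refinement $b^t\subseteq w^t$ restricted to $0$-cells.

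\textit{Base case.} At $t=0$ all nodes carry the same color under both tests, so the claim holds.

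\textit{Inductive step.} Assume the claim at step $t$, and take nodes $i,j$ with $b^{t+1}_i=b^{t+1}_j$. The fCWL node update is injective in $(c_i^t, c_{\mathcal C(i)}^t, c_{\mathcal N(i)}^t)$, so we get $b^t_i=b^t_j$ and also
\[
\{\!\{b^t_k : k\in\mathcal N(i)\}\!\}=\{\!\{b^t_k : k\in\mathcal N(j)\}\!\},
\]
where $\mathcal N(\cdot)$ is the graph neighborhood. Both multisets consist only of node colors. The induction hypothesis gives $b^t\subseteq w^t$ on nodes, so Proposition~\ref{prop:include_col_hash}, applied with $A=\mathcal N(i)$ and $B=\mathcal N(j)$, turns this into equality of the WL neighbor multisets. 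Together with $w^t_i=w^t_j$, the WL update gives $w^{t+1}_i=w^{t+1}_j$.

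The one step needing care is the cross-type issue: a node and a non-node cell could receive the same fCWL color. We handle it in one of two ways. The first option is to tag each cell's hash with its dimension, which preserves the colors' refinement power. The second is to note that nodes always use the node-update branch, so the HASH outputs of the two branches can be taken to lie in disjoint ranges. With either convention, equality of fCWL histograms forces equality of the node-restricted sub-histograms.

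\textit{Conclusion.} By induction, $b^t\subseteq w^t$ on nodes for every $t$. If WL distinguishes $\mathcal G_1$ from $\mathcal G_2$, their node histograms $\{\!\{w_i\}\!\}$ differ at some step $t$. Since $b^t\subseteq w^t$ on nodes, the contrapositive of Proposition~\ref{prop:include_col_hash} shows the fCWL node sub-histograms also differ. The full fCWL histograms therefore differ, which means fCWL is at least as expressive as WL.
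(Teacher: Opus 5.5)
Your proof is correct and follows the paper's argument essentially step for step. It is an induction showing that the fCWL coloring restricted to $0$-cells refines the WL coloring: injectivity of the node update gives $b^t_i=b^t_j$ and equal neighbor multisets, and then Proposition~\ref{prop:include_col_hash} together with the WL update closes the step. Your remarks on keeping node colors separate from non-node colors, and on passing from node-level refinement to differing histograms, make explicit two points the paper's proof leaves implicit.
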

\begin{proof}
    
$(\mathcal V_1, \mathcal X_1)$ and $(\mathcal V_2, \mathcal X_2)$ correspond to two cell complexes that keep node sets.

    Let $a^t$ denote the coloring of nodes at step $t$ using WL, $b^t$ the coloring of cells at step $t$ using fCWL, and $b^t_{\mathcal V}$ the coloring of nodes in the cell complex colored at step $t$ by fCWL.

    We prove by induction that $b^t_{\mathcal V} \subseteq a^t$ on the nodes.

    \textit{Base case.} $b^0 \subseteq a^0$ since both initialize all nodes with the same constant color.

    \textit{Induction step.} Assume $b^t_{\mathcal V} \subseteq a^t$ on nodes. We show that $b^{t+1}_{\mathcal V} \subseteq a^{t+1}$.

    Let $(i_1, i_2) \in \mathcal V_1 \times \mathcal V_2$ such that $b^{t+1}_{i_1} = b^{t+1}_{i_2}$.

    We have:
        $$b^{t}_{i_1} = b^{t}_{i_2}, 
        \quad 
        b^{t}_{\mathcal C(i_1)} = b^{t}_{\mathcal C(i_2)}, 
        \quad 
        b^{t}_{\mathcal N(i_1)} = b^{t}_{\mathcal N(i_2)}.$$
    Using the induction hypothesis: $a^t_{i_1} = a^t_{i_2}$.
    as $b^{t}_{\mathcal N(i_1)} = b^{t}_{\mathcal N(i_2)}$, we can only consider the color of the first component, we get:
    $$\{\{b_j^t, \quad j \in \mathcal N(i_1)\}\} = \{\{b_j^t, \quad j \in \mathcal N(i_2)\}\},$$
    \textit{i.e.}, by using proposition \ref{prop:include_col_hash} and the induction hypothesis: 
    $$\{\{a_j^t, \quad j \in \mathcal N(i_1)\}\} = \{\{a_j^t, \quad j \in \mathcal N(i_2)\}\}.$$
    From WL update, we get $a_{i_1}^{t+1} = a_{i_2}^{t+1}$.    

    By induction. $b^t_{\mathcal V} \subseteq a^t$ for all $t \in \mathbb N$, thus $b_{\mathcal V} \subseteq a$.
\end{proof}

\subsection{Proof of Proposition \ref{prop:model_complexity}}

In this section, we analyse the theoretical time and memory complexity of CWN, fCWN, and sCWN on maximal clique complexes.
We first remind some notations:
\begin{itemize}
    \setlength\itemsep{0pt}
    \item $\mathcal V$ represents the set of nodes
    \item $n$ is the number of nodes of our graphs
    \item $\mathcal N_i$ represents the neighborhood of node $i$. 
    \item $\mathcal X$ is the set of maximal cliques.
\end{itemize}
We now detail one by one each message passing scheme's complexity. 

\textbf{Boundary messages.} Each node in the graph sends a message to the clique containing it. The total number of messages sent is: 
$$|\{(i,\sigma) \in \mathcal V \times \mathcal X, \quad i \in \sigma \}| = \underset{(i,\sigma) \in \mathcal V \times \mathcal X}{\sum} \mathbb 1_{i \in \sigma} = \underset{\sigma \in \mathcal X}{\sum} \underset{i \in \mathcal V}{\sum} \mathbb 1_{i \in \sigma} = \underset{\sigma \in \mathcal X}{\sum} |\sigma|.$$

\textbf{Co-boundary messages.} Each clique sends a message to each node it contains.
The total number of messages sent is: 
$$|\{(i,\sigma) \in \mathcal V \times \mathcal X, \quad i \in \sigma \}|  = \underset{\sigma \in \mathcal X}{\sum} |\sigma|.$$

\textbf{Upper-adjacency CWN.} Each node $i$ aggregate message for all tuple $(j,\sigma)$ such that $\{i,j\} \subset \sigma$. The total number of messages sent is:
\begin{align*}
\sum_{i \in \mathcal V} 
   \bigl|\{(j,\sigma) \in \mathcal V \times \mathcal X : \{i,j\} \in \sigma\}\bigr|
&= \sum_{i \in \mathcal V}\sum_{j \in \mathcal V}\sum_{\sigma \in \mathcal X} 
   \mathbb{1}_{\{i,j\} \subset \sigma} \\[6pt]
&= \sum_{\sigma \in \mathcal X}\sum_{i \in \mathcal V}\sum_{j \in \mathcal V} 
   \mathbb{1}_{\{i,j\} \subset \sigma} \\[6pt]
&= \sum_{\sigma \in \mathcal X} 
   \bigl|\{(i,j) \in \mathcal V^2 : \{i,j\} \subset \sigma\}\bigr| \\[6pt]
&= \sum_{\sigma \in \mathcal X} \binom{|\sigma|}{2} \\[6pt]
&= \sum_{\sigma \in \mathcal X} \dfrac{|\sigma|^2 - |\sigma|}{2}.
\end{align*}

\textbf{Upper-adjacency fCWN.} For each tuple $(i,\sigma) \in \mathcal V \times \mathcal X$ we create a message. Then we do an adjacency update. The total number of messages is the sum of each: 
$$\underset{(i,\sigma) \in \mathcal V \times \mathcal X}{\sum}\mathbb 1_{i \in \sigma} + \underset{l \in \mathcal N_i}{\sum}1 = \underset{\sigma \in \mathcal X}{\sum} |\sigma| + |\mathcal E|.$$

We can now finish the proof of proposition \ref{prop:model_complexity}.

\textbf{CWN.} Every message passes through an MLP $M_\uparrow$. The memory complexity is the same as the number of messages plus the data on the node and cliques:
\begin{itemize}
    \setlength\itemsep{-1pt}
    \item Time complexity : $\mathcal O(\underset{\sigma \in \mathcal X}{\sum} |\sigma|^2)$.
    \item Memory complexity : $\mathcal O(n +\underset{\sigma \in \mathcal X}{\sum} |\sigma|^2)$.
\end{itemize}

\textbf{fCWN.} Only the first messages go through an MLP $M_\uparrow$. 
\begin{itemize}
    \setlength\itemsep{-1pt}
    \item Time complexity : $\mathcal O(\underset{\sigma \in \mathcal X}{\sum} |\sigma| + |\mathcal E|)$.
    \item Memory complexity : $\mathcal O(n + \underset{\sigma \in \mathcal X}{\sum} |\sigma|)$.
\end{itemize}
\textbf{sCWN.} Here, MLPs are only applied to node or clique data. The messages are based on boundary and co-Boundary.
\begin{itemize}
    \setlength\itemsep{-1pt}
    \item Time complexity : $\mathcal O(\underset{\sigma \in \mathcal X}{\sum}|\sigma|)$.
    \item Memory complexity : $\mathcal O(n + |\mathcal X|)$.
\end{itemize}

\paragraph{Summary.}  
For clarity, we summarize below:
\begin{center}
\begin{tabular}{lcc}
\toprule
Model & Time Complexity & Memory Complexity \\
\midrule
CWN   & $\displaystyle \mathcal O(\sum_{\sigma \in \mathcal X} |\sigma|^2)$ & $\displaystyle \mathcal O(n + \sum_{\sigma \in \mathcal X} |\sigma|^2)$ \\
fCWN  & $\mathcal O(\displaystyle \sum_{\sigma \in \mathcal X} |\sigma| + |\mathcal E|)$ & $ \displaystyle \mathcal O(n + \sum_{\sigma\in \mathcal X} |\sigma|)$ \\
sCWN  & $\displaystyle\mathcal O(\underset{\sigma \in \mathcal X}{\sum}|\sigma|)$ & $\displaystyle\mathcal O(n + |\mathcal X|)$ \\
\bottomrule
\end{tabular}
\label{tab:mem_time_comp}
\end{center}

\subsection{Proof of Proposition \ref{prop:maximal_clique}}
We show that at every step of Algorithm~\ref{alg:cliquewalk}, the nodes in the walk always form a clique.  

\textbf{Notations.} Let $\text{Walk}_t$ denote the nodes in the walk at step $t$, and $\text{neighbor}_t$ the set of nodes that can be added next. We claim that:
\[
\text{neighbor}_t = \{ l \in \mathcal V ,\quad l \sim j \ \forall j \in \text{Walk}_t \},
\]
\textit{i.e.}, $\text{neighbor}_t$ contains exactly the nodes connected to all nodes in the current walk.

\textbf{Induction.}

\textit{Base case.}  
Initially, $\text{Walk}_0 = [i]$ and $\text{neighbor}_0 = \mathcal N_i$.  
By definition, $\mathcal N_i$ contains all nodes connected to $i$, i.e., all nodes that form a clique with $\text{Walk}_0$.  
Thus, the property holds at the first step.

\textit{Inductive step.}  
Assume the property holds at step $t$, and let $j_{\text{new}} \in \text{neighbor}_t$ be the next node added to the walk.  
The neighbor set is updated as
\[
\text{neighbor}_{t+1} = \text{neighbor}_t \cap \mathcal N_{j_{\text{new}}}.
\]  
By construction, $\text{neighbor}_{t+1}$ contains only nodes connected to $j_{\text{new}}$ and to all nodes in $\text{Walk}_t$, i.e., nodes connected to all nodes in 
\[
\text{Walk}_{t+1} = \text{Walk}_t \cup \{ j_{\text{new}} \}.
\]  
The property holds at step $t+1$.

\textbf{Conclusion.}  
By induction, all nodes in the walk are connected to each other, \textit{i.e.}, the walk always forms a clique.
Since the walk is a clique, its size cannot exceed $\omega(G)$, the size of the largest clique in the graph.  
Therefore, the walk can only stop when $\text{neighbor}_t$ becomes empty, \textit{i.e.}, when there is no node that can be added to extend the clique.  
As a result, the clique produced by the walk is maximal with respect to set inclusion.

\subsection{Proof of Proposition \ref{prop:complexity_cliquewalk}}

CliqueWalk builds a maximal clique by growing it step by step.  
At each step, the algorithm: (i) samples a neighbor, (ii) intersects the neighborhoods of the current and newly visited node to restrict the walk, and (iii) continues until either the walk length reaches $\omega_{\text{max}}$ or it cannot be expanded.

We can now break down the cost of one walk:

\begin{enumerate}[label=(\roman*), leftmargin=0.7cm]
    \setlength\itemsep{-1pt}
    \item \textit{Neighbor sampling.}  
    Selecting a random neighbor is constant-time: $O(1)$.

    \item \textit{Neighborhood intersection.}  
    Intersecting two neighborhoods $A$ and $B$ takes $O(|A| + |B|)$.  
    Since each neighborhood is bounded by the maximum degree $d_{\max}(G)$, this step costs at most $O(d_{\max}(G))$.

    \item \textit{Walk length.}  
    The maximum length of a walk is bounded by
    \[
    L \leq \max\bigl(\omega(G), \, \omega_{\max}\bigr),
    \]
    where $\omega(G)$ is the maximum clique size of the graph and $\omega_{\max}$ is the cutoff imposed by the algorithm.
\end{enumerate}

The complexity of one CLiqueWalk is thus :
$$\displaystyle\mathcal O (\underset{j =0}{\overset{L}{\sum}} d_{\text{max}(G)}) = \mathcal O \left (d_{\text{max}}(G) \cdot \max(\omega(G), \omega_{\text{max}}) \right ).$$

As we launch from each node $n_{\text{walks}}$ walks,  the total complexity is
\[
O\!\left(n \cdot n_{\text{walks}} \cdot d_{\max}(G) \cdot \max(\omega(G), \, \omega_{\max})\right).\qed
\]
\begin{figure}[h!]
  \centering
  \includegraphics[width=0.7\textwidth]{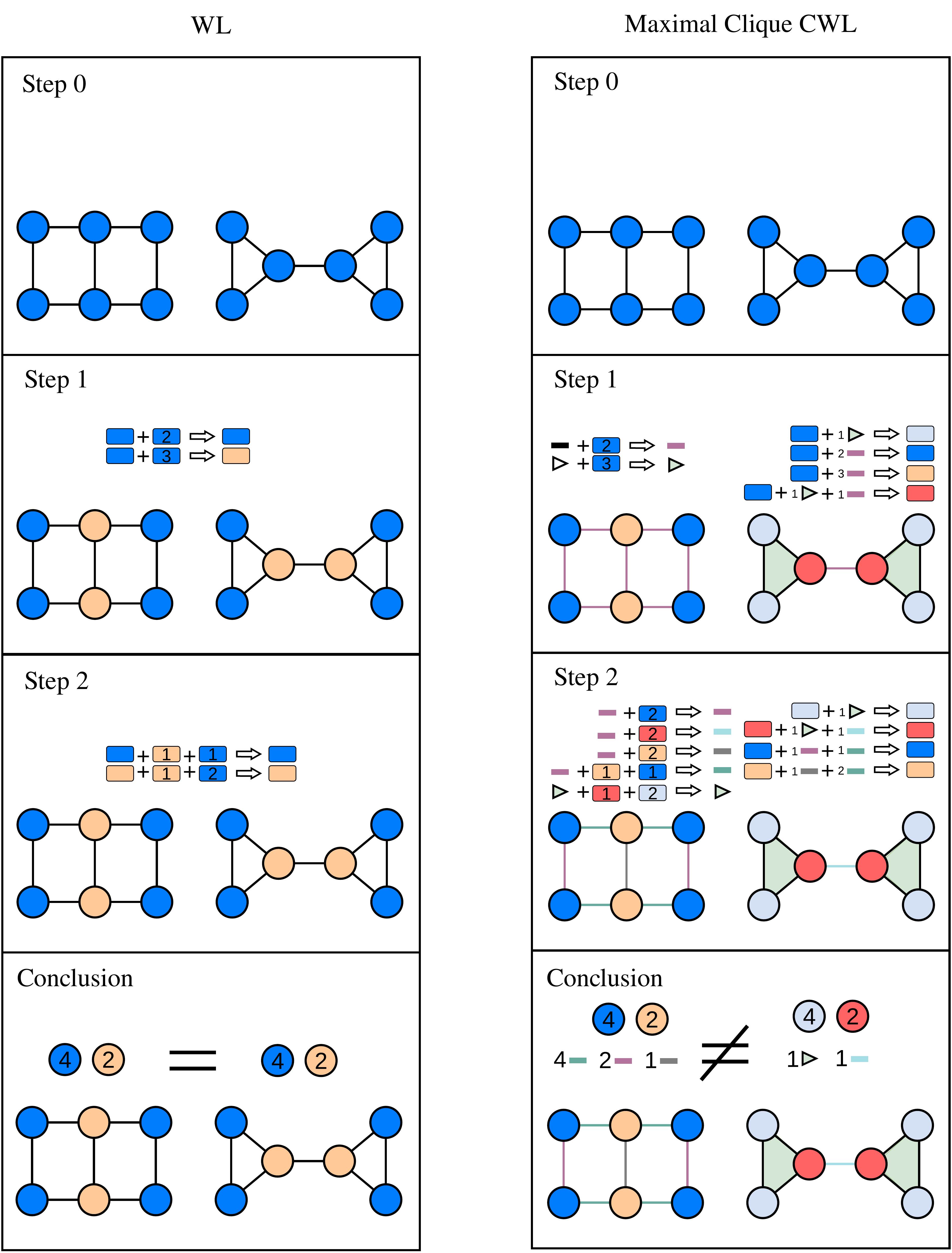}
  \caption{{Illustration of the {WL} and maximal clique {CWL} test. At each iteration, every node updates its color based on its own color and the colors of its neighboring structures (see Steps~1 and~2). After Step~2, the colors become stable (\ie invariant under further updates), and the algorithm stops.  A histogram of colors is then computed. Since the two graphs produce identical histograms for {WL}, the test cannot distinguish between them, and the WL test is therefore inconclusive. In contrast, the maximal-clique CWL algorithm yields different histograms for the two graphs, allowing us to conclude that they are not isomorphic.}}
  \label{fig:CWL_ex}
\end{figure}

\newpage

\section{Conjecture discussion}
\label{app:conj}

Let $G = (\mathcal{V}, \mathcal{E})$ be a graph and $\mathcal{X}$ the set of its maximal cliques. From Theorem~2, Cellular Weisfeiler--Lehman (CWL) is equivalent to its simplified variant (sCWL) on the same complex. This yields an equivalent formulation as 1-WL applied to the bipartite incidence graph
\[
B = (\mathcal{V} \cup \mathcal{X}, \mathcal{E}_B),
\]
where $(v, C) \in \mathcal{E}_B$ if and only if $v \in C$.

Under this formulation, node colors evolve as
\[
c_v^{t+1} = \mathrm{HASH}\Big(c_v^t,\; \{\!\{ c_{\sigma}^t \mid \sigma \in \mathcal{X}, v \in \sigma \}\!\}\Big),
\]
and clique colors are defined by
\[
c_C^t = \mathrm{HASH}\big(\{\!\{ c_u^{t-1} \mid u \in C \}\!\}\big).
\]

\paragraph{A Sufficient Condition for CWL $\supseteq$ 1-WL}

A key question is whether CWL is at least as expressive as 1-WL. This depends on whether neighborhood information can be recovered from clique-based aggregation.

\begin{defi}[C2N Identification Property]
A vertex $v \in \mathcal{V}$ satisfies the \textbf{C2N identification property} if the multiset of neighbor colors
\[
\{\!\{ c_u \mid u \in \mathcal{N}(v) \}\!\}
\]
is uniquely determined by the multiset of colors of maximal cliques containing $v$,
\[
\{\!\{ c_C \mid v \in C \}\!\}.
\]
\end{defi}

\begin{prop}
If every vertex in two graphs $G$ and $G'$ satisfies the C2N identification property, then CWL is at least as expressive as 1-WL for distinguishing $G$ and $G'$.
\end{prop}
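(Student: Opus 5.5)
The plan is to run the standard refinement induction used in the proofs of Theorem~\ref{thm:expressivity_scwl} and Theorem~\ref{thm:expressivity_fcwl}, this time comparing CWL on the maximal clique complex with 1-WL on the graph. Let $a^t$ be the 1-WL node coloring and $b^t$ the CWL (equivalently, by Theorem~\ref{thm:expressivity_scwl}, sCWL, i.e.\ 1-WL on the incidence graph $B$) coloring. Because a clique update consumes one round, I would compare a slowed-down CWL schedule, $b^{2t}_{\mathcal V} \subseteq a^t$ on nodes, exactly as the $b^{2t}$ versus $a^t$ shift was handled in the proof of Theorem~\ref{thm:expressivity_scwl}. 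Once this holds for all $t$, the stable colorings satisfy $b_{\mathcal V} \subseteq a$. Proposition~\ref{prop:include_col_hash} (via its corollary) then gives: if CWL does not separate $G$ and $G'$, neither does 1-WL.

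The base case is immediate, since both start from a constant color. For the inductive step, take nodes $v_1 \in G$ and $v_2 \in G'$ with $b^{2t+2}_{v_1} = b^{2t+2}_{v_2}$. Unfolding the hash gives $b^{2t+1}_{v_1} = b^{2t+1}_{v_2}$ and equality of the multisets $\{\!\{ b^{2t+1}_C : v_i \in C\}\!\}$. The induction hypothesis then yields $a^t_{v_1} = a^t_{v_2}$. Next, I apply the C2N identification property to the colors at the appropriate level: the multiset of clique colors around $v_i$ uniquely determines $\{\!\{ b^{2t}_u : u \in \mathcal N(v_i)\}\!\}$. Hence these neighbor multisets agree. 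Applying Proposition~\ref{prop:include_col_hash} with the hypothesis $b^{2t}\subseteq a^t$ converts this into $\{\!\{ a^t_u : u\in\mathcal N(v_1)\}\!\} = \{\!\{ a^t_u : u\in\mathcal N(v_2)\}\!\}$. Together with $a^t_{v_1}=a^t_{v_2}$, the 1-WL update gives $a^{t+1}_{v_1}=a^{t+1}_{v_2}$.

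The main obstacle is interpreting the C2N property so that it is usable in this induction. It must hold \emph{uniformly}: for every iteration $t$, the same clique-color multiset must determine the same neighbor-color multiset across both graphs, so the determination has to be a function independent of the vertex and of the graph. It must also hold at the matching time index, since clique colors at step $2t+1$ are computed from node colors at step $2t$. I would state explicitly that the property is read this way, as a map $\Phi_t$ with $\{\!\{ b^{2t}_u : u\in\mathcal N(v)\}\!\} = \Phi_t(\{\!\{ b^{2t+1}_C : v\in C\}\!\})$ for all $v$ in $G\cup G'$. This is the natural reading of ``uniquely determined''. Under that reading the step above is a direct substitution.

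Finally, node-level refinement must be lifted to the histogram conclusion. Equal CWL histograms give equal node-color multisets, since node cells and clique cells can be told apart by their colors from step one onward (nodes have no boundary). Proposition~\ref{prop:include_col_hash} then gives equal 1-WL histograms, which is exactly the claim.
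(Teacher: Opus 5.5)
Your proposal is correct and follows the same route as the paper's proof sketch: an induction over refinement steps in which the C2N property converts the multiset of incident clique colors into the neighbor-color multiset, after which Proposition~\ref{prop:include_col_hash} and the 1-WL update close the step. Your version is more careful than the sketch. The sketch leaves the time indexing loose and leaves implicit the uniform (cross-vertex, cross-graph) reading of C2N that the argument needs; your $b^{2t}$ versus $a^t$ schedule and your map $\Phi_t$ make both explicit.
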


\begin{proof}[Proof sketch]
The proof proceeds by induction over refinement steps. Assuming CWL colors are at least as refined as 1-WL at step $t$, the C2N property ensures that clique-based aggregation at step $t+2$ uniquely determines the 1-WL neighborhood histogram at step $t$, thereby recovering the 1-WL update at step $t+1$.
\end{proof}

\paragraph{A Structural Regime Where C2N Holds.} For graphs with simple clique structures, we can naturally show that the C2N property.

\begin{prop}
If every edge $(u,v) \in \mathcal{E}$ belongs to exactly one maximal clique, then the C2N identification property holds for all vertices.
\end{prop}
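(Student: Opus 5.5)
The plan is to show that, when every edge lies in exactly one maximal clique, the neighborhood of $v$ is partitioned by the maximal cliques containing $v$. Then the multiset of neighbor colors can be read off from the clique colors, because each clique color (by its definition as a hash of member colors) encodes the multiset of its members' colors.

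\textbf{Step 1 (partition).} Fix $v$ and let $C_1,\dots,C_k$ be the maximal cliques containing $v$. Every neighbor $u\in\mathcal N(v)$ gives an edge $(u,v)$. That edge is contained in some maximal clique, since any clique extends to a maximal one, and this clique necessarily contains $v$. By hypothesis the edge lies in exactly one maximal clique. Hence
\[
\mathcal N(v)=\bigsqcup_{i=1}^k \bigl(C_i\setminus\{v\}\bigr),
\]
and the union is disjoint. Conversely, every $u\in C_i\setminus\{v\}$ is adjacent to $v$, since $C_i$ is a clique. An isolated vertex has no neighbors and, by convention, is its own maximal clique $\{v\}$; that case is trivial.

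\textbf{Step 2 (reading colors).} Since HASH is injective, $c_{C_i}^t=\mathrm{HASH}(\{\!\{c_u^{t-1}: u\in C_i\}\!\})$ determines the multiset $M_i=\{\!\{c_u^{t-1}:u\in C_i\}\!\}$. Removing one copy of $c_v^{t-1}$ from each $M_i$ yields the colors of $C_i\setminus\{v\}$. Taking the multiset union over $i$ then gives, by Step 1,
\[
\{\!\{c_u^{t-1}:u\in\mathcal N(v)\}\!\}=\biguplus_{i}\bigl(M_i\setminus\{\!\{c_v^{t-1}\}\!\}\bigr).
\]
The right-hand side is a function of $\{\!\{c_C^t: v\in C\}\!\}$ together with $c_v^{t-1}$, and $c_v^{t-1}$ is recoverable from $v$'s own refined color $c_v^t$, because $c_v^t$ hashes $c_v^{t-1}$.

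\textbf{Main obstacle.} The delicate point is the exact meaning of ``uniquely determined'' in the C2N definition. The argument needs the vertex's own previous color to subtract it, and the neighbor colors obtained are at the step preceding the clique colors. I would state the property with $c_v$ included as side information and with the appropriate time indices. This matches the proof sketch of the preceding proposition, which allows a step offset of $t+2$ versus $t$.

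With that convention fixed, uniqueness is immediate. If two vertices (possibly in $G$ and $G'$) have equal own colors and equal clique-color multisets, then a bijection between their clique multisets that preserves colors also preserves the member multisets $M_i$. By the display above, their neighbor-color multisets coincide.
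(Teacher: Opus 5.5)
Your proposal is correct and follows essentially the paper's argument: you partition $\mathcal N(v)$ into the sets $C\setminus\{v\}$ over the maximal cliques $C$ containing $v$, recover each member multiset from the clique color by injectivity of the hash, and take the multiset union after removing $v$. Your explicit use of $c_v$ as side information, together with the time offset, makes precise a step the paper performs silently when it writes ``excluding $v$'': that step in fact requires the two colorings being compared to agree on $v$.
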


\begin{proof}
In this setting, the neighborhood of a vertex $v$ decomposes as a disjoint union of the maximal cliques containing $v$ (excluding $v$ itself). Let $\mathcal{X}_v = \{\sigma \in \mathcal{X} \mid v \in \sigma\}$.

Suppose two node colorings $c$ and $c'$ are consistent with the same maximal clique coloring:
\[
\{\!\{ c_\sigma \mid \sigma \in \mathcal{X}_v \}\!\}
=
\{\!\{ c'_\sigma \mid \sigma \in \mathcal{X}_v \}\!\}.
\]
Substituting the definition of clique colors,
\[
\{\!\{ \{\!\{ c_w \mid w \in \sigma \}\!\} \mid \sigma \in \mathcal{X}_v \}\!\}
=
\{\!\{ \{\!\{ c'_w \mid w \in \sigma \}\!\} \mid \sigma \in \mathcal{X}_v \}\!\}.
\]
Taking the union over all such cliques (excluding $v$) yields
\[
\bigcup_{\sigma \in \mathcal{X}_v} \{\!\{ c_w \mid w \in \sigma \setminus \{v\} \}\!\}
=
\bigcup_{\sigma \in \mathcal{X}_v} \{\!\{ c'_w \mid w \in \sigma \setminus \{v\} \}\!\},
\]
which implies
\[
\{\!\{ c_w \mid w \in \mathcal{N}(v) \}\!\}
=
\{\!\{ c'_w \mid w \in \mathcal{N}(v) \}\!\}.
\]
Thus, the neighborhood color histogram is uniquely determined.
\end{proof}

\paragraph{Clique Partition Ambiguity.} The C2N property does not hold in general. At $n=9$, we identify a structural failure where overlapping cliques obscure neighborhood information.

In particular, there exist vertices that:
\begin{itemize}
    \item are distinguished by 1-WL through different neighbor color histograms,
    \item but receive identical CWL colors due to identical multisets of incident clique colors.
\end{itemize}

We refer to this phenomenon as \textbf{clique partition ambiguity}. It shows that recovering neighborhood information from clique structure is not always locally possible, preventing a direct inductive proof that CWL implies 1-WL.

\paragraph{Empirical Evaluation for $n \leq 9$.} We performed an exhaustive evaluation over all connected, non-isomorphic graphs up to $n=9$ (261,080 graphs in total).

\begin{table}[h]
\centering
\caption{Isomorphism resolution results for 1-WL and CWL.}
\begin{tabular}{crcc}
\toprule
$n$ & Total Graphs & 1-WL Identifies & CWL Identifies \\
\midrule
1--5 & 31 & 31 & 31 \\
6 & 112 & 109 & 112 \\
7 & 853 & 836 & 853 \\
8 & 11,117 & 10,897 & 11,106 \\
9 & 261,080 & 258,632 & 260,960 \\
\bottomrule
\end{tabular}
\end{table}

CWL consistently matches or exceeds the distinguishing power of 1-WL and strictly improves upon it for $n \geq 6$. Importantly, we observe no counterexample where 1-WL distinguishes two graphs that CWL fails to separate.

These results provide strong empirical evidence for the conjecture
\[
\mathrm{CWL} \;\supseteq\; \mathrm{1\text{-}WL}.
\]
From a theoretical perspective, the conjecture remains non-trivial. CWL operates over maximal cliques rather than direct neighborhoods, and clique partition ambiguity shows that neighborhood information is not always locally recoverable. 

Nevertheless, such local ambiguities do not appear to translate into global failures for graph isomorphism in our experiments. Establishing or refuting the conjecture in full generality remains an interesting non trivial open problem.

\section{Maximal Clique CWL\label{app:cwl_maximal}}

{We propose some experiments and illustrations to better understand the maximal clique CWL and its differences with WL. See Figure~\ref{fig:CWL_ex}.}
It is known that CWL is more expressive than WL when using cell lifting methods that preserve the full node and edge sets of the graph~\citep{bodnar2021weisfeilercell}. However, since we only consider maximal cliques and remove edges from the representation, we no longer have this guarantee over WL.


We introduce two simple coloring schemes to make sense of CWL's expressive power.
\begin{table}
\centering
\caption{Number of distinct hashes found by each method on graph classification datasets. Abbreviated dataset names: ENZ = ENZYMES, FRANK = FRANKENSTEIN, IMDB-B = IMDB-BINARY, IMDB-M = IMDB-MULTI, PROT = PROTEINS, ALC = alchemy\_full.}
\label{tab:distinct_hashes}
\resizebox{0.8\linewidth}{!}{%
\begin{tabular}{lccccccccc}
    \toprule
    \textbf{Method} & \textbf{DD} & \textbf{ENZ} & \textbf{FRANK} & \textbf{IMDB-B} & \textbf{IMDB-M} & \textbf{NCI1} & \textbf{PROT} & \textbf{ALC} \\
    \midrule
    1WL & 1178 & 595 & 2766 & 537 & 387 & 3837 & 996 & 12343 \\
    CWL & 1178 & 595 & 2767 & 537 & 387 & 3837 & 996 & 12396 \\
    CountClique & 1178 & 547 & 216 & 432 & 309 & 254 & 799 & 23 \\
    TopoCount & 1178 & 595 & 1272 & 537 & 387 & 2188 & 992 & 727 \\
    \bottomrule
\end{tabular}%
}
\end{table}

\begin{table}
\centering
\caption{Number of graphs in each strongly regular family.}
\label{tab:sr_num_graphs}
\begin{minipage}[t]{0.45\linewidth}
\centering
\begin{tabular}{lc}
\toprule
\textbf{Family} & \textbf{Number of graphs} \\
\midrule
16-6-2-2 & 2 \\
25-12-5-6 & 15 \\
26-10-3-4 & 10 \\
28-12-6-4 & 4 \\
29-14-6-7 & 41 \\
35-18-9-9 & 3854 \\
\bottomrule
\end{tabular}
\end{minipage}%
\begin{minipage}[t]{0.45\linewidth}
\centering
\begin{tabular}{lc}
\toprule
\textbf{Family} & \textbf{Number of graphs} \\
\midrule
36-14-4-6 & 180 \\
40-12-2-4 & 28 \\
45-12-3-3 & 78 \\
50-21-8-9 & 18 \\
64-18-2-6 & 167 \\
\bottomrule
\end{tabular}
\end{minipage}
\end{table}

\begin{defi}
    The \textbf{CountClique} test hashes the set of all clique lengths.
\end{defi}
\begin{defi}
    The \textbf{TopoCount} test assigns a unique color to each node by hashing the set of lengths of the cliques containing it.
\end{defi}
It is clear that CWL is at least as expressive as TopoCount and CountClique.

We empirically compare the expressivity of CWL, WL, and other tests on various datasets.
Table~\ref{tab:distinct_hashes} shows the number of distinct hashes produced by each method. 
CWL matches or slightly exceeds WL in most cases.
For several datasets~\citep{dobson2003distinguishing, chen2019alchemyquantumchemistrydataset, Orsini2015frankenstein}, access to clique neighborhood information allows CWL to distinguish more graphs.
For chemical datasets such as \textit{alchemy\_full}, WL schemes produce significantly more hashes than one-shot methods like TopoCount, highlighting the benefit of multi-layer models on those datasets.

\begin{figure}[h!]
    \centering
    \begin{subfigure}[b]{0.48\textwidth}
        \centering
        \includegraphics[width=0.95\textwidth]{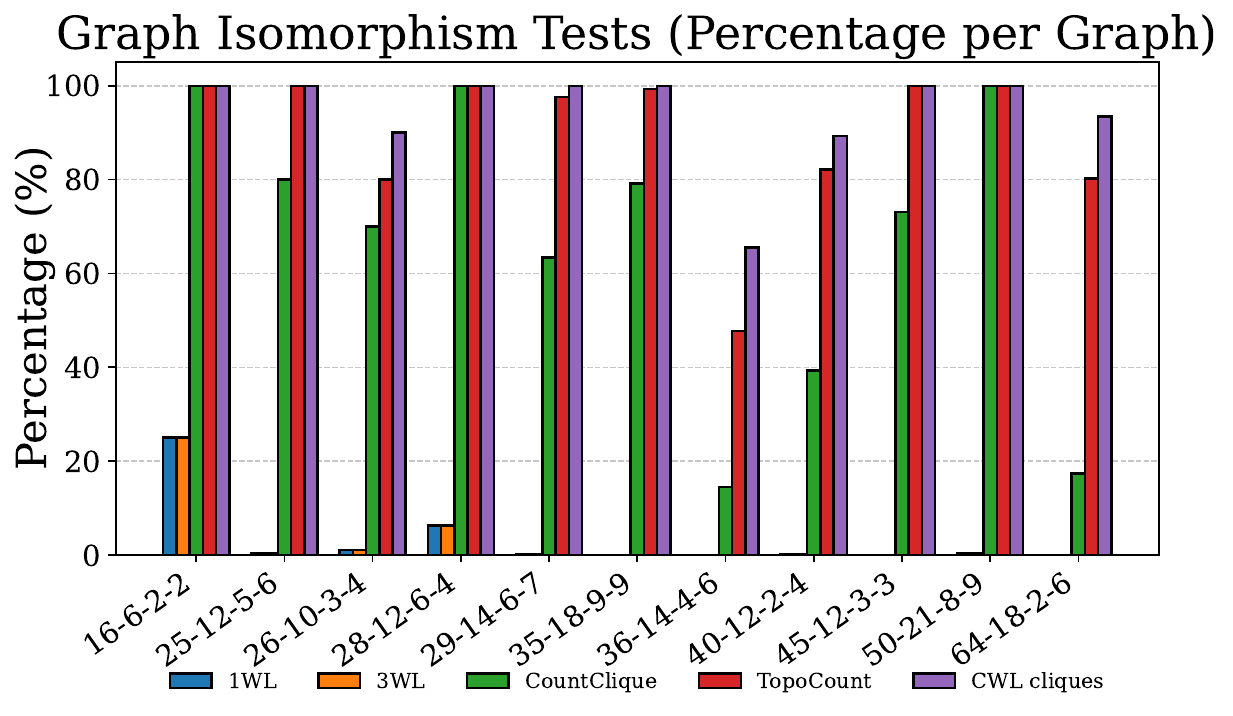}
        \caption{Clique CWL against other tests}
        \label{fig:hash_tests}
    \end{subfigure}
    \hspace{1pt}
    \begin{subfigure}[b]{0.48\textwidth}
        \centering
        \includegraphics[width=0.95\textwidth]{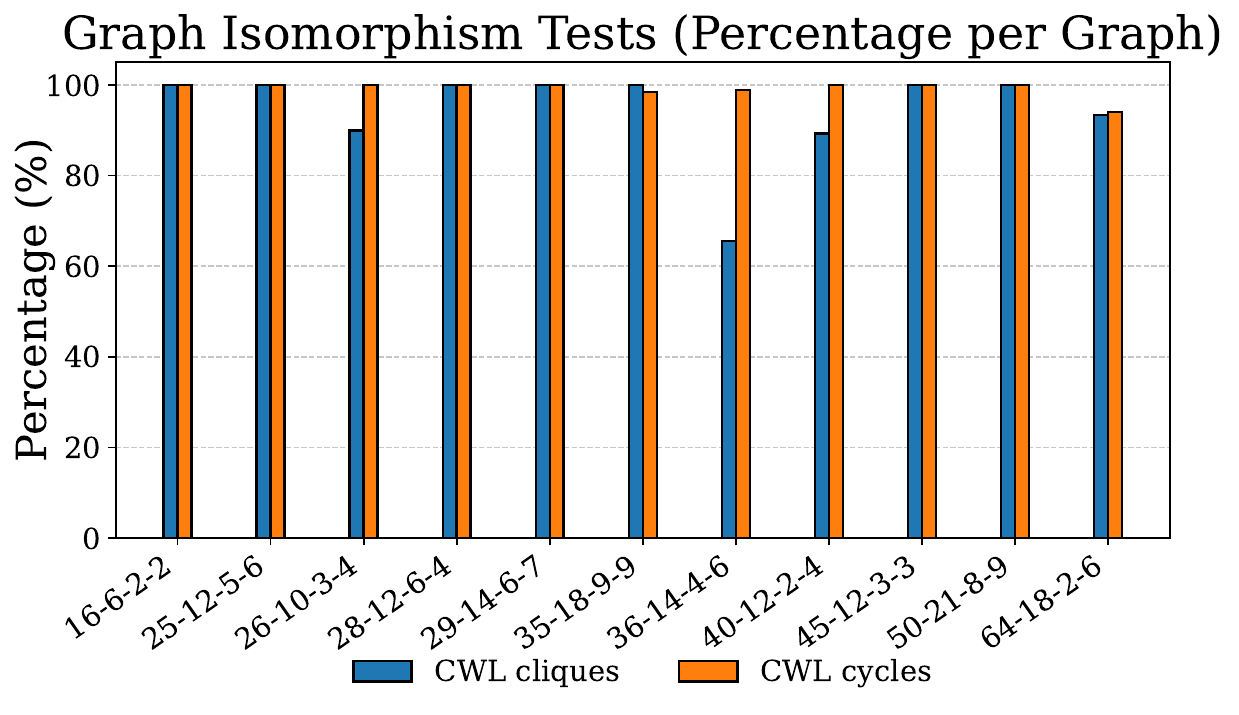}
        \caption{Clique CWL against cycle CWL}
        \label{fig:hash_tests_cycle}
    \end{subfigure}
    \caption{Comparison of Percentage of Unique Graph Hashes on strongly regular datasets: (a) compare CWL on maximal cliques against other isomorphic tests, (b) compare CWL on maximal cliques against CWL on node, edge, cycle lifting.}
    \label{fig:clique_vs_cycle}
\end{figure}
We also evaluate these tests on strongly regular graphs (see Figure \ref{fig:hash_tests} and Table \ref{tab:sr_num_graphs}).
We use strongly regular datasets from \url{https://www.maths.gla.ac.uk/~es/srgraphs.php}~\citep{HAEMERS2001839}, which include non-isomorphic strongly regular graphs with up to 64 nodes.
For many strongly regular graph families, clique topology alone is sufficient to distinguish most graphs.
In contrast, $1$WL and $3$WL fail to discriminate any graphs in these families, which aligns with known results~\citep{Bouritsas_2023, bodnar2021weisfeilercell}.

\textbf{Clique against cycle lifting.} Figure~\ref{fig:hash_tests_cycle} compares CWL with node and maximal clique lifting against CWL with node, edge, and cycle lifting. Both approaches achieve similar graph discriminative power, though they are not directly comparable: in some cases, cliques distinguish more graphs, while in others, cycles do.


{
A classical approach for enumerating all maximal cliques is the \textit{Bron-Kerbosch} method~\citep{Bron_1973}, explained in Algorithm~\ref{alg:bronkerbosch}. 
$R$ is the current clique being grown, 
$P$ contains nodes adjacent to all vertices in $R$, and $X$ contains nodes already processed that are also adjacent to every vertex in $R$.
Clique summarization has been widely studied~\citep{delia25}. Most of those approaches modify the Brun-Kerbosch algorithm to enumerate or sample a subset of the maximal clique set that verifies specific properties. For instance, \citet{Wang13MaxCliqueSampling} prunes branches based on a heuristic to construct a subset of maximal cliques that partially covers all maximal cliques.}

\paragraph{BREC Benchmark.} \label{sec:brec} 
To provide a standardized comparison of expressivity, we evaluate CWN on the
maximal clique complex against the BREC benchmark~\citep{wang2023brec}, which
tests the ability of GNNs to distinguish 400 structurally diverse graph pairs
across four categories: Basic, Regular, Extension, and CFI.

\begin{table}[h]
  \centering
  \footnotesize
  \caption{%
    BREC benchmark results: number of correctly distinguished graph pairs
    (out of the total per category).
    CWN is evaluated on the maximal clique complex.%
  }
  \label{tab:brec}
  \begin{tabular}{llccccc}
    \toprule
    \textbf{Category} & \textbf{Model}
      & \textbf{Basic (60)}
      & \textbf{Regular (140)}
      & \textbf{Ext.\ (100)}
      & \textbf{CFI (100)}
      & \textbf{Total (400)} \\
    \midrule
    \multirow{1}{*}{Ours}
      & CWN (max.\ clique) & 55 & 119 & 28 & 0 & 202 \\
    \midrule
    \multirow{2}{*}{$k$-WL GNNs}
      & PPGN        & 60 & 50  & 100 & 23 & 233 \\
      & $\Delta$-LGNN & 60 & 50  & 100 & 6  & 216 \\
    \midrule
    \multirow{2}{*}{Subgraph GNNs}
      & NGNN        & 59 & 48  & 59  & 0  & 166 \\
      & DS-GNN      & 58 & 48  & 100 & 16 & 222 \\
    \midrule
    Substructure  & GSN         & 60 & 99  & 95  & 0  & 254 \\
    Transformer   & Graphormer  & 16 & 12  & 41  & 10 & 79  \\
    Non-GNN       & N2          & 60 & 138 & 100 & 0  & 298 \\
    \bottomrule
  \end{tabular}
\end{table}

CWN on the maximal clique complex achieves particularly strong discrimination on
Regular graphs, correctly distinguishing 119 out of 140 pairs overall, and
improving to 119 out of 120 pairs when restricted to regular graphs of degree
greater than 2 (i.e., those containing cliques larger than single edges).
This places CWN as the second-best method on this category among those evaluated.
Performance is weaker on CFI graphs and Extension graphs.
The CFI result is expected: CFI graphs contain no cliques larger than edges,
effectively reducing CWN to a 1-WL baseline.
Extension graphs have an average maximal clique size of 2.9, suggesting that
the exclusive reliance on maximal cliques is insufficient to separate these
specific structures, pointing to a direction for future work.

\begin{algorithm}[t]
\caption{CliqueWalk}
\label{alg:cliquewalk}
\begin{algorithmic}[1]
\Function{CliqueWalk}{$i, \mathcal{N}, \omega_{max}$}
    \State $Walk \gets [i]$
    \State $neighbor \gets \mathcal{N}_i$
    \While{$neighbor \neq \emptyset$ \quad $|Walk| < \omega_{max}$}
        \State choose $j \in neighbor$
        \State append $j$ to $Walk$
        \State $neighbor \gets neighbor \cap \mathcal{N}_j$
    \EndWhile
    \State \Return $Walk$
\EndFunction
\end{algorithmic}
\end{algorithm}

\begin{algorithm}[h]
\caption{Bron--Kerbosch}
\label{alg:bronkerbosch}
\begin{algorithmic}[1]
\Function{BronKerbosch}{$R, P, X$}
    \If{$P = \emptyset$ \quad $X = \emptyset$}
        \State report $R$ as a maximal clique
    \Else
        \For{each $u$ in a copy of $P$}
            \State $P \gets P \setminus \{u\}$
            \State $R_{\text{new}} \gets R \cup \{u\}$
            \State $P_{\text{new}} \gets P \cap N(u)$
            \State $X_{\text{new}} \gets X \cap N(u)$
            \State \Call{BronKerbosch}{$R_{\text{new}}, P_{\text{new}}, X_{\text{new}}$}
            \State $X \gets X \cup \{u\}$
        \EndFor
    \EndIf
\EndFunction
\end{algorithmic}
\end{algorithm}
\section{{Clique Sampling}}
\label{sec:clique_sampling}

Our method, \emph{CliqueWalk}, explained in Algorithm~\ref{alg:cliquewalk}, is also inspired by Bron-Kerbosch but differs in two important ways: (\textit{i}) \emph{We sample rather than full enumeration.} CliqueWalk does not attempt to enumerate all maximal cliques but samples a subset of them. Therefore, (\textit{ii}) \emph{we do not need the $X$ set.} 
We simply grow a clique by iteratively sampling a vertex from the candidate set $P$. Conceptually, CliqueWalk performs an upward random walk in the clique complex (see Figure \ref{fig:cliquewalk}).
While exact clique sampling might require exploring a geometric number of recursive branches (see Proposition~\ref{prop:maxclique_complex}), CliqueWalk runs in linear time with respect to the number of nodes (see Proposition~\ref{prop:complexity_cliquewalk}) and efficiently produces summaries of the clique topology with the following sampling guarantees: (\textit{i}) The sampling process tends to sample larger cliques. For instance, given a node $v$ and a maximal clique $\sigma$ containing $v$, the probability of sampling $\sigma$ is at most ${(|\sigma| -1)}/{ deg(v)}$. (\textit{ii}) Performing CliqueWalk with multiple 
walks per node ensures that each node is included in several sampled cliques, which is relevant for node-level learning tasks.



\section{{Ablations}}

\begin{figure*}[t]
    \centering
    \begin{subfigure}[b]{0.44\textwidth}
        \centering
        \includegraphics[width=\textwidth]{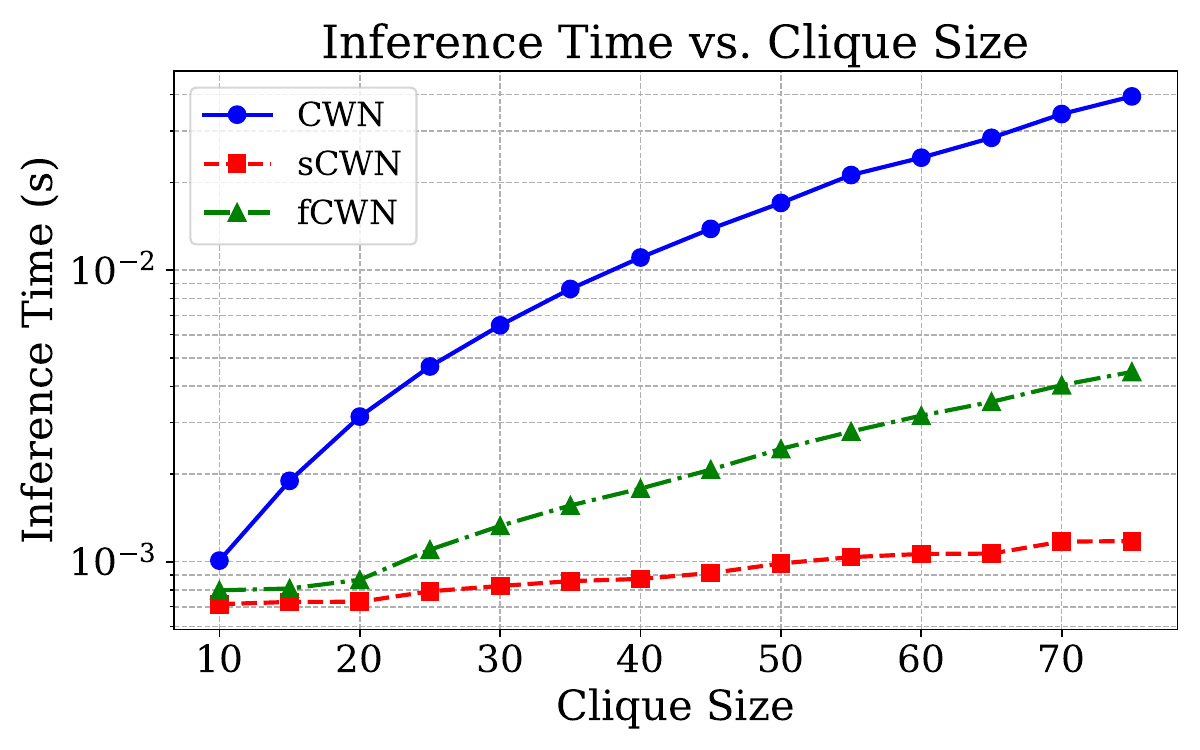}
        \caption{Inference time with growing cliques.}
        \label{fig:time_sCWL_CWL}
    \end{subfigure}
    \begin{subfigure}[b]{0.44\textwidth}
        \centering
        \includegraphics[width=\textwidth]{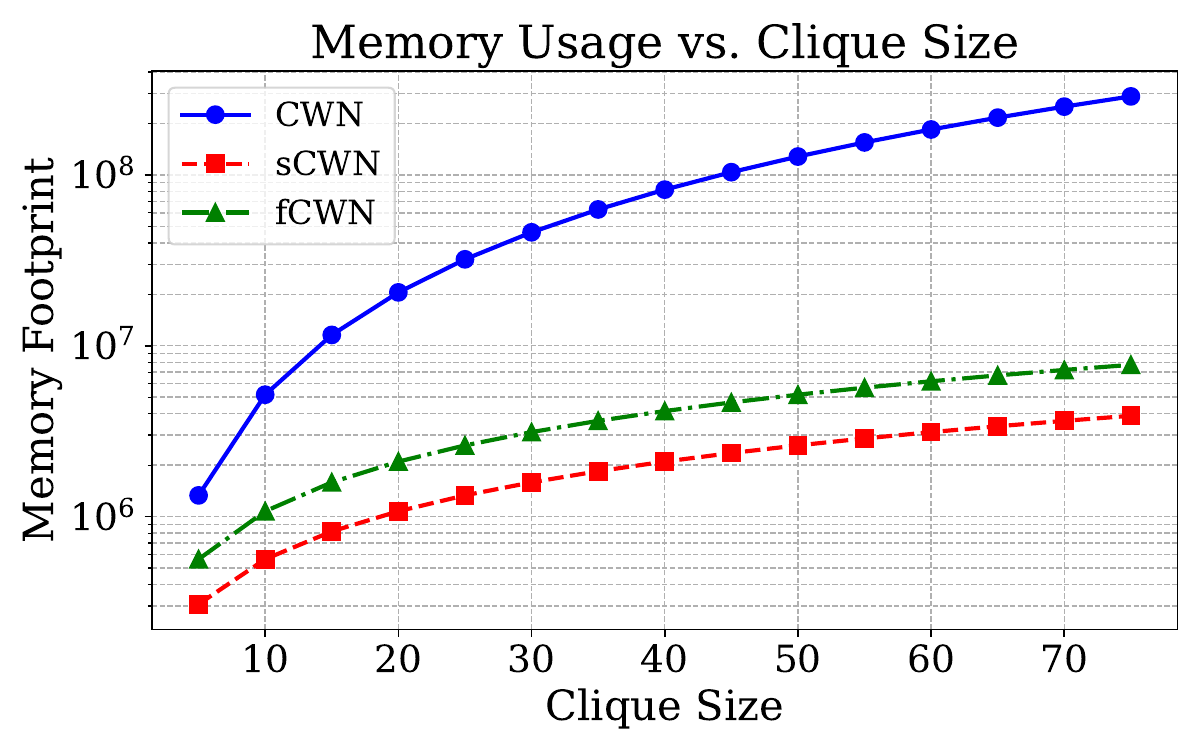}
        \caption{Memory footprint with growing cliques.}
        \label{fig:mem_sCWL_CWL}
    \end{subfigure}
    \caption{Comparison of CWN, sCWN, and fCWN with increasing clique size: (a) inference time and (b) memory footprint.}
    \label{fig:time_mem_comparison}
\end{figure*}

\label{app:ablations}
\begin{wraptable}{r}{0.45\linewidth}
\centering
\caption{Ablation cell input features. Table report test accuracy after training.}
\label{tab:ablation_input}
\resizebox{\linewidth}{!}{
\begin{tabular}{lrrr}
\toprule
input type & size emb & sum & mean \\
dataset &  &  &  \\
\midrule
Photo & 94.7\% & 94.5\% & 94.9\% \\
contact-high-school & 95.4\% & 97.6\% & 7.0\% \\
\bottomrule
\end{tabular}

}
\end{wraptable}

\textbf{Scalability of CWN models.} 
Figure~\ref{fig:time_mem_comparison} illustrates how CWN, fCWN, and sCWN scale with increasing clique size.
Consistent with Proposition~\ref{prop:model_complexity}, both fCWN and sCWN require substantially less memory and runtime than CWN.
Among them, sCWN achieves the best efficiency, confirming that restricting message passing to boundary and co-boundary relations provides a favorable tradeoff between expressivity and computational cost.

\textbf{Cell input feature choice.} {Table~\ref{tab:ablation_input} compares the performance of sCWN on \emph{Photo} and \emph{contact-primary-school} depending on the type of input used.
We observe that size embedding and sum embedding obtain very similar accuracy, whereas mean embedding provides much worse results on contact-high-school.}


\textbf{Sampling effects.} We compare exact enumeration of maximal cliques with CliqueWalk sampling using between 1 and 256 walks per node on \emph{Photo} (Figure \ref{fig:sampling_tradeoff:photo}) and \emph{contact-primary-school} (Figure \ref{fig:sampling_tradeoff:primary}). On \emph{Photo}, increasing the number of sampled cliques does not improve model accuracy. We further study more aggressive subsampling of cliques (Figure \ref{fig:sampling_tradeoff:photo_small}). We observe that fCWN is largely insensitive to the number of sampled cliques, with a slight performance increase once at least one clique per node is sampled. For sCWN, performance improves with the number of walks, but the gains become marginal beyond 4 walks per node.

On \emph{contact-primary-school}, the behavior differs between the two models. As shown in Section~\ref{subsec:ablation} (Figure~\ref{fig:walk_effect}), both fCWN and sCWN initially gain performance as the number of walks per node increases. Extending this analysis to a larger number of sampled cliques reveals the full picture: fCWN remains stable even at very high sampling rates, whereas sCWN accuracy begins to drop when more than 32 random walks per node are used. This suggests that excessive redundancy in sampled cliques can dilute useful information for sCWN, while fCWN’s message-passing scheme is more robust to the amount of sampled structure.

\textbf{Number of layers effects.} {Figure~\ref{fig:num_layers} shows the evolution of the accuracy for deeper models. As depth increases, test accuracy degrades at some point, indicating that deep models struggle to learn effectively. 
Training and testing accuracy remain similar at large depths (not shown in the figure); this decline is unlikely due to over-fitting and is consistent with the over-smoothing effect known in graph learning \cite{einizade2025continuous}.
}

\begin{figure*}[t]
    \centering
    \begin{subfigure}[b]{0.44\textwidth}
        \centering
        \includegraphics[width=\textwidth]{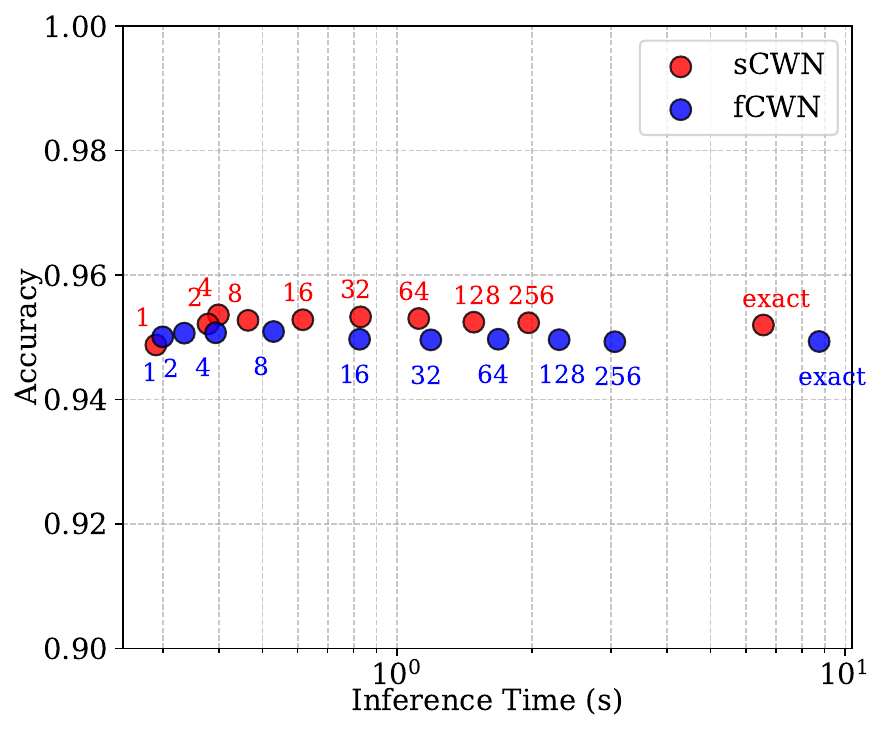}
        \caption{Accuracy of sCWN and fCWN at different CliqueWalk sampling rates on Photo.}
        \label{fig:sampling_tradeoff:photo}
    \end{subfigure}
    \hspace{0.02\textwidth}
    \begin{subfigure}[b]{0.44\textwidth}
        \centering
        \includegraphics[width=0.98\textwidth]{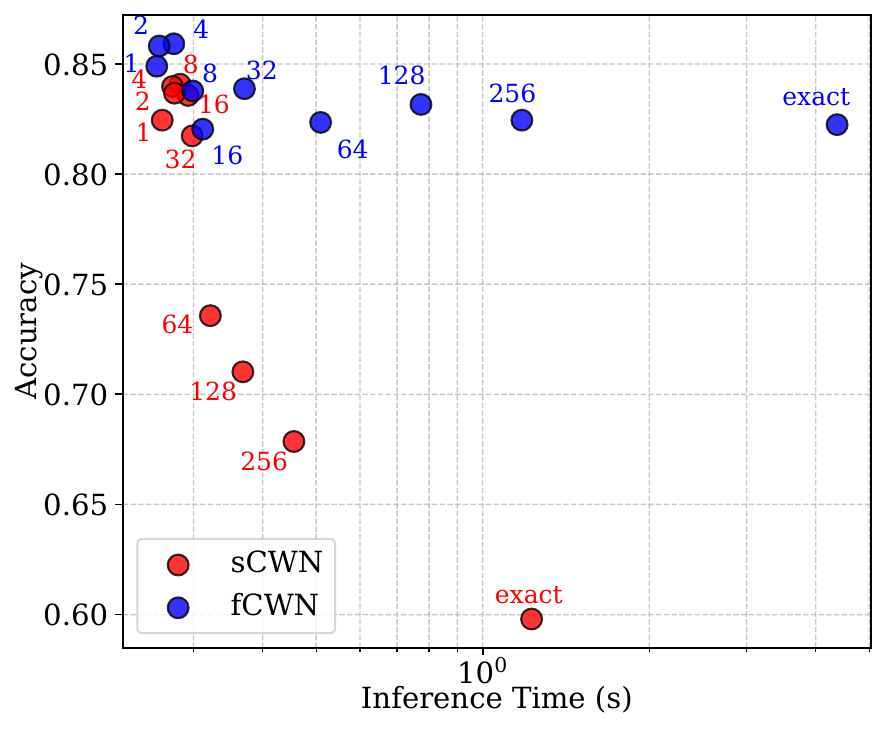}
        \caption{Accuracy of sCWN and fCWN at different CliqueWalk sampling rates on contact-primary-school.}
        \label{fig:sampling_tradeoff:primary}
    \end{subfigure}
    \caption{Ablation studies on large clique sampling versus full clique enumeration. }
    \label{fig:sensitivity_analysis}
\end{figure*}

\begin{figure*}[h]
    \centering
    \begin{subfigure}[b]{0.44\textwidth}
        \centering
        \includegraphics[width=\textwidth]{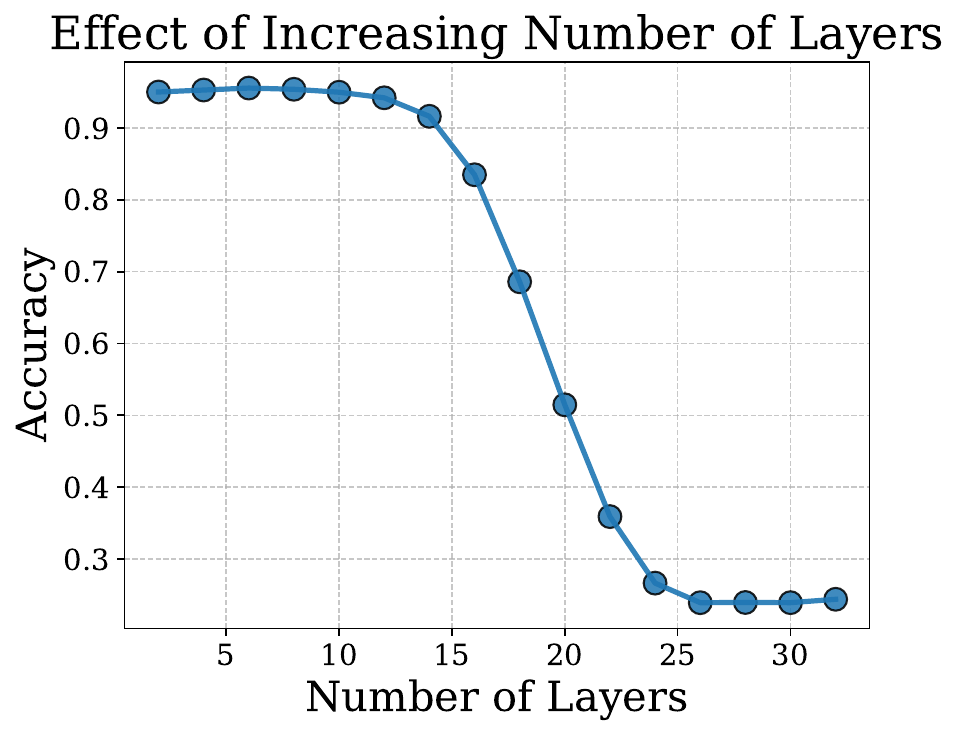}
        \caption{Accuracy of trained sCWN model without batchnorm on Photo depending on the number of layers.}
        \label{fig:num_layers}
    \end{subfigure}
    \hspace{0.02\textwidth}
    \begin{subfigure}[b]{0.44\textwidth}
        \centering
        \includegraphics[width=0.9\textwidth]{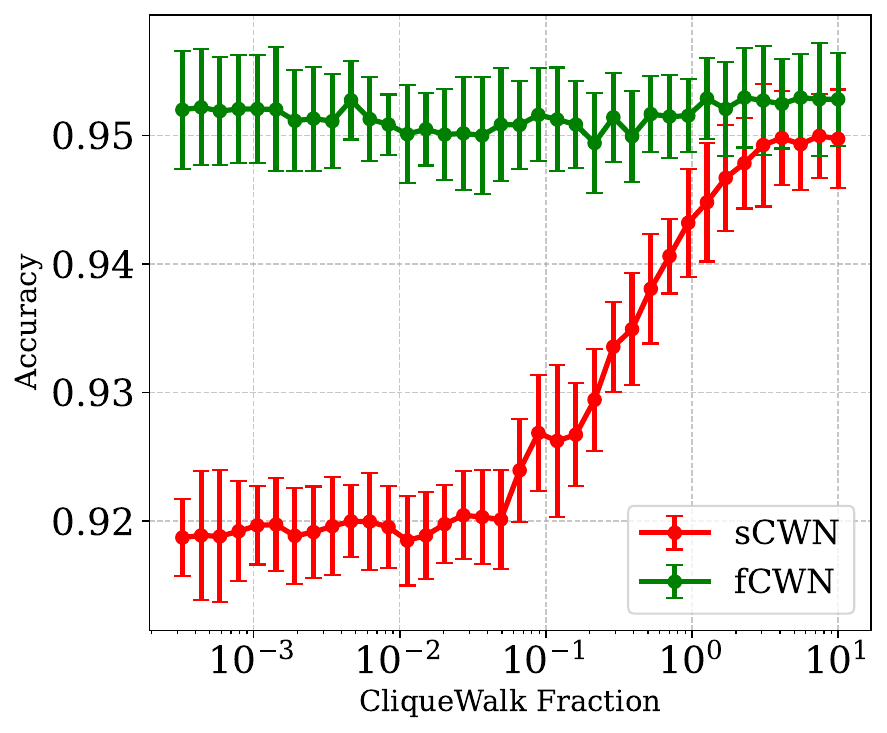}
        \caption{Accuracy of sCWN and fCWN on Photo vs. CliqueWalk sampling rates.}
        \label{fig:sampling_tradeoff:photo_small}
    \end{subfigure}
    \caption{Ablation studies on model depth and clique sampling.}
    \label{fig:sensitivity_analysis}
\end{figure*}

\begin{table}[t]
\centering
\caption{Dataset statistics for node and graph classification. 
Reported are the number of nodes, number of edges, mean degree, and clique statistics ($\mu$: mean size, $\sigma$: standard deviation).}
\label{tab:dataset_statistics}
\resizebox{0.75\textwidth}{!}{
\begin{tabular}{lrrrrr}
\toprule
\textbf{Dataset} & \textbf{Nodes} & \textbf{Edges} & \textbf{Mean degree} & \textbf{Clique $\mu$} & \textbf{Clique $\sigma$} \\
\midrule
\\[-0.8em]
\multicolumn{6}{c}{\textit{Node classification datasets}} \\
\\[-0.8em]
\midrule
SCM & 6\,002\,010 & 276\,089\,116 & 46.0 & 6.51 & 6.55 \\
Cora & 2\,708 & 10\,556 & 7.80 & 2.37 & 0.59 \\
PubMed & 19\,717 & 88\,648 & 8.99 & 2.28 & 0.59 \\
Citeseer & 3\,327 & 9\,104 & 5.47 & 2.26 & 0.58 \\
Photo & 7\,650 & 238\,162 & 62.26 & 10.75 & 4.89 \\
Contact-Primary-School & 242 & 16\,634 & 137.47 & 11.36 & 2.88 \\
Contact-High-School & 327 & 11\,636 & 71.17 & 9.28 & 3.73 \\
{OGBN-Products} & 2 449 029 & 123 718 024 & 50.5 &8.3 & 6.6 \\ 
\midrule
\\[-0.8em]
\multicolumn{6}{c}{\textit{Graph classification datasets}} \\
\\[-0.8em]
\midrule
IMDB-BINARY & 19\,773 & 96\,531 & 9.76 & 7.02 & 3.80 \\
IMDB-MULTI & 19\,502 & 98\,903 & 10.14 & 7.61 & 4.30 \\
MUTAG & 3\,371 & 3\,721 & 2.21 & 2.00 & 0.00 \\
NCI1 & 122\,747 & 132\,753 & 2.16 & 2.00 & 0.04 \\
NCI109 & 122\,494 & 132\,604 & 2.17 & 2.00 & 0.04 \\
Proteins & 43\,471 & 81\,044 & 3.73 & 2.53 & 0.63 \\
\bottomrule
\end{tabular}
}
\end{table}

\section{Permutation Equivariance of sCWN}
\label{sec:equivariance}

For a fixed cell complex, sCWN is permutation equivariant by construction.
Because CliqueWalk is a stochastic sampling procedure, however, two isomorphic
graphs may yield non-isomorphic lifted complexes for a single sample.
We now show that the \emph{expected} output ofsCWN is permutation equivariant.

\begin{prop}[Expected equivariance of sCWN]
\label{prop:equivariance}
Let $\Phi(G, \hat{\mathcal{X}})$ denote the sCWN mapping for a graph $G$
and a sampled clique complex $\hat{\mathcal{X}}$.
The expected output $\mathbb{E}[\Phi(G, \cdot)]$ is permutation equivariant.
\end{prop}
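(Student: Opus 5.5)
The plan is to combine two facts. First, sCWN is deterministically equivariant for a fixed complex. Second, CliqueWalk is equivariant in distribution.

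Fix a permutation $\pi$ of $\mathcal V$, and write $\pi G$ for the relabelled graph. For a sampled complex $\hat{\mathcal X}$, write $\pi\hat{\mathcal X}$ for the complex whose cells are $\pi(\sigma)$, $\sigma\in\hat{\mathcal X}$. On node outputs, $\pi$ acts by permuting rows.

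\textbf{Step 1 (deterministic equivariance).} I will show $\Phi(\pi G,\pi\hat{\mathcal X})=\pi\cdot\Phi(G,\hat{\mathcal X})$. In Definition~\ref{def:sCWN}, every update of a cell uses only its own feature and the multisets $\{\!\{\mathbf x_\tau:\tau\in\mathcal B(\sigma)\}\!\}$ and $\{\!\{\mathbf x_\tau:\tau\in\mathcal C(\sigma)\}\!\}$. The relabelling $\pi$ maps $\mathcal B(\sigma)$ bijectively onto $\mathcal B(\pi\sigma)$, and likewise for $\mathcal C$. The initial clique features (clique size) and the node features are transported by $\pi$. An induction on layers, as in the standard argument for message passing with permutation-invariant aggregators $\bigoplus$, then gives the claim.

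\textbf{Step 2 (equivariance in law of CliqueWalk).} I will show that the sampled complex satisfies $\hat{\mathcal X}(\pi G)\overset{d}{=}\pi\hat{\mathcal X}(G)$. This requires pinning down the sampling rule of Algorithm~\ref{alg:cliquewalk}: $j$ is chosen uniformly from the current candidate set, and each node launches $n_{\text{walk}}$ independent walks. I will then couple the executions on $G$ and on $\pi G$ step by step. A walk from $i$ in $G$ corresponds to a walk from $\pi(i)$ in $\pi G$, because $\pi(\mathcal N_i)=\mathcal N_{\pi(i)}$ and $\pi$ commutes with set intersection. By induction along the walk, the candidate sets correspond under $\pi$, so uniform choices correspond. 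Hence the probability that the walk from $i$ yields the sequence $(i,j_1,\dots)$ equals the probability that the walk from $\pi(i)$ yields $(\pi i,\pi j_1,\dots)$. Independence across walks and across start nodes lifts this to the joint law of the whole multiset of sampled cliques.

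\textbf{Step 3 (conclusion).} I will compute
\[
\mathbb E[\Phi(\pi G,\hat{\mathcal X}(\pi G))]=\mathbb E[\Phi(\pi G,\pi\hat{\mathcal X}(G))]=\mathbb E[\pi\cdot\Phi(G,\hat{\mathcal X}(G))]=\pi\cdot\mathbb E[\Phi(G,\hat{\mathcal X}(G))].
\]
The first equality uses Step 2. The second uses Step 1. The third holds because $\pi$ acts linearly on the output tensor. The sample space is finite, so the expectation exists.

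\textbf{Main obstacle.} I expect the main difficulty in Step 2. The algorithm leaves ``choose $j\in neighbor$'' unspecified. Any implementation-dependent ordering, such as choosing the first element of an adjacency list, breaks the distributional equivariance. I will therefore state the uniform-choice assumption explicitly. More generally, it suffices that the choice probabilities depend only on isomorphism-invariant data of $(G,\text{Walk},neighbor)$.

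A second subtlety is duplicate cliques. The complex may be a multiset or a deduplicated set. Either way the argument survives, since deduplication is itself a $\pi$-equivariant map, but I will note this explicitly.
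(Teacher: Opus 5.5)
Your proposal is correct and follows essentially the same route as the paper: equivariance in law of CliqueWalk, $P(\hat{\mathcal{X}}\mid G)=P(\pi\hat{\mathcal{X}}\mid \pi G)$, combined with deterministic equivariance of $\Phi$ on a fixed complex and a change of variables in the expectation. The paper only asserts these two ingredients, so your layer-wise induction and your coupling argument fill in what it leaves implicit when it says CliqueWalk ``depends only on the adjacency structure''; this includes your stated assumption that the next vertex is drawn uniformly (or by an isomorphism-invariant rule) from the candidate set.
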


\begin{proof}
Let $\pi$ be any permutation of the vertex set $V$, and let $\pi(G)$ and
$\pi(\hat{\mathcal{X}})$ denote the permuted graph and complex, respectively.
Since CliqueWalk depends only on the adjacency structure and not on node
indexing, the sampling distribution is equivariant:
\begin{equation}
  P\!\left(\hat{\mathcal{X}} \mid G\right)
  = P\!\left(\pi(\hat{\mathcal{X}}) \mid \pi(G)\right).
\end{equation}
The expected output on the permuted graph is
\begin{equation}
  \mathbb{E}[\Phi(\pi(G), \cdot)]
  = \sum_{\hat{\mathcal{X}}'} \Phi\!\left(\pi(G), \hat{\mathcal{X}}'\right)
    P\!\left(\hat{\mathcal{X}}' \mid \pi(G)\right).
\end{equation}
Applying the change of variables $\hat{\mathcal{X}}' = \pi(\hat{\mathcal{X}})$
and the equivariance of $\Phi$ for fixed complexes
($\Phi(\pi(G), \pi(\hat{\mathcal{X}})) = \pi(\Phi(G, \hat{\mathcal{X}}))$):
\begin{align}
  \mathbb{E}[\Phi(\pi(G), \cdot)]
  &= \sum_{\hat{\mathcal{X}}}
      \pi\!\left(\Phi(G, \hat{\mathcal{X}})\right)
      P\!\left(\pi(\hat{\mathcal{X}}) \mid \pi(G)\right) \notag \\
  &= \sum_{\hat{\mathcal{X}}}
      \pi\!\left(\Phi(G, \hat{\mathcal{X}})\right)
      P\!\left(\hat{\mathcal{X}} \mid G\right) \notag \\
  &= \pi\!\left(\sum_{\hat{\mathcal{X}}}
      \Phi(G, \hat{\mathcal{X}})
      P\!\left(\hat{\mathcal{X}} \mid G\right)\right)
   = \pi\!\left(\mathbb{E}[\Phi(G, \cdot)]\right).
\end{align}
Thus, the expectation of the sCWN mapping commutes with the permutation map.
\end{proof}

\section{Datasets}
\label{app:datasets}

\textbf{Topological networks}~\citep{chodrow2021hypergraph, Mastrandrea-2015-contact}.
The \textit{contact-high-school} and \textit{contact-primary-school} datasets record proximity between students. Hyperedges are created at fixed time intervals from these interactions. We then project all interactions into a static graph. In this graph, an edge links two students if they have interacted at least once. The resulting graphs are topological complex networks (See Figures~\ref{fig:contact-primary-school} and~\ref{fig:contact-high-school})

\textbf{Citation networks.} In these datasets, node features are given by a Bag-of-Words representation of the documents.
Cora and Citeseer are citation networks extracted from machine learning publications~\citep{Sen_Namata_2008}. The labels correspond to the research topic of each paper. The PubMed citation network consists of articles related to diabetes.~\citep{Namata2012QuerydrivenAS} The labels indicate the  type of diabetes discussed in the article.

\textbf{Purchase network.} The Amazon Photo dataset is a subset of the Amazon co-purchase network~\citep{mcauley2015amazon}. In this graph, nodes represent products, and edges connect items that are frequently purchased together. node features are given by a Bag-of-Words representation of product reviews, and the labels are the product category. The OGBN-Products dataset follows the same methodology, but the Bag-of-Words features have been reduced to 100 dimensions using PCA, providing a more compact representation of the node features.

\textbf{Stochastic clique model.} 
It is a special case of Stochastic Block Model~\citep{HOLLAND1983109} where inward probability is set to $1$. Graphs are generated by assembling cliques, with nodes inside each clique fully connected. Each clique is assigned a label, which is inherited by all its nodes, and node features are generated from a Gaussian distribution with a mean determined by the node label and a fixed diagonal variance. To introduce topological noise, each node is connected to nodes outside its clique with a fixed probability, perturbing the clique structure. The task can thus be interpreted as a form of label denoising. For our experiments reported in table For experiments reported in Table~\ref{tab:node_classification}, cliques had random sizes between $10$ and $20$. Node features had a standard deviation of $2$, and topological noise was such that approximately two out of three neighbors came from outside the clique. Each clique was assigned one of five possible labels.

\textbf{Social networks.} A network of actors and actresses is constructed from IMDB, where edges indicate collaboration in the same film. The \textit{IMDB-BINARY} and \textit{IMDB-MULTI} datasets~\citep{Yanardag_2015} consist of the 1-hop neighborhoods around each actor. Graph labels correspond to the movie genre associated with the actor.

\textbf{Bioinformatics.}
The bioinformatics datasets include four widely used molecular and protein graph collections. \textit{MUTAG}~\citep{Debnath1991} contains nitroaromatic compounds with 7 different labels indicating mutagenic activity. \textit{PROTEINS}~\citep{Borgwardt_2005} represents protein structures; the task is to predict if a protein is an enzyme or not. \textit{NCI1} and \textit{NCI109}~\citep{Wale_2006, Shervashidze_2010} are collections of chemical compounds tested for activity against lung cancer and ovarian cancer cells, respectively. Each dataset is available through the TUDataset~\citep{morris2020tudatasetcollectionbenchmarkdatasets} repository and is commonly used to benchmark graph-based learning methods.


\textit{Remark.} Dataset statistics can be found in Table~\ref{tab:dataset_statistics}. 
Clique size was approximated using CliqueWalk for OGBN-Products.
Figure \ref{fig:acc_clique_cor} shows a correlation between larger mean clique sizes and better performance for clique-based models.

\textbf{OGBN-Products.} We train our model on cliques (HGNN, sCWN, fCWN) for $1000$ epochs with 200 hidden dimensions. For sCWN and fCWN, clique data are initialized using mean node features. We tune the learning rate, dropout rate, weight decay, and the number of layers. Tuning is performed on $300 $ epochs using the Optuna library with the Tree-structured Parzen Estimator algorithm and a $30$-trial budget. 

For HGNN on OGBN-Products, we used an implementation with mean aggregation instead of symmetric normalized aggregation.

For GNN models, we do not tune hyperparameters and use typical ones. 
The results we obtain in GNNs are aligned with public leaderboards for \textit{full-batch} models.

\begin{table}[h]
\centering
\caption{Hyperparameter Configurations on OGBN-products}
\label{tab:hyperparameters}
\begin{tabular}{lccccc}
\toprule
\textbf{Model} & \textbf{Layers} & \textbf{Hidden Dim} & \textbf{Learning Rate} & \textbf{Weight Decay} & \textbf{Dropout} \\ \midrule
GCN            & 4               & 200                 & $3 \times 10^{-3}$     & $1 \times 10^{-3}$    & 0.50             \\
GAT            & N/A             & N/A                 & N/A                    & N/A                   & N/A              \\
GIN            & 3               & 64                  & $1 \times 10^{-3}$     & $5 \times 10^{-4}$    & 0.50             \\
SAGEConv       & 3               & 64                  & $1 \times 10^{-3}$     & $5 \times 10^{-4}$    & 0.50             \\
SGC            & 3               & 64                  & $1 \times 10^{-3}$     & $5 \times 10^{-4}$    & 0.50             \\
SCCN           & N/A             & N/A                 & N/A                    & N/A                   & N/A              \\
HGNN           & 4               & 200                 & $2.4 \times 10^{-3}$   & $2 \times 10^{-6}$    & 0.15             \\
CWN            & N/A             & N/A                 & N/A                    & N/A                   & N/A              \\
fCWN           & 5               & 200                 & $3 \times 10^{-3}$     & $1.3 \times 10^{-6}$  & 0.17             \\
sCWN           & 5               & 200                 & $2.7 \times 10^{-3}$   & $1 \times 10^{-6}$    & 0.09             \\ \bottomrule
\end{tabular}
\end{table}

\newpage

\section{Model and Layer Details \label{app:models}}

In this section, we describe the layers and model implementations used for our benchmarks. Throughout, we use the following notation:  
\begin{itemize}
    \setlength\itemsep{-1pt}
    \item $\text{MLP}$: a 2-layer multilayer perceptron with ReLU activation.  
    \item $\mathbf{W}$: a learnable linear layer.  
    \item $\mathbf H \in \{0,1\}^{n \times m}$: the hypergraph incidence matrix.  
    \item $\mathbf D_v \in \mathbb{R}^{n \times n}$, $\mathbf D_e \in \mathbb{R}^{m \times m}$: diagonal degree matrices of nodes and hyperedges (cliques):
    \[
        \mathbf D_v(i,i) = \sum_{e=1}^{m} \mathbf H_{i,e}, \quad
        \mathbf D_e(e,e) = \sum_{i=1}^{n} \mathbf H_{i,e}.
    \]
    \item $\mathcal{X}$: the set of cliques.  
    \item $\mathbf{x}_i^N$: features of node $i \in \mathcal V$.  
    \item $\mathbf{x}_\sigma^C$: features of clique $\sigma \in \mathcal{X}$.  
\end{itemize}

\textbf{HGNN.} We follow \citep{feng2019hypergraphneuralnetworks}. The layer propagation is:  
\[
\mathbf {x}_i^N \gets \mathbf {Wx}_i^N +   \mathbf{W} \mathbf D_v^{-\frac{1}{2}} \mathbf H \mathbf D_e^{-1} \mathbf H^\top \mathbf D_v^{-\frac{1}{2}}  \mathbf W(\mathbf{x}_i^N),
\]
where $\mathbf W$ is a learnable weight matrix, and $\sigma(\cdot)$ is a non-linear activation function (\textit{e.g.}, ReLU). The addition of $\mathbf{X}^{(l)}$ implements a residual (skip) connection.

\textbf{CWN.}
We implemented the layer from \cite{bodnar2021weisfeilercell}:

\begin{align*}
\mathbf{x}_\sigma^C &\gets \text{MLP} \Big( \mathbf{x}_\sigma^C + \frac{1}{|\sigma|} \sum_{i \in \sigma} \mathbf{x}_i^N \Big), \\
\mathbf{x}_i^N &\gets \mathbf{W} \mathbf{x}_i^N + 
\frac{1}{|\{(j,\sigma): i,j \in \sigma\}|} 
\sum_{\substack{(j,\sigma) \\ i,j \in \sigma}} 
\text{MLP} \big( \mathbf{x}_i^N + \mathbf{x}_j^N + \mathbf{x}_\sigma^C \big).
\end{align*}

\paragraph{sCWN.} 
This model is a simple boundary, co-boundary aggregation. 
Most of the weights are used to update clique representation, while node representations are updated from the average of clique features. On maximal clique complexes, the model is implemented as follows:
\begin{align*}
\mathbf{x}_\sigma^C &\gets \text{MLP} \Big( \mathbf{W} \mathbf{x}_\sigma^C + \frac{1}{|\sigma|} \sum_{i \in \sigma} \text{MLP}(\mathbf{x}_i^N) \Big), \\
\mathbf{x}_i^N &\gets \mathbf{W} \mathbf{x}_i^N + \frac{1}{|\{\sigma \in \mathcal{X}: i \in \sigma\}|} \sum_{\sigma \ni i} \mathbf{x}_\sigma^C.
\end{align*}

\textbf{fCWN.}
This model mixes an sCWN model with a graph aggregation layer at the node level.
On maximal clique complexes, the model is implemented as follows:
\begin{align*}
\mathbf{x}_\sigma^C &\gets \text{MLP} \Big( \mathbf{W} \mathbf{x}_\sigma^C + \frac{1}{|\sigma|} \sum_{i \in \sigma} \text{MLP}(\mathbf{x}_i^N) \Big), \\
\mathbf{x}_i^N &\gets \mathbf{W} \mathbf{x}_i^N + \mathbf W \dfrac{1}{|\mathcal N_i|}\sum_{j \in \mathcal N_i} x_j +\frac{1}{|\{\sigma \in \mathcal{X}: i \in \sigma\}|} \sum_{\sigma \ni i} \mathbf{x}_\sigma^C.
\end{align*}

\textbf{SCCN.} We used the TopoModelX~\citep{hajij2024topoxsuitepythonpackages} implementation of the SCCN layer from~\citep{yang22a}.

\textbf{Global architecture}.Each model begins with a layer normalization of the input. Each subsequent layer is composed as follows:  
\[
\texttt{Conv}\;\to\; \texttt{LayerNorm/Batchnorm} (\text{with or without})  \;\to\; \texttt{ReLU} \; \; \to \; \texttt{Dropout}.
\]
{Where Conv can be replaced with any convolutional layer under evaluation (\emph{e.g.} sCWN, SCCN, GAT, etc.).}

{\textbf{Graph models.} We experiment with several standard graph neural networks: Simple Graph Convolution (SGC), Graph Convolutional Network (GCN), GraphSAGE, Graph Attention Network (GAT), and Graph Isomorphism Network (GIN). For SGC, we use a modified version with shift operator \(\mathbf S := \mathbf D^{-1}\mathbf A\), concatenating \(\mathbf{x}, \mathbf S\mathbf{x}, \dots, \mathbf S^K \mathbf{x}\) and feeding the result into an MLP. For the other models, we use the PyTorch Geometric implementations with standard hyperparameters.}

\textbf{Node classification.} The final layer applies a convolution followed by Softmax.  

\textbf{Graph classification.} The final layer applies a convolution followed by a global add pooling operation to aggregate node features into a graph-level embedding. Then, it is followed by Softmax.

\begin{figure}[t]
    \centering
    \begin{subfigure}{0.45\textwidth}
        \centering
        \includegraphics[width=\linewidth]{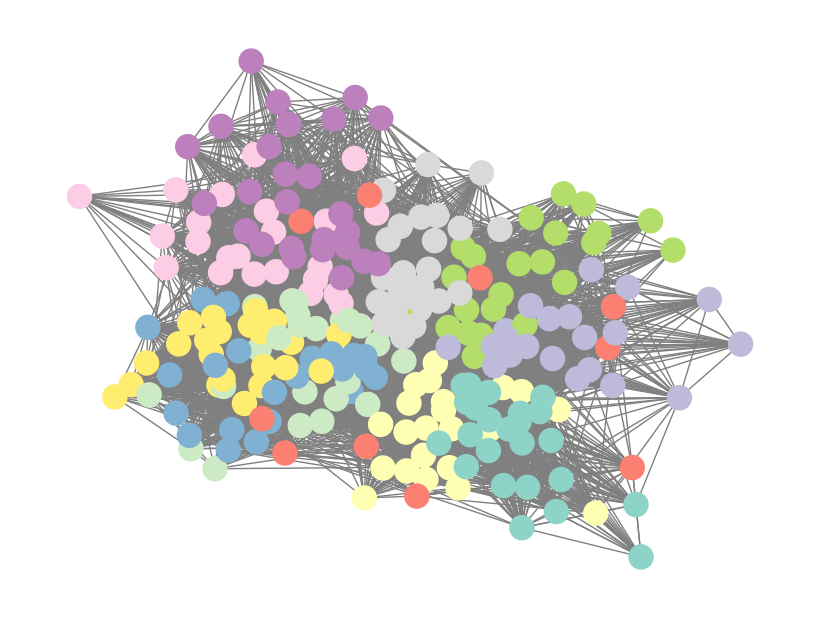}
        \caption{}
        \label{fig:contact-primary-school}
    \end{subfigure}
    \hspace{5pt}
    \begin{subfigure}{0.45\textwidth}
        \centering
        \includegraphics[width=\linewidth]{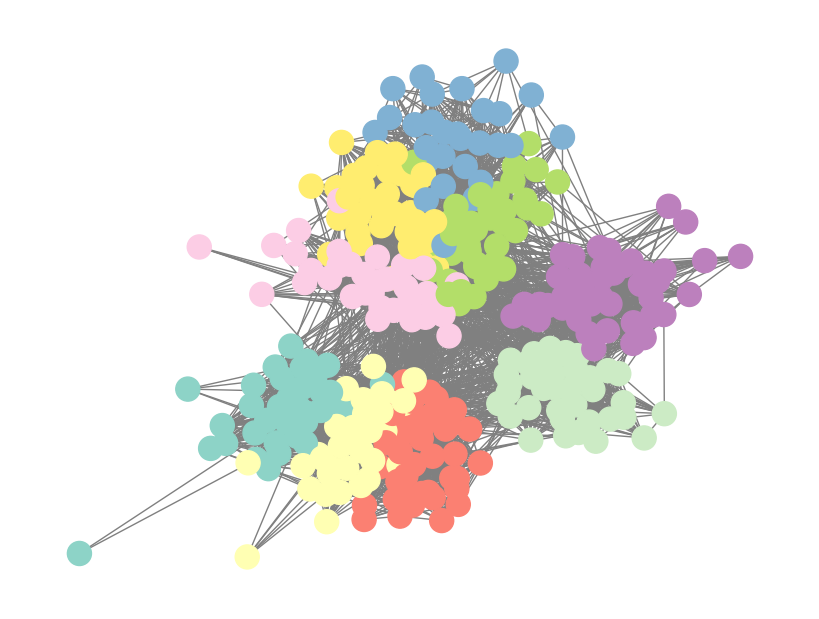}
        \caption{}
        \label{fig:contact-high-school}
    \end{subfigure}
    \caption{Projected datasets: (a) contact-primary-school and (b) contact-high-school.}
    \label{fig:contact-datasets}
\end{figure}

\section{Compute Resources}

All experiments were conducted using NVIDIA GPUs, primarily \textbf{RTX 3090}, \textbf{L40S} and \textbf{H100}.

\textbf{Node classification.}
Node classification experiments are relatively lightweight and computationally efficient. Individual training runs typically complete within minutes to a few hours depending on the dataset and hyperparameter configuration. As a result, full hyperparameter sweeps across all node classification benchmarks can be completed within a few GPU-days.

\textbf{Graph classification.}
Graph classification experiments are more computationally demanding due to the need for repeated training across multiple datasets and hyperparameter configurations. A full sweep over all graph classification datasets typically requires approximately between $1$ hour to $1$ GPU-day.

\textbf{Total compute budget.}
Taking into account all reported experiments, including hyperparameter searches and repeated runs for statistical significance, we estimate the total compute usage to be on the order of \textbf{two months of single-GPU time}. This estimate includes both successful runs and preliminary experiments that informed the final experimental design.

\ifarxiv
\else
    \newpage
    \section*{NeurIPS Paper Checklist}

\begin{enumerate}

\item {\bf Claims}
    \item[] Question: Do the main claims made in the abstract and introduction accurately reflect the paper's contributions and scope?
    \item[] Answer: \answerYes{}
    \item[] Justification: The abstract and introduction clearly state the main contributions: simplified CWL variants (sCWL, fCWL), the maximal clique complex, and the CliqueWalk sampling method, along with claims about scalability and expressivity. These claims are supported by both theoretical results (Section 4), experiments (Section 5), and Expressivity (Appendix D)

    \item[] Guidelines:
    \begin{itemize}
        \item The answer \answerNA{} means that the abstract and introduction do not include the claims made in the paper.
        \item The abstract and/or introduction should clearly state the claims made, including the contributions made in the paper and important assumptions and limitations. A \answerNo{} or \answerNA{} answer to this question will not be perceived well by the reviewers. 
        \item The claims made should match theoretical and experimental results, and reflect how much the results can be expected to generalize to other settings. 
        \item It is fine to include aspirational goals as motivation as long as it is clear that these goals are not attained by the paper. 
    \end{itemize}

\item {\bf Limitations}
    \item[] Question: Does the paper discuss the limitations of the work performed by the authors?
    \item[] Answer: \answerYes{}
    \item[] Justification: The paper includes a dedicated “Limitations” section (Section 5.5), discussing task scope (limited to node/graph classification), lack of long-range dependency modeling, and focus on clique-based sampling strategies.
    \item[] Guidelines:
    \begin{itemize}
        \item The answer \answerNA{} means that the paper has no limitation while the answer \answerNo{} means that the paper has limitations, but those are not discussed in the paper. 
        \item The authors are encouraged to create a separate ``Limitations'' section in their paper.
        \item The paper should point out any strong assumptions and how robust the results are to violations of these assumptions (e.g., independence assumptions, noiseless settings, model well-specification, asymptotic approximations only holding locally). The authors should reflect on how these assumptions might be violated in practice and what the implications would be.
        \item The authors should reflect on the scope of the claims made, e.g., if the approach was only tested on a few datasets or with a few runs. In general, empirical results often depend on implicit assumptions, which should be articulated.
        \item The authors should reflect on the factors that influence the performance of the approach. For example, a facial recognition algorithm may perform poorly when image resolution is low or images are taken in low lighting. Or a speech-to-text system might not be used reliably to provide closed captions for online lectures because it fails to handle technical jargon.
        \item The authors should discuss the computational efficiency of the proposed algorithms and how they scale with dataset size.
        \item If applicable, the authors should discuss possible limitations of their approach to address problems of privacy and fairness.
        \item While the authors might fear that complete honesty about limitations might be used by reviewers as grounds for rejection, a worse outcome might be that reviewers discover limitations that aren't acknowledged in the paper. The authors should use their best judgment and recognize that individual actions in favor of transparency play an important role in developing norms that preserve the integrity of the community. Reviewers will be specifically instructed to not penalize honesty concerning limitations.
    \end{itemize}

\item {\bf Theory assumptions and proofs}
    \item[] Question: For each theoretical result, does the paper provide the full set of assumptions and a complete (and correct) proof?
    \item[] Answer: \answerYes{}
    \item[] Justification: All theoretical results are formally stated with assumptions (Section 4), and complete proofs are provided in Appendix B. Theorems, propositions, and definitions are clearly numbered and referenced.
    \item[] Guidelines:
    \begin{itemize}
        \item The answer \answerNA{} means that the paper does not include theoretical results. 
        \item All the theorems, formulas, and proofs in the paper should be numbered and cross-referenced.
        \item All assumptions should be clearly stated or referenced in the statement of any theorems.
        \item The proofs can either appear in the main paper or the supplemental material, but if they appear in the supplemental material, the authors are encouraged to provide a short proof sketch to provide intuition. 
        \item Inversely, any informal proof provided in the core of the paper should be complemented by formal proofs provided in appendix or supplemental material.
        \item Theorems and Lemmas that the proof relies upon should be properly referenced. 
    \end{itemize}

    \item {\bf Experimental result reproducibility}
    \item[] Question: Does the paper fully disclose all the information needed to reproduce the main experimental results of the paper to the extent that it affects the main claims and/or conclusions of the paper (regardless of whether the code and data are provided or not)?
    \item[] Answer: \answerYes{}
    \item[] Justification: The paper provides comprehensive details required to reproduce the experiments, including full model architectures, training procedures, hyperparameter search spaces, dataset preprocessing, and evaluation protocols (Section 5 and Appendix I). All datasets are publicly available, and we describe all components affecting the reported results.
    \item[] Guidelines:
    \begin{itemize}
        \item The answer \answerNA{} means that the paper does not include experiments.
        \item If the paper includes experiments, a \answerNo{} answer to this question will not be perceived well by the reviewers: Making the paper reproducible is important, regardless of whether the code and data are provided or not.
        \item If the contribution is a dataset and\slash or model, the authors should describe the steps taken to make their results reproducible or verifiable. 
        \item Depending on the contribution, reproducibility can be accomplished in various ways. For example, if the contribution is a novel architecture, describing the architecture fully might suffice, or if the contribution is a specific model and empirical evaluation, it may be necessary to either make it possible for others to replicate the model with the same dataset, or provide access to the model. In general. releasing code and data is often one good way to accomplish this, but reproducibility can also be provided via detailed instructions for how to replicate the results, access to a hosted model (e.g., in the case of a large language model), releasing of a model checkpoint, or other means that are appropriate to the research performed.
        \item While NeurIPS does not require releasing code, the conference does require all submissions to provide some reasonable avenue for reproducibility, which may depend on the nature of the contribution. For example
        \begin{enumerate}
            \item If the contribution is primarily a new algorithm, the paper should make it clear how to reproduce that algorithm.
            \item If the contribution is primarily a new model architecture, the paper should describe the architecture clearly and fully.
            \item If the contribution is a new model (e.g., a large language model), then there should either be a way to access this model for reproducing the results or a way to reproduce the model (e.g., with an open-source dataset or instructions for how to construct the dataset).
            \item We recognize that reproducibility may be tricky in some cases, in which case authors are welcome to describe the particular way they provide for reproducibility. In the case of closed-source models, it may be that access to the model is limited in some way (e.g., to registered users), but it should be possible for other researchers to have some path to reproducing or verifying the results.
        \end{enumerate}
    \end{itemize}

\item {\bf Open access to data and code}
    \item[] Question: Does the paper provide open access to the data and code, with sufficient instructions to faithfully reproduce the main experimental results, as described in supplemental material?
    \item[] Answer: \answerNo{} 
    \item[] Justification:  We do not release code at submission time but we commit to releasing the full implementation upon acceptance. All datasets used are publicly available, and we provide detailed model architectures, training procedures, and hyperparameter settings in Section 5 and Appendix I to enable independent reproduction.
    \item[] Guidelines:
    \begin{itemize}
        \item The answer \answerNA{} means that paper does not include experiments requiring code.
        \item Please see the NeurIPS code and data submission guidelines (\url{https://neurips.cc/public/guides/CodeSubmissionPolicy}) for more details.
        \item While we encourage the release of code and data, we understand that this might not be possible, so \answerNo{} is an acceptable answer. Papers cannot be rejected simply for not including code, unless this is central to the contribution (e.g., for a new open-source benchmark).
        \item The instructions should contain the exact command and environment needed to run to reproduce the results. See the NeurIPS code and data submission guidelines (\url{https://neurips.cc/public/guides/CodeSubmissionPolicy}) for more details.
        \item The authors should provide instructions on data access and preparation, including how to access the raw data, preprocessed data, intermediate data, and generated data, etc.
        \item The authors should provide scripts to reproduce all experimental results for the new proposed method and baselines. If only a subset of experiments are reproducible, they should state which ones are omitted from the script and why.
        \item At submission time, to preserve anonymity, the authors should release anonymized versions (if applicable).
        \item Providing as much information as possible in supplemental material (appended to the paper) is recommended, but including URLs to data and code is permitted.
    \end{itemize}

\item {\bf Experimental setting/details}
    \item[] Question: Does the paper specify all the training and test details (e.g., data splits, hyperparameters, how they were chosen, type of optimizer) necessary to understand the results?
    \item[] Answer: \answerYes{}
    \item[] Justification: The experimental setup specifies data splits, hyperparameter ranges, number of runs, architectures, and training procedures (Section 5.2 and Appendix), sufficient to understand and reproduce the experiments.

    \item[] Guidelines:
    \begin{itemize}
        \item The answer \answerNA{} means that the paper does not include experiments.
        \item The experimental setting should be presented in the core of the paper to a level of detail that is necessary to appreciate the results and make sense of them.
        \item The full details can be provided either with the code, in appendix, or as supplemental material.
    \end{itemize}

\item {\bf Experiment statistical significance}
    \item[] Question: Does the paper report error bars suitably and correctly defined or other appropriate information about the statistical significance of the experiments?
    \item[] Answer: \answerYes{}
    \item[] Justification: Results are reported with mean and standard deviation over multiple runs (Tables 1 and 2), capturing variability due to random initialization and data splits. For node classification, we used different split seed for hyperparameter tuning and final evaluation.
    \item[] Guidelines:
    \begin{itemize}
        \item The answer \answerNA{} means that the paper does not include experiments.
        \item The authors should answer \answerYes{} if the results are accompanied by error bars, confidence intervals, or statistical significance tests, at least for the experiments that support the main claims of the paper.
        \item The factors of variability that the error bars are capturing should be clearly stated (for example, train/test split, initialization, random drawing of some parameter, or overall run with given experimental conditions).
        \item The method for calculating the error bars should be explained (closed form formula, call to a library function, bootstrap, etc.)
        \item The assumptions made should be given (e.g., Normally distributed errors).
        \item It should be clear whether the error bar is the standard deviation or the standard error of the mean.
        \item It is OK to report 1-sigma error bars, but one should state it. The authors should preferably report a 2-sigma error bar than state that they have a 96\% CI, if the hypothesis of Normality of errors is not verified.
        \item For asymmetric distributions, the authors should be careful not to show in tables or figures symmetric error bars that would yield results that are out of range (e.g., negative error rates).
        \item If error bars are reported in tables or plots, the authors should explain in the text how they were calculated and reference the corresponding figures or tables in the text.
    \end{itemize}

\item {\bf Experiments compute resources}
    \item[] Question: For each experiment, does the paper provide sufficient information on the computer resources (type of compute workers, memory, time of execution) needed to reproduce the experiments?
    \item[] Answer: \answerYes{}
    \item[] Justification:  The paper provides details on the compute resources used, including GPU types (RTX 3090, L40S, H100), typical runtime for node and graph classification experiments, and an estimate of the total compute budget (Appendix, Section “Compute Resources”).
    \item[] Guidelines:
    \begin{itemize}
        \item The answer \answerNA{} means that the paper does not include experiments.
        \item The paper should indicate the type of compute workers CPU or GPU, internal cluster, or cloud provider, including relevant memory and storage.
        \item The paper should provide the amount of compute required for each of the individual experimental runs as well as estimate the total compute. 
        \item The paper should disclose whether the full research project required more compute than the experiments reported in the paper (e.g., preliminary or failed experiments that didn't make it into the paper). 
    \end{itemize}
    
\item {\bf Code of ethics}
    \item[] Question: Does the research conducted in the paper conform, in every respect, with the NeurIPS Code of Ethics \url{https://neurips.cc/public/EthicsGuidelines}?
    \item[] Answer:  \answerYes{}
    \item[] Justification:  The research complies with the NeurIPS Code of Ethics. It uses standard benchmark datasets and does not involve sensitive data, human subjects, or ethically problematic applications.
    \item[] Guidelines:
    \begin{itemize}
        \item The answer \answerNA{} means that the authors have not reviewed the NeurIPS Code of Ethics.
        \item If the authors answer \answerNo, they should explain the special circumstances that require a deviation from the Code of Ethics.
        \item The authors should make sure to preserve anonymity (e.g., if there is a special consideration due to laws or regulations in their jurisdiction).
    \end{itemize}

\item {\bf Broader impacts}
    \item[] Question: Does the paper discuss both potential positive societal impacts and negative societal impacts of the work performed?
    \item[] Answer: \answerNo{}
    \item[] Justification: The paper does not include an explicit discussion of broader societal impacts. As a methodological contribution to graph learning, potential impacts are indirect and not explicitly analyzed.
    \item[] Guidelines:
    \begin{itemize}
        \item The answer \answerNA{} means that there is no societal impact of the work performed.
        \item If the authors answer \answerNA{} or \answerNo, they should explain why their work has no societal impact or why the paper does not address societal impact.
        \item Examples of negative societal impacts include potential malicious or unintended uses (e.g., disinformation, generating fake profiles, surveillance), fairness considerations (e.g., deployment of technologies that could make decisions that unfairly impact specific groups), privacy considerations, and security considerations.
        \item The conference expects that many papers will be foundational research and not tied to particular applications, let alone deployments. However, if there is a direct path to any negative applications, the authors should point it out. For example, it is legitimate to point out that an improvement in the quality of generative models could be used to generate Deepfakes for disinformation. On the other hand, it is not needed to point out that a generic algorithm for optimizing neural networks could enable people to train models that generate Deepfakes faster.
        \item The authors should consider possible harms that could arise when the technology is being used as intended and functioning correctly, harms that could arise when the technology is being used as intended but gives incorrect results, and harms following from (intentional or unintentional) misuse of the technology.
        \item If there are negative societal impacts, the authors could also discuss possible mitigation strategies (e.g., gated release of models, providing defenses in addition to attacks, mechanisms for monitoring misuse, mechanisms to monitor how a system learns from feedback over time, improving the efficiency and accessibility of ML).
    \end{itemize}
    
\item {\bf Safeguards}
    \item[] Question: Does the paper describe safeguards that have been put in place for responsible release of data or models that have a high risk for misuse (e.g., pre-trained language models, image generators, or scraped datasets)?
    \item[] Answer: \answerNA{}
    \item[] Justification:  The work does not involve releasing sensitive datasets or high-risk generative models; it focuses on graph learning methods evaluated on standard benchmarks.
    \item[] Guidelines:
    \begin{itemize}
        \item The answer \answerNA{} means that the paper poses no such risks.
        \item Released models that have a high risk for misuse or dual-use should be released with necessary safeguards to allow for controlled use of the model, for example by requiring that users adhere to usage guidelines or restrictions to access the model or implementing safety filters. 
        \item Datasets that have been scraped from the Internet could pose safety risks. The authors should describe how they avoided releasing unsafe images.
        \item We recognize that providing effective safeguards is challenging, and many papers do not require this, but we encourage authors to take this into account and make a best faith effort.
    \end{itemize}

\item {\bf Licenses for existing assets}
    \item[] Question: Are the creators or original owners of assets (e.g., code, data, models), used in the paper, properly credited and are the license and terms of use explicitly mentioned and properly respected?
    \item[] Answer: \answerYes{}
    \item[] Justification: All datasets and baselines are properly cited (Section 5.1 and references). These are standard public benchmarks with well-established usage terms.
    \item[] Guidelines:
    \begin{itemize}
        \item The answer \answerNA{} means that the paper does not use existing assets.
        \item The authors should cite the original paper that produced the code package or dataset.
        \item The authors should state which version of the asset is used and, if possible, include a URL.
        \item The name of the license (e.g., CC-BY 4.0) should be included for each asset.
        \item For scraped data from a particular source (e.g., website), the copyright and terms of service of that source should be provided.
        \item If assets are released, the license, copyright information, and terms of use in the package should be provided. For popular datasets, \url{paperswithcode.com/datasets} has curated licenses for some datasets. Their licensing guide can help determine the license of a dataset.
        \item For existing datasets that are re-packaged, both the original license and the license of the derived asset (if it has changed) should be provided.
        \item If this information is not available online, the authors are encouraged to reach out to the asset's creators.
    \end{itemize}

\item {\bf New assets}
    \item[] Question: Are new assets introduced in the paper well documented and is the documentation provided alongside the assets?
    \item[] Answer: \answerYes{}
    \item[] Justification: We introduce a synthetic dataset (SCM) and provide documentation and generation details in the paper and supplemental material, enabling reproduction.
    \item[] Guidelines:
    \begin{itemize}
        \item The answer \answerNA{} means that the paper does not release new assets.
        \item Researchers should communicate the details of the dataset\slash code\slash model as part of their submissions via structured templates. This includes details about training, license, limitations, etc. 
        \item The paper should discuss whether and how consent was obtained from people whose asset is used.
        \item At submission time, remember to anonymize your assets (if applicable). You can either create an anonymized URL or include an anonymized zip file.
    \end{itemize}

\item {\bf Crowdsourcing and research with human subjects}
    \item[] Question: For crowdsourcing experiments and research with human subjects, does the paper include the full text of instructions given to participants and screenshots, if applicable, as well as details about compensation (if any)? 
    \item[] Answer: \answerNA{}
    \item[] Justification: The paper does not involve crowdsourcing or human subject experiments.
    \item[] Guidelines:
    \begin{itemize}
        \item The answer \answerNA{} means that the paper does not involve crowdsourcing nor research with human subjects.
        \item Including this information in the supplemental material is fine, but if the main contribution of the paper involves human subjects, then as much detail as possible should be included in the main paper. 
        \item According to the NeurIPS Code of Ethics, workers involved in data collection, curation, or other labor should be paid at least the minimum wage in the country of the data collector. 
    \end{itemize}

\item {\bf Institutional review board (IRB) approvals or equivalent for research with human subjects}
    \item[] Question: Does the paper describe potential risks incurred by study participants, whether such risks were disclosed to the subjects, and whether Institutional Review Board (IRB) approvals (or an equivalent approval/review based on the requirements of your country or institution) were obtained?
    \item[] Answer: \answerNA{}
    \item[] Justification:  No human subjects are involved in this research.
    \item[] Guidelines:
    \begin{itemize}
        \item The answer \answerNA{} means that the paper does not involve crowdsourcing nor research with human subjects.
        \item Depending on the country in which research is conducted, IRB approval (or equivalent) may be required for any human subjects research. If you obtained IRB approval, you should clearly state this in the paper. 
        \item We recognize that the procedures for this may vary significantly between institutions and locations, and we expect authors to adhere to the NeurIPS Code of Ethics and the guidelines for their institution. 
        \item For initial submissions, do not include any information that would break anonymity (if applicable), such as the institution conducting the review.
    \end{itemize}

\item {\bf Declaration of LLM usage}
    \item[] Question: Does the paper describe the usage of LLMs if it is an important, original, or non-standard component of the core methods in this research? Note that if the LLM is used only for writing, editing, or formatting purposes and does \emph{not} impact the core methodology, scientific rigor, or originality of the research, declaration is not required.
    \item[] Answer:\answerNA{}
    \item[] Justification: LLMs are not used as part of the methodology or experiments; any use for writing assistance does not affect the scientific contributions.
    \item[] Guidelines:
    \begin{itemize}
        \item The answer \answerNA{} means that the core method development in this research does not involve LLMs as any important, original, or non-standard components.
        \item Please refer to our LLM policy in the NeurIPS handbook for what should or should not be described.
    \end{itemize}

\end{enumerate}

\fi
\end{document}

\section{TO incorporate}

\newcommand{\scwn}{\textsc{sCWN}}
\newcommand{\scwnc}{\textsc{sCWN\textsubscript{c}}}
\newcommand{\scwnr}{\textsc{sCWN\textsubscript{r}}}

\section{Supplementary Experimental Results}
\label{sec:appendix-results}

\subsection{Scalable Cell-Complex Message Passing (\texorpdfstring{\scwn}{sCWN})}
\label{sec:scwn-overview}

We introduce sCWN (\emph{scalable Cellular Weisfeiler–Leman Network}), a
higher-order graph neural network that replaces the standard upper-adjacency
operator with the co-boundary operator.
This substitution transforms the message-passing complexity from quadratic to
linear in the number of cells: whereas upper adjacency enforces direct
all-to-all communication within sub-structures (e.g., all edges in a cycle, or
all nodes in a maximal clique), the co-boundary achieves equivalent expressive
power without the quadratic overhead of such dense intra-structure connectivity
(cf.\ Proposition~2 of the main paper).

We study two instantiations of sCWN:
\begin{itemize}
  \item \scwnr{} — a simplified CWN that reproduces the CIN topology
    (node–edge–cycle lifting) and architecture, used to provide a direct
    apples-to-apples comparison with CIN.
  \item \scwnc{} — sCWN applied on the maximal clique complex sampled via
    CliqueWalk, enabling linear memory complexity in the number of nodes and
    scaling to graphs with millions of nodes (e.g.\ OGBN-Products, 2M nodes).
\end{itemize}

\subsection{Graph Classification Accuracy}
\label{sec:accuracy}

Table~\ref{tab:accuracy-cin} reports accuracy on datasets where the topological
lifting makes the most difference.
Using the exact lifting (node–edge–cycle) and readout protocol of
CIN~\citep{bodnar2021cin}, \scwnr{} matches CIN on all three datasets, with
differences well within one standard deviation.
This confirms that the co-boundary reformulation does not sacrifice predictive
accuracy compared with its upper-adjacency counterpart.

\begin{table}[h]
  \centering
  \caption{%
    Test accuracy (\%) of \scwnr{} and CIN using identical topological lifting,
    architecture (3 layers, 32 hidden dimensions), and evaluation protocol.
    Results are mean\,$\pm$\,std over 10 folds.%
  }
  \label{tab:accuracy-cin}
  \begin{tabular}{lcc}
    \toprule
    \textbf{Dataset} & \textbf{\scwnr{}} & \textbf{CIN} \\
    \midrule
    MUTAG  & $91.7 \pm 6.2$ & $92.7 \pm 6.1$ \\
    NCI1   & $83.8 \pm 1.6$ & $83.6 \pm 1.4$ \\
    NCI109 & $84.1 \pm 1.2$ & $84.0 \pm 1.6$ \\
    \bottomrule
  \end{tabular}
\end{table}

Table~\ref{tab:full-benchmark} extends the evaluation to the full graph
classification benchmark including \scwnc{}, 3-GNN, and G2N2.
\scwnr{} achieves parity with CIN across all datasets, with a slight drop of
at most $1.0\%$ on MUTAG and PROTEINS — well within standard deviation.
\scwnc{} performs competitively on datasets with rich clique structure (e.g.\
IMDB-B, IMDB-M, PROTEINS) but is less effective on datasets where maximal
cliques reduce to single edges (NCI1, NCI109, MUTAG), as the CliqueWalk
sampling then provides information equivalent to standard 1-WL.
G2N2 achieves the highest accuracy overall, while \scwnr{} and CIN are
competitive with significantly lower resource consumption
(cf.\ Section~\ref{sec:efficiency}).

\begin{table}[h]
  \centering
  \caption{%
    Full graph classification benchmark.
    \scwnr{}/\scwnc{}: our experiments (3 layers, 32 hidden dimensions, 10-fold CV).
    123-GNN and G2N2 results reproduced from their original papers under identical
    training and evaluation protocols. ``--'' indicates not reported.%
  }
  \label{tab:full-benchmark}
  \begin{tabular}{lccccc}
    \toprule
    \textbf{Dataset}
      & \textbf{\scwnc{}}
      & \textbf{\scwnr{}}
      & \textbf{CIN}
      & \textbf{3-GNN}
      & \textbf{G2N2} \\
    \midrule
    IMDB-B   & $75.0 \pm 4.5$ & $74.6 \pm 3.5$ & $75.6 \pm 3.7$ & $73.0 \pm 5.8$ & $76.8 \pm 2.8$ \\
    IMDB-M   & $52.3 \pm 4.2$ & $51.3 \pm 3.6$ & $52.7 \pm 3.1$ & $50.5 \pm 3.6$ & $54.0 \pm 2.9$ \\
    MUTAG    & $85.7 \pm 8.2$ & $91.7 \pm 6.2$ & $92.7 \pm 6.1$ & $90.6 \pm 8.7$ & $92.5 \pm 5.5$ \\
    NCI1     & $66.3 \pm 8.9$ & $83.8 \pm 1.6$ & $83.6 \pm 1.4$ & $83.2 \pm 1.1$ & $82.8 \pm 0.9$ \\
    NCI109   & $64.1 \pm 2.8$ & $84.1 \pm 1.2$ & $84.0 \pm 1.6$ & $82.2 \pm 1.4$ & -- \\
    PROTEINS & $77.5 \pm 3.5$ & $75.9 \pm 4.9$ & $77.0 \pm 4.3$ & $77.2 \pm 4.7$ & $80.1 \pm 3.7$ \\
    \bottomrule
  \end{tabular}
\end{table}

\subsection{Computational Efficiency}
\label{sec:efficiency}

We evaluate the computational efficiency of sCWN on an NVIDIA L40S GPU,
measuring peak memory usage (MB) and wall-clock time per training epoch (s)
averaged over 50 epochs.
All models use 3 layers and 32 hidden dimensions.
We use the official implementations of each baseline: CinSparse from the CWN
repository, G2N2 from the \emph{Weisfeiler and Leman Go Grammatical} repository,
GIN from PyG, and 3-GNN from the \emph{Provably Powerful Graph Networks} repository.

\begin{table}[h]
  \centering
  \caption{%
    Peak memory (MB) and time per epoch (s) on an L40S GPU.
    Each cell reports \emph{memory / time}.
    Lower is better.%
  }
  \label{tab:efficiency}
  \setlength{\tabcolsep}{5pt}
  \begin{tabular}{lcccccc}
    \toprule
    \textbf{Dataset}
      & \textbf{GIN}
      & \textbf{\scwnc{}}
      & \textbf{\scwnr{}}
      & \textbf{CIN}
      & \textbf{G2N2}
      & \textbf{3-GNN} \\
    \midrule
    IMDB-B   & 20.3 / 0.17 & 19.1 / 0.44 & 138.7 / 0.48 & 454.3 / 0.99  & 877.3 / 0.53   & 5804.3 / 4.39 \\
    IMDB-M   & 19.3 / 0.23 & 18.5 / 0.74 & 178.5 / 0.80 & 517.1 / 1.36  & 398.3 / 0.67   & 4520.8 / 3.40 \\
    MUTAG    & 18.8 / 0.03 & 19.4 / 0.10 & 22.0  / 0.07 & 31.1  / 0.11  & 183.2 / 0.09   & 27.9   / 0.07 \\
    NCI1     & 22.0 / 0.72 & 21.6 / 2.15 & 26.2  / 1.42 & 41.8  / 3.16  & 869.0 / 2.82   & 142.2  / 2.15 \\
    NCI109   & 21.7 / 0.69 & 21.4 / 2.19 & 25.6  / 1.43 & 41.3  / 3.01  & 814.2 / 2.85   & 145.0  / 1.95 \\
    PROTEINS & 55.5 / 0.30 & 25.0 / 0.63 & 38.3  / 0.44 & 137.3 / 0.94  & 16122.7 / 2.30 & 771.9  / 0.81 \\
    \bottomrule
  \end{tabular}
\end{table}

The results reveal a clear efficiency hierarchy.
\scwnc{} remains close to GIN in both memory and inference time, confirming that
incorporating higher-order clique information via CliqueWalk imposes negligible
overhead relative to standard message-passing networks.
\scwnr{} is consistently more efficient than CIN: it requires $1.4$--$1.6\times$
less memory on molecular datasets (MUTAG: $31.1$ vs.\ $22.0$ MB; NCI1: $41.8$
vs.\ $26.2$ MB) and achieves up to $2\times$ faster training epochs (NCI1:
$3.16$ vs.\ $1.42$ s/epoch).
G2N2 and 3-GNN are substantially more expensive — G2N2 consumes up to
$16$\,GB on PROTEINS and 3-GNN exceeds $5.8$\,GB on IMDB-B — illustrating the
scalability bottleneck of methods that operate on dense edge-space representations
of size $\mathcal{O}(|V|^2)$.

\subsection{Expressivity: BREC Benchmark}
\label{sec:brec}

To provide a standardized comparison of expressivity, we evaluate CWN on the
maximal clique complex against the BREC benchmark~\citep{wang2023brec}, which
tests the ability of GNNs to distinguish 400 structurally diverse graph pairs
across four categories: Basic, Regular, Extension, and CFI.

\begin{table}[h]
  \centering
  \caption{%
    BREC benchmark results: number of correctly distinguished graph pairs
    (out of the total per category).
    CWN is evaluated on the maximal clique complex.%
  }
  \label{tab:brec}
  \begin{tabular}{llccccc}
    \toprule
    \textbf{Category} & \textbf{Model}
      & \textbf{Basic (60)}
      & \textbf{Regular (140)}
      & \textbf{Ext.\ (100)}
      & \textbf{CFI (100)}
      & \textbf{Total (400)} \\
    \midrule
    \multirow{1}{*}{Ours}
      & CWN (max.\ clique) & 55 & 119 & 28 & 0 & 202 \\
    \midrule
    \multirow{2}{*}{$k$-WL GNNs}
      & PPGN        & 60 & 50  & 100 & 23 & 233 \\
      & $\Delta$-LGNN & 60 & 50  & 100 & 6  & 216 \\
    \midrule
    \multirow{2}{*}{Subgraph GNNs}
      & NGNN        & 59 & 48  & 59  & 0  & 166 \\
      & DS-GNN      & 58 & 48  & 100 & 16 & 222 \\
    \midrule
    Substructure  & GSN         & 60 & 99  & 95  & 0  & 254 \\
    Transformer   & Graphormer  & 16 & 12  & 41  & 10 & 79  \\
    Non-GNN       & N2          & 60 & 138 & 100 & 0  & 298 \\
    \bottomrule
  \end{tabular}
\end{table}

CWN on the maximal clique complex achieves particularly strong discrimination on
Regular graphs, correctly distinguishing 119 out of 140 pairs overall, and
improving to 119 out of 120 pairs when restricted to regular graphs of degree
greater than 2 (i.e., those containing cliques larger than single edges).
This places CWN as the second-best method on this category among those evaluated.
Performance is weaker on CFI graphs and Extension graphs.
The CFI result is expected: CFI graphs contain no cliques larger than edges,
effectively reducing CWN to a 1-WL baseline.
Extension graphs have an average maximal clique size of 2.9, suggesting that
the exclusive reliance on maximal cliques is insufficient to separate these
specific structures, pointing to a direction for future work.

\subsection{Permutation Equivariance of \texorpdfstring{sCWN}{sCWN}}
\label{sec:equivariance}

For a fixed cell complex, sCWN is permutation equivariant by construction.
Because CliqueWalk is a stochastic sampling procedure, however, two isomorphic
graphs may yield non-isomorphic lifted complexes for a single sample.
We now show that the \emph{expected} output of sCWN is permutation equivariant.

\begin{prop}[Expected equivariance of sCWN]
\label{prop:equivariance}
Let $\Phi(G, \hat{\mathcal{X}})$ denote the sCWN mapping for a graph $G$
and a sampled clique complex $\hat{\mathcal{X}}$.
The expected output $\mathbb{E}[\Phi(G, \cdot)]$ is permutation equivariant.
\end{prop}

\begin{proof}
Let $\pi$ be any permutation of the vertex set $V$, and let $\pi(G)$ and
$\pi(\hat{\mathcal{X}})$ denote the permuted graph and complex, respectively.
Since CliqueWalk depends only on the adjacency structure and not on node
indexing, the sampling distribution is equivariant:
\begin{equation}
  P\!\left(\hat{\mathcal{X}} \mid G\right)
  = P\!\left(\pi(\hat{\mathcal{X}}) \mid \pi(G)\right).
\end{equation}
The expected output on the permuted graph is
\begin{equation}
  \mathbb{E}[\Phi(\pi(G), \cdot)]
  = \sum_{\hat{\mathcal{X}}'} \Phi\!\left(\pi(G), \hat{\mathcal{X}}'\right)
    P\!\left(\hat{\mathcal{X}}' \mid \pi(G)\right).
\end{equation}
Applying the change of variables $\hat{\mathcal{X}}' = \pi(\hat{\mathcal{X}})$
and the equivariance of $\Phi$ for fixed complexes
($\Phi(\pi(G), \pi(\hat{\mathcal{X}})) = \pi(\Phi(G, \hat{\mathcal{X}}))$):
\begin{align}
  \mathbb{E}[\Phi(\pi(G), \cdot)]
  &= \sum_{\hat{\mathcal{X}}}
      \pi\!\left(\Phi(G, \hat{\mathcal{X}})\right)
      P\!\left(\pi(\hat{\mathcal{X}}) \mid \pi(G)\right) \notag \\
  &= \sum_{\hat{\mathcal{X}}}
      \pi\!\left(\Phi(G, \hat{\mathcal{X}})\right)
      P\!\left(\hat{\mathcal{X}} \mid G\right) \notag \\
  &= \pi\!\left(\sum_{\hat{\mathcal{X}}}
      \Phi(G, \hat{\mathcal{X}})
      P\!\left(\hat{\mathcal{X}} \mid G\right)\right)
   = \pi\!\left(\mathbb{E}[\Phi(G, \cdot)]\right).
\end{align}
Thus, the expectation of the sCWN mapping commutes with the permutation map.
\end{proof}

\section{Expressivity of CWL vs.\ 1-WL}
\label{app:cwl_vs_wl}

\subsection{Framework and Equivalent Formulation}

Let $G = (\mathcal{V}, \mathcal{E})$ be a graph and $\mathcal{X}$ the set of its maximal cliques. From Theorem~2, Cellular Weisfeiler--Lehman (CWL) is equivalent to its simplified variant (sCWL) on the same complex. This yields an equivalent formulation as 1-WL applied to the bipartite incidence graph
\[
B = (\mathcal{V} \cup \mathcal{X}, \mathcal{E}_B),
\]
where $(v, C) \in \mathcal{E}_B$ if and only if $v \in C$.

Under this formulation, node colors evolve as
\[
c_v^{t+1} = \mathrm{HASH}\Big(c_v^t,\; \{\!\{ c_{\sigma}^t \mid \sigma \in \mathcal{X}, v \in \sigma \}\!\}\Big),
\]
and clique colors are defined by
\[
c_C^t = \mathrm{HASH}\big(\{\!\{ c_u^{t-1} \mid u \in C \}\!\}\big).
\]

\subsection{A Sufficient Condition for CWL $\supseteq$ 1-WL}

A key question is whether CWL is at least as expressive as 1-WL. This depends on whether neighborhood information can be recovered from clique-based aggregation.

\begin{defi}[C2N Identification Property]
A vertex $v \in \mathcal{V}$ satisfies the \textbf{C2N identification property} if the multiset of neighbor colors
\[
\{\!\{ c_u \mid u \in \mathcal{N}(v) \}\!\}
\]
is uniquely determined by the multiset of colors of maximal cliques containing $v$,
\[
\{\!\{ c_C \mid v \in C \}\!\}.
\]
\end{defi}

\begin{prop}
If every vertex in two graphs $G$ and $G'$ satisfies the C2N identification property, then CWL is at least as expressive as 1-WL for distinguishing $G$ and $G'$.
\end{prop}

\begin{proof}[Proof sketch]
The proof proceeds by induction over refinement steps. Assuming CWL colors are at least as refined as 1-WL at step $t$, the C2N property ensures that clique-based aggregation at step $t+2$ uniquely determines the 1-WL neighborhood histogram at step $t$, thereby recovering the 1-WL update at step $t+1$.
\end{proof}

\subsection{A Structural Regime Where C2N Holds}

\begin{prop}
If every edge $(u,v) \in \mathcal{E}$ belongs to exactly one maximal clique, then the C2N identification property holds for all vertices.
\end{prop}

\begin{proof}
In this setting, the neighborhood of a vertex $v$ decomposes as a disjoint union of the maximal cliques containing $v$ (excluding $v$ itself). Let $\mathcal{X}_v = \{\sigma \in \mathcal{X} \mid v \in \sigma\}$.

Suppose two node colorings $c$ and $c'$ are consistent with the same maximal clique coloring:
\[
\{\!\{ c_\sigma \mid \sigma \in \mathcal{X}_v \}\!\}
=
\{\!\{ c'_\sigma \mid \sigma \in \mathcal{X}_v \}\!\}.
\]
Substituting the definition of clique colors,
\[
\{\!\{ \{\!\{ c_w \mid w \in \sigma \}\!\} \mid \sigma \in \mathcal{X}_v \}\!\}
=
\{\!\{ \{\!\{ c'_w \mid w \in \sigma \}\!\} \mid \sigma \in \mathcal{X}_v \}\!\}.
\]
Taking the union over all such cliques (excluding $v$) yields
\[
\bigcup_{\sigma \in \mathcal{X}_v} \{\!\{ c_w \mid w \in \sigma \setminus \{v\} \}\!\}
=
\bigcup_{\sigma \in \mathcal{X}_v} \{\!\{ c'_w \mid w \in \sigma \setminus \{v\} \}\!\},
\]
which implies
\[
\{\!\{ c_w \mid w \in \mathcal{N}(v) \}\!\}
=
\{\!\{ c'_w \mid w \in \mathcal{N}(v) \}\!\}.
\]
Thus, the neighborhood color histogram is uniquely determined.
\end{proof}

\subsection{Limitations: Clique Partition Ambiguity}

The C2N property does not hold in general. At $n=9$, we identify a structural failure where overlapping cliques obscure neighborhood information.

In particular, there exist vertices that:
\begin{itemize}
    \item are distinguished by 1-WL through different neighbor color histograms,
    \item but receive identical CWL colors due to identical multisets of incident clique colors.
\end{itemize}

We refer to this phenomenon as \textbf{clique partition ambiguity}. It shows that recovering neighborhood information from clique structure is not always locally possible, preventing a direct inductive proof that CWL subsumes 1-WL.

\subsection{Empirical Evaluation for $n \leq 9$}

We performed an exhaustive evaluation over all connected, non-isomorphic graphs up to $n=9$ (261,080 graphs in total).

\begin{table}[h]
\centering
\begin{tabular}{crcc}
\toprule
$n$ & Total Graphs & 1-WL Identifies & CWL Identifies \\
\midrule
1--5 & 31 & 31 & 31 \\
6 & 112 & 109 & 112 \\
7 & 853 & 836 & 853 \\
8 & 11,117 & 10,897 & 11,106 \\
9 & 261,080 & 258,632 & 260,960 \\
\bottomrule
\end{tabular}
\caption{Isomorphism resolution results for 1-WL and CWL.}
\end{table}

CWL consistently matches or exceeds the distinguishing power of 1-WL and strictly improves upon it for $n \geq 6$. Importantly, we observe no counterexample where 1-WL distinguishes two graphs that CWL fails to separate.

\subsection{Discussion}

These results provide strong empirical evidence for the conjecture
\[
\mathrm{CWL} \;\supseteq\; \mathrm{1\text{-}WL}.
\]

From a theoretical perspective, the conjecture remains non-trivial. CWL operates over maximal cliques rather than direct neighborhoods, and clique partition ambiguity shows that neighborhood information is not always locally recoverable. 

Nevertheless, such local ambiguities do not appear to translate into global failures for graph isomorphism in our experiments. Establishing or refuting the conjecture in full generality remains an interesting open problem.